\documentclass{article}

\usepackage[final]{neurips_2025}

\usepackage[utf8]{inputenc} %
\usepackage[T1]{fontenc}    %
\usepackage{hyperref}       %
\usepackage{url}            %
\usepackage{booktabs}       %
\usepackage{amsfonts}       %
\usepackage{nicefrac}       %
\usepackage{microtype}      %
\usepackage{xcolor}         %
\usepackage{algorithm}
\usepackage{algorithmic}

\usepackage{microtype}
\usepackage{graphicx}
\usepackage{booktabs} 
\usepackage{hyperref}

\usepackage{mathtools}
\usepackage{amsmath,amsfonts,bm,amssymb,nicefrac}
\usepackage{amsthm}
\usepackage{booktabs}          %
\usepackage{graphicx}          %
\usepackage{subcaption}        %
\usepackage{multirow}        %
\usepackage{stmaryrd}        %
\usepackage{yhmath}        %

\theoremstyle{plain}
\newtheorem{theorem}{Theorem}[section]
\newtheorem{proposition}[theorem]{Proposition}
\newtheorem{lemma}[theorem]{Lemma}

\theoremstyle{definition}
\newtheorem{definition}[theorem]{Definition}
\newtheorem{assumption}[theorem]{Assumption}
\theoremstyle{remark}
\newtheorem{remark}[theorem]{Remark}

\usepackage[textsize=tiny,disable]{todonotes}
\definecolor{blush}{rgb}{0.87, 0.36, 0.51}

\def\eqref#1{Eq.~\ref{#1}}   %
\newcommand\numberthis{\addtocounter{equation}{1}\tag{\theequation}}       %

\def\nit{n_{\text{iter}}}
\def\Bgrad{B_{\text{grad}}} 
\def\Bkl{B_{\KL}} 
\def\Ntar{N_{\text{tg}}}
\def\rtar{r_{\text{tg}}}
\def\starg{s_{\text{tg}}}

\newcommand{\IG}{\mathrm{IG}}
\newcommand{\IBW}{\mathrm{IBW}}
\newcommand{\MW}{\mathrm{MW}}
\newcommand{\iBW}{\mathrm{iBW}}
\newcommand{\proj}{\mathrm{proj}}

\newcommand{\E}{\mathbb{E}}

\newcommand{\R}{\mathbb{R}}

\DeclareMathOperator*{\argmin}{\arg\!\min}

\newcommand{\ke}{k_{\epsilon}}
\newcommand{\kej}{k_{\epsilon^j}}

\newcommand{\ssp}[1]{\langle #1 \rangle}

\newcommand{\cN}{\mathcal N}
\newcommand{\cX}{\mathcal{X}}

\newcommand{\C}{\mathcal C}

\newcommand{\cF}{\mathcal F}

\newcommand{\cP}{\mathcal{P}}
\newcommand{\KL}{\mathop{\mathrm{KL}}\nolimits}
\newcommand{\W}{\mathop{\mathrm{W}}\nolimits}
\newcommand{\KLB}{\overline{\mathop{\mathrm{KL}}\nolimits}}

\newcommand{\BW}{\mathop{\mathrm{BW}}\nolimits}
\newcommand{\B}{\mathop{\mathrm{B}}\nolimits}

\usepackage{physics}
\usepackage{hyperref}
\usepackage[capitalize,noabbrev]{cleveref}

\newcommand{\Id}{\mathrm{Id}}
\newcommand{\Idd}{\mathrm{I}_d}
\newcommand{\m}{\beta}

\newcommand{\ps}[1]{\langle #1 \rangle}

\newcommand{\tg}{\pi}

\newcounter{inlineequation}
\newcommand{\inlineeqtagged}[3]{%
  \begingroup
  \renewcommand\theinlineequation{#1}%
  \refstepcounter{inlineequation}%
  \label{#2}%
  \endgroup
  \( \displaystyle #3 \)~(\text{#1})%
}

\usepackage{wrapfig}
\usepackage{multicol}

\title{Variational Inference with  Mixtures of Isotropic Gaussians}

\author{%
Marguerite Petit-Talamon \\
  CREST, ENSAE\\
  Institut Polytechnique de Paris \\
\texttt{marguerite.petittalamon@ensae.fr} \\
  \And 
 Marc Lambert \\
  INRIA, Ecole Normale Supérieure \\
  DGA, French Procurement Agency \\
 \texttt{marc.lambert@inria.fr}
  \And
  Anna Korba \\
  CREST, ENSAE \\
  Institut Polytechnique de Paris \\
  \texttt{anna.korba@ensae.fr} \\
}

\begin{document}

\maketitle

\begin{abstract}
  Variational inference (VI) is a popular approach
in Bayesian inference, that looks for the best approximation of the posterior distribution within a
parametric family, minimizing a loss that is typically the (reverse) Kullback-Leibler (KL) diver-
gence. In this paper, we focus on the following parametric family: mixtures of isotropic Gaussians (i.e., with diagonal covariance matrices proportional to the identity) and uniform weights. 
We develop a variational framework and provide efficient algorithms suited for this family. In contrast with mixtures of Gaussian with generic covariance matrices, this choice presents a balance between accurate approximations of multimodal Bayesian posteriors, while being memory and computationally efficient. Our algorithms implement gradient descent on the location of the mixture components (the modes of the Gaussians), and either (an entropic) Mirror or Bures descent on their variance parameters. 
We illustrate the performance of our algorithms on numerical experiments.
\end{abstract}

\section{Introduction}

The core problem of Bayesian inference is to sample from a posterior distribution $\pi$ over model parameters, combining prior knowledge with observed data. Unfortunately, the posterior distribution is generally difficult to compute due to the presence of an intractable integral (the normalization constant), and its density with respect to the Lebesgue measure on $\R^d$, also denoted $\pi$, is only known in unnormalized form as $\pi \propto e^{-V}$. 
Variational inference (VI, ~\citep{blei2017variational}) is a prominent alternative to standard Markov chain Monte Carlo (MCMC, ~\citep{Roberts04g}) that approximates the posterior by optimizing over a family of tractable distributions. While this restriction can introduce bias, VI is typically much faster than MCMC, since it reframes the sampling problem as a (generally finite-dimensional) 
 optimization one over the parameters of the variational family. Specifically, VI seeks a distribution in a parametric family $\C$ that minimizes a discrepancy to the target posterior, typically the reverse Kullback-Leibler (KL) divergence: \begin{equation}\label{eq:vi} \min_{\mu \in \C} \KL(\mu | \tg), \end{equation}
where $\KL(\mu|\tg)=\int \log(\nicefrac{d\mu}{d\tg})d\mu$ if $\mu$ is absolutely continuous with respect to $\tg$ denoting $\nicefrac{d\mu}{d\tg}$ its Radon-Nikodym density, and $+\infty$ else.%

There exist various choices of variational families and suited algorithms in the literature of VI. For instance, Mean-field variational inference (MFVI) aims to find an approximate posterior that factors as a product of distributions \citep{lacker2023independent}. Recent studies by \cite{arnese2024convergence} and \cite{lavenant2024convergence} have investigated solving this problem through the lens of coordinate ascent variational inference (CAVI, \cite{Bishop06}), or over polyhedral sets with first-order algorithms \citep{jiang2024algorithms}. 
Traditionally, the variational family is parameterized by a finite-dimensional set of parameters, yet, \cite{yao2022mean,yao2024wasserstein} have extended this framework to MFVI over infinite-dimensional families. %
Alternatively, Gaussian variational inference \citep{Barber97,Opper09} has garnered significant attention due to its computational tractability and theoretical appeal~\citep{challis13a}. %
More recently, a new class of Gaussian VI algorithms has emerged, based on the discretization of 
the gradient flow of the KL divergence on the 
space of Gaussian measures equipped with the Wasserstein-2 metric: \cite{lambert2022variational} proposed a forward (explicit) time discretization approach, while \cite{diao2023forward,domke2023provable} introduced a forward-backward splitting scheme inspired by~\citep{salim2020wasserstein}. Other algorithms were recently introduced for Gaussian VI by optimizing a weighted score-based (Fisher) divergence~\citep{modi2024batch,cai2024batch}. 
These methods bring valuable insights for Gaussian VI, but may provide a too crude approximation of the target distribution if the latter is multimodal.  In this setting,  deep generative approaches such as normalizing flows~\citep{tabak2010density,papamakarios2021normalizing,kobyzev2020normalizing} offering tractable, normalized densities and efficient sampling, have emerged as flexible and powerful alternatives to traditional Gaussian or factorized variational families. 
While it offers a flexible and widely applicable approach to VI, it is not tailored to a specific variational family. Moreover, despite their ability to represent multimodal distributions using expressive variational families with neural networks, it can suffer from mode collapse~\citep{soletskyi2024theoretical}, especially as the dimension increases, as shown recently in large scale empirical evaluation from \citep{blessing2024beyond}.

A particularly compelling variational family is the one of mixtures of Gaussians. Indeed, the latter can capture complex, multimodal distributions. Also, they lead to %
tractable approximate posteriors, %
since sampling mixtures is computationnally cheap, and also marginalizations, or expectations of linear and quadratic functions can be computed in closed-form. For this variational family, several algorithms have been proposed~\citep{lin2019fast,arenz2018efficient,arenz2022unified} based on natural gradient descent over the natural parameters of the Gaussians. An extension of the Wasserstein gradient flow approach for Gaussian mixtures has also been proposed in \cite{lambert2022variational}%
.
However, their practical utility is often hindered by the computational challenges of inference, particularly when handling high-dimensional %
distributions. Indeed, a key challenge in deploying Gaussian mixture models lies in parameterizing the covariance matrices. While full covariance matrices provide maximum flexibility, their quadratic scaling with dimensionality leads to high computational demands, even if some solutions have been proposed to store them efficiently~\citep{challis13a,Bonnabel24}. To address this issue, we restrict the variational family of mixtures of Gaussians to diagonal, more precisely isotropic covariance matrices (i.e., assuming equal variance across all dimensions), with uniform weights. Mode collapse may result from mean alignment, i.e., two components’ means may align with the same mode of the target, instead of covering multiple modes; and vanishing weights \citep{soletskyi2024theoretical}. Using fixed weights, as we do, prevents the latter.
With this structure, each Gaussian component in the mixture is parameterized by $d+1$ parameters in dimension $d$. As a result, a mixture of $N$ isotropic Gaussians requires a memory cost of $N(d+1)$, compared to  $N(d^2+d)$ for a full covariance mixture. Then, optimizing mixtures of isotropic Gaussians incurs a memory cost roughly equivalent to that of optimizing the means  only. 
We show in this study that this choice of variational family balances between accuracy of the variational approximation, i.e., its ability to model multimodal target distributions, and the computational efficiency of the associated algorithms. 

Our contributions include the development of a variational framework and algorithms tailored to isotropic Gaussian mixtures, as well as 
an empirical evaluation across synthetic and real-world datasets, demonstrating that our approach achieves a compelling balance between modeling accuracy for multimodal targets and computational efficiency. This paper is organized as follows.  \Cref{sec:background} provides the relevant background on the geometry of the space of isotropic Gaussians, and on optimization schemes based on the Bures and entropic mirror descent geometries. In \Cref{sec:setting}, we introduce the general setting for optimization over mixtures of isotropic Gaussians with uniform weights.
\Cref{sec:algorithms} presents our algorithms to efficiently optimize over this variational family. In \Cref{sec:related_work} we discuss related work in the Variational Inference
literature. Our
numerical results are to be found in \Cref{sec:experiments}.
 
\textbf{Notation.}
We write a Gaussian distribution on $\R^d$ with mean $m$ and variance $\epsilon$ as $\cN (m, \epsilon \Idd)$, where  $\Idd$ denotes the d-dimensional identity matrix; and $\cN (x; m, \epsilon \Idd)$ its density evaluated at $x$. 
 We denote by  $\cP_2(\R^d)$ the set of probability distributions on $\R^d$ with bounded second moments. Consider $\mu,\nu \in \cP_2(\R^d)$, the Wasserstein-2 ($\W_2$) distance is defined as $\W_2^2 (\mu, \nu) =\inf_{s \in \mathcal{S}(\mu,\nu)} \int \|x-y\|^2 ds(x,y)$, where $\mathcal{S}(\mu,\nu)$ is the set of couplings between $\mu$ and $\nu$. 
We will denote $\ke$ the normalized Gaussian kernel on $\R^d$ with variance $\epsilon$, i.e. $\ke(x) =(2\pi\epsilon)^{-d/2}\exp(-\|x\|^2/(2\epsilon)) $. 
For $\mu\in \cP_2(\R^d)$, we denote by $\ke \star \mu$ its convolution with the Gaussian kernel, that writes $\ke\star \mu = \int \ke(\cdot - x)d\mu(x).$ We denote $\operatorname{Tr}$ the trace function.

\section{Preliminaries (Gaussian VI)}\label{sec:background}

This section introduces key concepts on the space of isotropic Gaussians, as well as different (time-discretized) gradient flows one can consider through the Bures-Wasserstein or entropic mirror descent geometries.

\subsection{Isotropic Gaussians (IG)}

The space of isotropic Gaussians is defined as 
 $   \IG = \left\{ \cN(m,\epsilon \Idd), m\in \R^d, \epsilon \in \R^{+*} \right\}$ 
 and is a subspace of $\cP_2(\R^d)$. 
When equipped with the $\W_2$ distance, this space has a particularly tractable geometric structure. Indeed, the $\W_2$ distance between two isotropic Gaussians $\cN(m,\epsilon \Idd)$, $\cN(m',\tau \Idd)$ takes the form of a Bures-Wasserstein $(\BW)$ distance:
\begin{equation}\label{eq:w2_ig}
    \BW^2(\cN(m,\epsilon \Idd),\cN(m',\tau \Idd))     
=\|m-m'\|^2+ \B^2(\epsilon \Id,\tau \Id),
\end{equation}
where $\B$ denotes the Bures metric~\citep{bhatia2019bures} between positive definite matrices and $\B^2(\epsilon \Id,\tau \Id)=d (\epsilon+\tau - 2 \sqrt{\epsilon \tau})$. This formula reflects the separable nature (with respect to the means and variances) of the Wasserstein metric on the space of isotropic Gaussians $\IG$. Interestingly, the metric space $(\IG,\BW)$ of isotropic Gaussians equipped with the $\BW$ distance  can be seen as a submanifold of the space of (all) Gaussian distributions equipped with the same metric, which can itself be seen as a submanifold of the Wasserstein space $(\cP_2(\R^d),\W_2)$. 
 Indeed, the %
 $\BW$ 
 geodesic between $\mu,\nu \in \IG$ also lies in $\IG$, see \Cref{sec:geo_structure_ig}.

\subsection{Bures-Wasserstein gradient descent on IG}\label{sec:bures_ig}

Recall that our goal is to minimize a functional objective $\KL(\cdot| \pi)$ as defined in Eq~(\ref{eq:vi}), where $\pi\propto e^{-V}$, firstly on $\IG$ in this section, before being able to tackle mixtures of isotropic Gaussians. In this subsection we explain how to derive a gradient flow with respect to the Bures-Wasserstein geometry, and provide a discrete optimization scheme. To this goal, we  first define a  minimizing movement scheme on $\IG$. 
For $p_0\in \IG$ and $\gamma>0$ a step-size, define:
\begin{equation}\label{eq:JKO}
    p_{k+1} = \underset{p \in \IG}{\arg\min} \left\{  \KL(p | \pi)+ \frac{1}{2 \gamma}\BW^2(p, p_k) %
    \right\},
\end{equation}
which corresponds to a JKO scheme \cite{Jordan98}, but where the solution is constrained to lie in $\IG$. In the limit $\gamma \rightarrow 0$, we obtain a Wasserstein gradient flow  of measures projected on $\IG$, i.e. a continuous curve $(p_t)_{t}\in \IG$ decreasing the KL, and which is governed by differential equations for the mean $(m_t)_t$ and variance $(\epsilon_t)_t$ (see \Cref{sec:bures_update_ode}). %
Such a flow can exhibit a favorable dynamical behavior under a strong log-concavity assumption on the target distribution, 
as demonstrated in the following proposition.

\begin{proposition}\label{prop:linear_cv_continuous}
Suppose that \( \nabla^2 V \succeq \alpha \Idd \) for some \( \alpha \in \mathbb{R} \). 
Then, for any \( p_0 \in \IG \), there is a unique solution $(p_t)_{t}$ to the flow obtained as a limit of Eq~(\ref{eq:JKO}) as $\gamma\to0$. 
Then, for all \( t \geq 0 \) and $p^*\in \IG$,
\[
\KL(p_t| \pi)-\KL(p^*| \pi)  \leq e^{-2\alpha t} \big\{\KL(p_0 | \pi)-\KL(p^*| \pi)\big\},
\]
implying that the flow converges linearly when $\alpha>0$.
\end{proposition}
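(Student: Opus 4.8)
The plan is to recognize the limiting flow as the Riemannian gradient flow of the functional $F := \KL(\cdot\,|\pi)$ on the finite-dimensional manifold $(\IG,\BW)$, and then to exploit geodesic convexity of $F$ to derive a Polyak--{\L}ojasiewicz-type gradient domination inequality, from which the exponential decay follows by Grönwall's lemma. First I would settle existence and uniqueness: since $\IG$ is parameterized by $(m,\epsilon)\in\R^d\times\R^{+*}$, the flow reduces to the coupled system of ODEs for $(m_t,\epsilon_t)$ derived in \Cref{sec:bures_update_ode}; under $\nabla^2 V \succeq \alpha \Idd$ the drift is smooth, hence locally Lipschitz in $(m,\epsilon)$, so Cauchy--Lipschitz yields a unique maximal solution, which one checks does not blow up in finite time (the variance stays bounded away from $0$ and $\infty$ along the KL-decreasing flow). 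Along this solution the energy-dissipation identity $\tfrac{d}{dt}F(p_t) = -\|\nabla_{\BW} F(p_t)\|^2$ holds, where $\nabla_{\BW} F$ is the $\BW$ Riemannian gradient of $F$ restricted to $\IG$.

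The key structural step is to show that $F$ is $\alpha$-geodesically convex on $(\IG,\BW)$. I would combine two facts: (i) as recalled in \Cref{sec:geo_structure_ig}, the $\BW$ geodesic joining two elements of $\IG$ stays in $\IG$ and coincides with the Wasserstein-2 displacement geodesic between the corresponding measures; and (ii) by McCann's condition, $\KL(\cdot\,|\pi)$ is $\alpha$-geodesically convex along $\W_2$ geodesics precisely because $\pi\propto e^{-V}$ with $\nabla^2 V\succeq\alpha\Idd$ (the potential contribution $\int V\,d\mu$ inherits the $\alpha$-convexity of $V$, while the entropy contribution $\int \mu\log\mu$ is displacement convex). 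Restricting (ii) to the geodesics described in (i) then gives $\alpha$-geodesic convexity of $F$ on $\IG$.

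Finally, I would convert geodesic convexity into the decay rate. Evaluating the $\alpha$-convexity inequality along the geodesic from $p_t$ to an arbitrary $p^*\in\IG$, with initial velocity $w$ satisfying $\|w\|=\BW(p_t,p^*)$, and completing the square in $w$ yields the gradient domination bound $\|\nabla_{\BW} F(p_t)\|^2 \geq 2\alpha\,(F(p_t)-F(p^*))$ when $\alpha>0$. Plugging this into the energy-dissipation identity gives $\tfrac{d}{dt}\big(F(p_t)-F(p^*)\big)\leq -2\alpha\,\big(F(p_t)-F(p^*)\big)$, and Grönwall's lemma produces the claimed inequality $F(p_t)-F(p^*)\le e^{-2\alpha t}\big(F(p_0)-F(p^*)\big)$; the bound is genuinely contractive exactly when $\alpha>0$, as stated.

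I expect the main obstacle to be the rigorous identification of the JKO-limit flow with the smooth Riemannian gradient flow on the submanifold $\IG$ — in particular, justifying the chain rule behind the energy-dissipation identity along an a priori only absolutely continuous curve, and verifying that the restricted gradient $\nabla_{\BW}F$ equals the projection of the Wasserstein gradient of $\KL$ onto the tangent space of $\IG$ — together with controlling the geometry near the boundary $\epsilon\to 0^+$ so that the solution is defined for all $t\geq 0$. By contrast, the geodesic-convexity transfer in (i)--(ii) is conceptually clean, though it must be handled carefully because $\IG$ is a curved submanifold.
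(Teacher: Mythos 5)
Your proposal is correct, and for the heart of the proposition---the exponential decay---it is essentially the paper's own argument: both proofs combine (i) $\alpha$-geodesic convexity of $\KL(\cdot|\pi)$ inherited from $\nabla^2 V \succeq \alpha \Idd$ via McCann/Villani, together with the fact that $\BW$ geodesics between isotropic Gaussians stay in $\IG$ (\Cref{sec:geo_structure_ig}); (ii) the energy-dissipation identity along the flow; (iii) a gradient-domination inequality obtained by completing the square in the convexity inequality (the paper phrases this as Young's inequality); and (iv) Gr\"onwall. Where you genuinely diverge is the existence/uniqueness step. The paper proves uniqueness by a metric contraction argument: for two solutions $(p_t)_t,(q_t)_t$ it differentiates $\BW^2(p_t,q_t)$, uses $\alpha$-convexity to obtain $\partial_t \BW^2(p_t,q_t)\le -2\alpha \BW^2(p_t,q_t)$, and concludes by Gr\"onwall; this needs no parameterization of the manifold, but it does not actually establish existence. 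You instead reduce to the ODE system for $(m_t,\epsilon_t)$ of \Cref{sec:bures_update_ode} and invoke Cauchy--Lipschitz, which is more elementary and gives local existence as well---but it shifts the burden onto ruling out finite-time blow-up and the boundary $\epsilon_t\to 0^+$, which you assert rather than prove; the sublevel-set compactness argument you gesture at is clean for $\alpha>0$ but needs a separate argument (e.g.\ finite metric length of the flow on bounded time intervals) in the regime $\alpha\le 0$ that the proposition allows. Finally, your explicit concern about identifying the restricted ($\IG$-projected) gradient with the dissipated quantity is well placed, and is in fact a point the paper glosses over: its appendix proof works throughout with the unprojected Wasserstein gradient, whereas the constrained flow dissipates the projected one. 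The fix is the one you indicate---since $\BW$ geodesics between elements of $\IG$ stay in $\IG$, the log map $\log_{p}(p^*)$ lies in the tangent space of $\IG$ at $p$, so the projection can be inserted into the first-order term of the convexity inequality, and the argument goes through with the projected gradient (cf.\ the Riemannian interpretation in \Cref{sec:bures_update_disc}).
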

The full proof of Proposition~\ref{prop:linear_cv_continuous} is provided in \Cref{sec:proof_linear_cv_continuous}, and is a direct application of the one of \citep[Corollary 3]{lambert2022variational}. It relies on the fact that $\KL(\cdot | \pi)$ is an $\alpha$-convex objective functional along $\W_2$ geodesics when the target potential $V$ is $\alpha$-convex, see e.g. ~\citep[Theorem 17.15]{villani2009optimal}; and since $\W_2$ (equivalently $\BW$) geodesics between isotropic Gaussians lie in $\IG$ (see \Cref{sec:geo_structure_ig}), the objective is also convex on $(\IG,\BW)$. 
This result indicates that strongly log-concave targets %
can be efficiently approximated using isotropic Gaussians. 

Here, we propose to evaluate an explicit time-discretization of this gradient flow, as it is computationally less expensive than an implicit scheme such as Eq~(\ref{eq:JKO}). To this end, let $F:\R^d\times \R^{+*}$ defined as  $F(m,\epsilon):=\KL(\cN(m,\epsilon \Idd)|\tg)$. Starting from some $\theta_0=(m_0,\epsilon_0)\in \R^d\times \R^{+*}$, we consider the following scheme:
\begin{equation} \label{eq:ibwig} \theta_{k+1}=\argmin_{\theta
\in \R^d\times \R^{+*}} \left\{ \langle \nabla F(\theta_k) , \theta - \theta_k \rangle +\frac{1}{2\gamma}\BW^2(\cN_\theta, \cN_{\theta_k})\right\}, \end{equation}
denoting $\theta:=(m,\epsilon)$ and $\cN_\theta:=\cN(m,\epsilon\Idd)$. 
Note that the scheme above is similar to Eq~(\ref{eq:JKO}) but where the objective has been linearized. Thanks to the decomposition of the $\BW$ distance given in Eq~(\ref{eq:w2_ig}), it leads to the following updates on the mean and variances:
\begin{align*}
&m_{k+1}= m_k - \gamma %
\nabla_m F(m_k,\epsilon_k)
\\
&\epsilon_{k+1}=  \Bigl(
    1 - \frac{2\gamma}{d}  \nabla_\epsilon F(m_k,\epsilon_k )\Bigr)^2 \epsilon_k ,\numberthis\label{eq:update_eps_IG_unique}
\end{align*}
where $\nabla_m F(m_k,\epsilon_k )=\mathbb{E}_{p_k} \bigl[ \nabla V\bigr]$ and $\nabla_\epsilon F(m_k,\epsilon_k) = \frac{1}{2}\Bigl( \tfrac{1}{d\epsilon_k} \,\mathbb{E}_{p_k} \bigl[(\cdot - m_k)^\top \nabla V\bigr]
    - \tfrac{1}{\epsilon_k} \Bigr)$. 
We observe that while the first update on the mean is a simple gradient descent, the latter update ensures that the variance remains positive and differs from a simple Euler discretization of the associated differential equation (see \Cref{sec:bures_update_ode})
. We provide the details of the computation as well as an interpretation of these updates as a Riemannian gradient descent in \Cref{sec:bures_update_disc}.

\subsection{Entropic Mirror Descent on IG}\label{sec:md_background}  We now turn to an alternative descent scheme on $\IG$, namely mirror descent, which relies on a geometry different from the $\W_2$  one described in the previous subsection. 
Mirror descent is an optimization algorithm that was introduced to solve constrained convex problems~\citep{nemirovskij1983problem}, and that uses in the optimization updates a cost (or ``geometry") that is a Bregman divergence~\citep{bregman1967relaxation}, whose definition is given below.  
\begin{definition}
Let $\phi:\cX\rightarrow \R$ a strictly convex and differentiable functional on a convex set $\cX$, referred to as a Bregman potential. The $\phi$-Bregman divergence is defined for any 
$x,y \in \cX$ 
by:
\begin{equation*}
      B_{\phi}(y|x)=\phi(y)-\phi(x)-
     \ssp{\nabla \phi(x), y-x}.
\end{equation*}
\end{definition}  
Further details on mirror descent and its connection to standard algorithms such as gradient descent are provided in \Cref{sec:background_md}.

Hence, we propose to choose an appropriate Bregman divergence on the space of covariance matrices, namely a generalized Kullback-Leibler divergence between positive definite matrices: $\KLB(A|B)=\Tr(A (\log A-\log B))-\Tr(A)+\Tr(B)$. The latter object, also called Von Neumann relative entropy, is a Bregman divergence  whose Bregman potential is the Von Neumann  entropy $\phi : A \mapsto \Tr(A \log A)$ (and where $\langle A,B\rangle = \Tr(AB)$). Note that $\KLB(\epsilon \Idd|\tau \Idd)=d \left(  \epsilon \log\frac{\epsilon}{\tau} - \epsilon + \tau\right)$. 
Then, we can define a descent scheme on $\IG$ as follows, starting from some $\theta_0=(m_0,\epsilon_0)\in \R^d\times \R^{+*}$:
\begin{equation*} \label{eq:mdig}\theta_{k+1}\\ =\argmin_{\theta\in \R^d\times \R^{+*}} \left\{ \langle \nabla F(\theta_k) , \theta - \theta_k \rangle +\frac{1}{2\gamma}\|m-m_k\|^2 + \frac{1}{2\gamma} \KLB(\epsilon \Idd|\epsilon_k\Idd)\right\},\end{equation*}
denoting again $\theta=(m,\epsilon)$.
Note that compared to the scheme of Eq~(\ref{eq:ibwig}), only the update on the variance differs, and is given by:
\begin{equation}\label{eq:md_ig_update}
\epsilon_{k+1} = \epsilon_k \exp(-\frac{2\gamma}{d} \nabla_\epsilon F(m_k,\epsilon_k)),
\end{equation}
see \Cref{sec:emd_ig_updates} for the computations. 
This update, as the one in  Eq~(\ref{eq:update_eps_IG_unique}), also guarantees that the variance parameter $\epsilon$ remains strictly positive; and is known as entropic mirror descent~\citep{beck2003mirror}\footnote{\citep{beck2003mirror} used this exponential update followed by a renormalization to optimize over the simplex.}.

\section{Mixtures of Isotropic Gaussians (MIG)}\label{sec:setting}

We now turn to the problem of optimizing the KL objective to the target distribution $\tg$ over the family of mixtures of isotropic Gaussians. 
We will consider the VI problem Eq~(\ref{eq:vi}) for a specific setting where the variational family is the set of mixtures of $N$ isotropic Gaussians, for some $N\in \mathbb{N}^*$, with equally weighted components: 
\begin{equation*}
\C^N =\Bigl\{ \frac{1}{N} \sum_{j=1}^N   \mathcal{N}(m^j,\epsilon^j \Idd),\; [m^j,\epsilon^j]_{j=1}^N \in (\R^d\times \R^{+*})^N\ \Bigr\}.
\end{equation*}
Note that any distribution $\nu\in \C^N$ writes $\nu= \frac{1}{N}\sum_{j=1}^N \kej \star \delta_{m^j},$ for some $[m^j,\epsilon^j]_{j=1}^N \in (\R^d\times \R^{+*})^N$, where $\kej$ is the Gaussian kernel with variance $\epsilon^j$ and $\delta_{m^j}$ is the Dirac at $m^j$. 
Then, we define our loss function $F:(\R^d)^N \times (\R^+)^N\to \R^{+*}$ as:
\begin{equation}\label{eq:objective_loss}
     F([m^j, \epsilon^j]_{j=1}^{N}) := \KL\Bigl(\frac{1}{N}\sum_{j=1}^N \cN(m^j, \epsilon^j \Idd) \Big|  \pi\Bigr).
\end{equation}
The following proposition provides useful formulas regarding the gradients of this objective. %
\begin{proposition}\label{prop:gradient_eps}
 Let $\mu =\frac{1}{N}\sum_{j=1}^N \delta_{m^j}$ and denote $\ke \otimes \mu = \frac{1}{N}\sum_{j=1}^N k_{\epsilon^j}\star \delta_{m^j} $. Assume $\pi\in \C^1(\R^d)$. 
The gradients of \( F \) with respect to \( m^j,\epsilon^j \in \R^d\times \R^{+*}\) write:
\begin{align*}
\nabla_{\epsilon^j}F([m^j, \epsilon^j]_{j=1}^{N})
&=\frac{1}{2 N \epsilon^j} \mathbb{E}_{k_{\epsilon^j} \star \delta_{m^j}} \Bigl[ (\cdot - m^j)^T  \nabla \ln \Bigl(\frac{\ke \otimes \mu}{\pi}\Bigr)(\cdot) \Bigr],\\
  \nabla_{m^j}F([m^j, \epsilon^j]_{j=1}^{N})&=
   \frac{1}{ N } \mathbb{E}_{k_{\epsilon^j} \star \delta_{m^j}} \Bigl[   \nabla \ln \Bigl(\frac{\ke \otimes \mu}{\pi}\Bigr)(\cdot) \Bigr].
\end{align*}
\end{proposition}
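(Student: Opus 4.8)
The plan is to differentiate the KL objective directly and then transfer the \emph{parameter} derivatives of the Gaussian kernels onto \emph{spatial} gradients of $\log(\nu/\pi)$ via integration by parts (a Stein-type identity). First I would set $\nu := \ke\otimes\mu = \frac{1}{N}\sum_{l=1}^N k_{\epsilon^l}\star\delta_{m^l}$, whose Lebesgue density at $x$ is $\nu(x)=\frac{1}{N}\sum_{l} k_{\epsilon^l}(x-m^l)$, and write $F = \int \nu\log\nu\,\diff x - \int \nu\log\pi\,\diff x$. Differentiating with respect to a single scalar parameter $\eta\in\{m^j,\epsilon^j\}$ and using that $\int \partial_\eta\nu\,\diff x = \partial_\eta\int\nu\,\diff x = 0$ (each component, hence the mixture, remains a probability density for all parameters), the term $\int \nu\,\partial_\eta\log\nu = \int\partial_\eta\nu$ coming from the entropy cancels. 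This leaves the clean identity $\partial_\eta F = \int \partial_\eta\nu(x)\,\log\frac{\nu(x)}{\pi(x)}\,\diff x$, valid for both the mean and the variance parameters.

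Next I would compute the two kernel derivatives explicitly; since only the $j$-th summand depends on $(m^j,\epsilon^j)$, and $\nabla_x k_{\epsilon^j}(x-m^j) = -\tfrac{x-m^j}{\epsilon^j}\,k_{\epsilon^j}(x-m^j)$, one gets $\nabla_{m^j}\nu(x)=\tfrac1N\tfrac{x-m^j}{\epsilon^j}k_{\epsilon^j}(x-m^j)=-\tfrac1N\nabla_x k_{\epsilon^j}(x-m^j)$. From $\partial_\epsilon\log k_\epsilon(y)=-\tfrac{d}{2\epsilon}+\tfrac{\|y\|^2}{2\epsilon^2}$ one gets $\partial_{\epsilon^j}\nu(x)=\tfrac1N k_{\epsilon^j}(x-m^j)\bigl(\tfrac{\|x-m^j\|^2}{2(\epsilon^j)^2}-\tfrac{d}{2\epsilon^j}\bigr)$.

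The heart of the argument is integration by parts against $g:=\log(\nu/\pi)$. For the mean, substituting $\nabla_{m^j}\nu=-\tfrac1N\nabla_x k_{\epsilon^j}(\cdot-m^j)$ into $\partial_{m^j}F=\int\nabla_{m^j}\nu\,g\,\diff x$ and integrating by parts turns $-\tfrac1N\int\nabla_x k_{\epsilon^j}(x-m^j)\,g\,\diff x$ into $\tfrac1N\int k_{\epsilon^j}(x-m^j)\,\nabla_x g\,\diff x$, which is exactly $\tfrac1N\E_{k_{\epsilon^j}\star\delta_{m^j}}[\nabla g]$. For the variance, I would instead start from the claimed right-hand side $\tfrac{1}{2\epsilon^j}\int k_{\epsilon^j}(x-m^j)(x-m^j)^\top\nabla_x g\,\diff x$ and integrate by parts coordinatewise: using $\partial_{x_i}k_{\epsilon^j}(x-m^j)=-\tfrac{(x-m^j)_i}{\epsilon^j}k_{\epsilon^j}(x-m^j)$ together with $\partial_{x_i}(x-m^j)_i=1$ produces the scalar factor $\tfrac{\|x-m^j\|^2}{\epsilon^j}-d$, so the expression equals $\int k_{\epsilon^j}(x-m^j)\bigl(\tfrac{\|x-m^j\|^2}{2(\epsilon^j)^2}-\tfrac{d}{2\epsilon^j}\bigr)g\,\diff x$, which matches $N\,\partial_{\epsilon^j}F$ by the previous paragraph, yielding the stated formula.

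The main obstacle is rigour rather than algebra: I must justify differentiating $F$ under the integral sign and discarding the boundary terms in each integration by parts. This rests on the Gaussian decay of $k_{\epsilon^j}$ and its first derivatives, combined with $\pi\in\C^1(\R^d)$, which controls $g=\log(\nu/\pi)$; since $\log\nu$ is at most quadratic in $\|x\|$ and $\log\pi=-V$ (up to a constant), the integrands are dominated by $k_{\epsilon^j}$ times a polynomial, hence integrable, and the surface terms at infinity vanish, provided $V$ grows mildly (e.g.\ subexponentially). Spelling out these domination bounds and the vanishing of boundary terms are the only nontrivial checks; the rest is the bookkeeping above.
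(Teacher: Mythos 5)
Your proposal is correct and follows essentially the same route as the paper's proof: both cancel the entropy-derivative term via the vanishing integral of $\partial_\eta \nu$ (the zero-mean score identity), use $\nabla_{m^j}\nu = -\tfrac{1}{N}\nabla_x k_{\epsilon^j}(\cdot - m^j)$ plus one integration by parts for the mean, and transfer a derivative between the Gaussian kernel and $\log(\nu/\pi)$ for the variance (the paper phrases this via the identity $\nabla_\epsilon p = \tfrac{1}{2}\operatorname{Tr}\nabla_x^2 p$ and integrates by parts forward, while you run the equivalent coordinatewise integration by parts backward from the claimed formula). The only cosmetic difference is that the paper first proves the single-Gaussian case as a lemma and then extends to the mixture, whereas you work with the mixture directly; the algebra is identical.
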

The proof of Proposition~\ref{prop:gradient_eps} can be found in \Cref{sec:proof_prop_gradient_eps}. Note that the means and variances in the mixture interact through the terms $\nabla \ln (\ke \otimes \mu)$ in the gradients%
. 
Remarkably, our computations provide an expression of the gradient with respect to the variance that only involves a scalar product $(\cdot - m^j)^T  \nabla \ln \left(\frac{\ke \otimes \mu}{\pi}\right)$, which can be computed efficiently with a computational cost in $\mathcal{O}(d)$.
In practice, the expectations over the Gaussian components $k_{\epsilon^j} \star \delta_{m^j}$ for $j=1,\dots,N$ 
will be estimated with Monte Carlo integration. 

\section{Algorithms for VI on MIG}\label{sec:algorithms}

\subsection{General optimization framework}

We propose, for the optimization of the objective $F$ on $(\R^d)^N \times (\R^{+*})^N$ defined in Eq.~(\ref{eq:objective_loss}), or equivalently $\KL(\cdot | \pi)$ on $\C^N$, to perform joint optimization on the means and variances of the mixture. 
This joint optimization involves a gradient descent update on the means, and either a Bures or entropic mirror descent update on the variances. 
Our approach is summarized in \Cref{alg:mirror_descent}.

\begin{algorithm}[H]
   \caption{MIG optimization with IBW or MD}  
   \label{alg:mirror_descent}
\begin{algorithmic}
   \STATE {\bfseries Input:} initial means and variances $(m_0^j,\epsilon_0^j)_{j=1}^N$,
   step-size $\gamma$, number of iterations $T$.
   \FOR{$k=1$ {\bfseries to} $T$}
      \FOR{$i=1$ {\bfseries to} $N$}
   \STATE Update \inlineeqtagged{GD}{eq:m_update}{
  m^i_{k+1} = m^i_k - \gamma N \nabla_{m^i_k}F\left([m^j_k, \epsilon^j_k]_{j=1}^N\right)
}
    \STATE 
    Update $\epsilon^i_{k}$ with IBW~(\eqref{eq:eps_bd}) or MD~(\eqref{eq:eps_md})
   \ENDFOR
      \ENDFOR
\end{algorithmic}
\end{algorithm}
We now describe the optimization of the variance parameters using either Bures or entropic mirror descent updates in the next subsections. These methods rely on careful adaptations of the schemes introduced in \Cref{sec:background}, originally defined for a single isotropic Gaussian, to the mixture setting.

\subsection{Bures (IBW) update} \label{sec:mig:IBWupdate}

This section extends the framework of \Cref{sec:bures_ig} to Gaussian mixtures and presents a new formulation of the JKO scheme adapted to this setting. The $\W_2$ distance between two Gaussian mixtures is intractable and does not admit a closed form, in contrast with the $\BW$ distance on $\IG$. Then, we cannot obtain direct updates on the mean and variances for a JKO scheme restricted to $\C^N$. To address this issue, we will represent a mixture with its associated mixing measure. \cite{lambert2022variational} proposed a similar approach to derive a Wasserstein gradient flow on Gaussian mixtures. However, they considered a mixing measure with an infinite number of components and did not provide a formal derivation of a fully explicit discrete (in time and space) scheme.

Any uniform-weight Gaussian mixture $\nu \in \C^N$ can be identified to a mixing measure $\hat p= \frac{1}{N} \sum_{j=1}^N \delta_{(m^j,\epsilon^j)}$ on $(\R^d\times \R^{+*})^N$ where for any $x\in \R^d$ we have:
\vspace{-0.2cm}
\begin{equation}\label{eq:mixed_measure}
\nu(x) = \int \cN(x;m, \epsilon \Idd)d\hat p(m,\epsilon)%
\end{equation}
Note that the space $\C^N$ allows for the full identification of a mixture, up to a reordering of the indices, since the corresponding mixing measure contains only $N$ %
particles with equal weights. This avoids the identifiability issues that arise when dealing with a mixing measure supported on a continuous (infinite) set of components, which constitutes an overparameterized model, see \citep[Section 5.6]{chewi2024statistical}. See also  \Cref{sec:bures_update_gmm_jko} for more details. Following \cite{chen2019mixture}, we first consider a %
Wasserstein distance between mixing measures denoted $W_{bw}$, where the cost is a squared Bures-Wasserstein distance, i.e. $c((m,\epsilon),(m',\tau))=\BW^2(\cN(m,\epsilon \Id),\cN(m',\tau \Id))$.  
We then construct the JKO scheme on mixing measures at each step $k$ as:
\begin{equation}\label{eq:molifiedJKO}
    \hat p_{k+1} = \underset{(m^j, \epsilon^j)_{j=1}^N
    }{\arg\min} \left\{ \KL(\nu | \pi)+\frac{1}{2 \gamma} W_{bw}^2(\hat p, \hat p_k)  \right\}, 
\end{equation}
where $\hat p  = \frac{1}{N} \sum_{j=1}^N \delta_{(m^j,\epsilon^j)}$,  $\hat p_k  = \frac{1}{N} \sum_{j=1}^N \delta_{(m^j_k,\epsilon^j_k)}$ and $\nu$ is defined as in Eq~(\ref{eq:mixed_measure}). 
The resulting Gaussian mixture is $\nu_{k+1}=\int \cN(m,\epsilon \Idd)d\hat p_{k+1}(m,\epsilon)$. Since $\hat p$ and $\hat p_k$ are two discrete measures with an equal number of components, the above Wasserstein distance simplifies as: 
\begin{equation}\label{eq:DecoupledJKO}
W_{bw}^2(\hat p, \hat p_k) 
=\underset{\sigma}{\min} \frac{1}{N} \sum_{j=1}^N   \BW^2(\cN(m^j,\epsilon^j \Idd),\cN(m^{\sigma(j)}_k,\epsilon^{\sigma(j)}_k \Idd)), 
\end{equation}
where $\sigma$ is a permutation of the $N$ indices in the mixture. Solving the JKO scheme  Eq~(\ref{eq:molifiedJKO}) is now tractable and we can compute the limiting flow as $\gamma \rightarrow 0$,  since at the limit $\sigma(i)=i$. %
The continuous-time equations of the flow in the isotropic case are given in \Cref{sec:bures_update_gmm_ode}. They match the continuous-time equations for the means and covariances derived in \citep[Section 5.2]{lambert2022variational} and recalled in  \Cref{sec:app_lambertsec52_update}. 

Similarly to \Cref{sec:bures_ig}, we consider an explicit time-discretization of this flow, using a linearization of the objective in Eq~(\ref{eq:molifiedJKO}). This leads us to the scheme:
\begin{equation}
    \label{eq:ibw_mixture} [\theta_{k+1}]_{j=1}^N  = \underset{(\theta^j)_{j=1}^N    }{\arg\min} \Bigl\{ \langle \nabla F([\theta_k^j]_{j=1}^N), [\theta^j]_{j=1}^N - [\theta_{k}^j]_{j=1}^N\rangle +\frac{1}{2  \gamma N} \sum_{j=1}^N   \BW^2(\cN_{\theta^j},\cN_{\theta^{j}_k})  \Bigr\},
\end{equation}

assuming that $\sigma(i)=i$ in Eq~(\ref{eq:DecoupledJKO}) for $\gamma$ small enough. 
Finally, the variance updates for the Gaussian components are: 
\begin{align}\label{eq:eps_bd}
\boxed{\begin{aligned}
    \textbf{IBW update} \quad \text{For } \, j=1,\dots,N: \quad
    &\epsilon_{k+1}^j =\left( 1- \frac{2N \gamma }{d}     \nabla_{\epsilon^j} F([m_k^j,\epsilon_k^j]_{j=1}^N)\right)^2\epsilon^j_{k}.
\end{aligned}}
\end{align}
The update on the means takes the form of Eq~(\ref{eq:m_update}). Details of the computations are deferred to \Cref{sec:bures_update_gmm_disc}.
  Ultimately, we obtain a system of Gaussian
  particles $(m^j,\epsilon^j)_{j=1}^N$ that interact through the gradient of the objective.

\begin{remark}
Note that we can characterize the discrepancy between our JKO scheme on the mixing measure and the original JKO scheme restricted to Gaussian mixtures. Indeed, the Wasserstein distance on the mixing measure is related to the $\W_2$ distance on $\cP_2(\R^d)$ as follows: $0 \leq W_{bw}^2(\hat p, \hat p_k)-W_2^2 \left(\nu, \nu_k \right) \leq 2 \sqrt{2 d \epsilon^*}$ where $\epsilon^*$ is the maximal variance of the mixtures $\nu,\nu_k$. This result is a direct consequence of \citep[Proposition 6]{delon2020mixture}, see \Cref{sec:bures_update_gmm_delond} for further details. 
\end{remark}

\subsection{Entropic mirror descent (MD) update}

In this section we provide an alternative way to optimize the variances of the mixture, based on mirror descent ideas introduced in \Cref{sec:md_background}. 
In particular, generalizing to $N$ components what we have done for Eq~(\ref{eq:md_ig_update}), and by analogy with the scheme Eq~(\ref{eq:ibw_mixture}), we consider the following scheme:

\begin{equation*}
    \label{eq:md_mixture}[\theta_{k+1}^j]_{j=1}^N = \underset{(\theta^j)_{j=1}^N   }{\arg\min} \Bigl\{ \langle \nabla F([\theta^j_k]_{j=1}^N), [\theta^j]_{j=1}^N - [\theta_k^j]_{j=1}^N\rangle 
+\frac{1}{ 2 \gamma N} \sum_{j=1}^N \|m^j-m_k^j\|^2 + \KLB(\epsilon^j \Idd|\epsilon^{j}_k \Idd)  \Bigr\}. 
\end{equation*}

Then, at step $k\ge 0$,  the udpate on the variances takes the form:
\begin{align} \label{eq:eps_md}
\boxed{\begin{aligned}
    &\textbf{MD update} \quad \text{For }\, j=1,\dots,N: \quad \epsilon_{k+1}^j = \epsilon_k^j 
    \exp(-\frac{2N\gamma }{d}\nabla_{\epsilon^j} F([m_k^j,\epsilon_k^j]_{j=1}^N)),
\end{aligned}}
\end{align}
while the update on the means remains Eq~(\ref{eq:m_update}), see \Cref{sec:emd_ig_updates} for the detailed computations.

\section{Related Work}\label{sec:related_work}
In this section, we provide an overview of relevant work on VI with mixtures of Gaussians. 

Several studies have addressed VI for mixture models, emphasizing computational aspects. \citet{gershman2012nonparametric} optimize an approximate ELBO using L-BFGS (a quasi-Newton method), relying on successive approximations of ELBO terms for mixtures of Gaussians. However,
 while the original KL objective in VI defines a valid divergence between probability distributions, their optimization objective departs significantly from it.

\citet{lin2019fast,arenz2018efficient} propose %
natural gradient descent (NGD) updates on the natural parameters of the Gaussians for each component of the mixture, and on the categorical distribution over weights. These methods are unified and extended in \citep{arenz2022unified}, which introduces computational improvements. %
Natural gradient descent differs from standard gradient descent by performing steepest descent with respect to changes in the underlying distribution, measured using the Fisher information metric. In other words, the natural gradient is the standard gradient preconditioned by the inverse of the Fisher information matrix.  For exponential families, such as Gaussians, the natural gradient of the objective with respect to natural parameters coincides with the standard gradient of the (reparametrized) objective when expressed in terms of expectation parameters (i.e., the moments of the Gaussians). This has some pleasant consequences, including closed-form updates on means and covariances, since the natural parameter admits a simple expression in terms of means and variances for Gaussians. The NGD updates (fixing the weights of the mixture) on means and variances write:
\begin{align}
    \frac{1}{\epsilon_{k+1}^i} - \frac{1}{\epsilon_k^i} &= \frac{2N\gamma}{d} \nabla_{\epsilon_k^i} F([m_k^j,\epsilon_k^j]_{j=1}^N),%
    \quad 
   m_{k+1}^i-m_k^i = - \epsilon_{k+1}^i N\gamma \nabla_{m_k^i} F([m_k^j,\epsilon_k^j]_{j=1}^N)\color{black}\label{eq:lin2}.
\end{align} 
We refer to \Cref{sec:mirror_descent_IG} for more details and the computations. \emph{In particular, the latter update does not guarantee that the variances remain positive, and the update on the mean is multiplied by the current covariance}.  In contrast, our updates on the means and covariances are decoupled (except through the gradient of the objective), and 
our updates on variances enforce positivity.

\citep{huix2024theoretical} considered the setting where the variances of the mixture are shared, equal to $\epsilon \Idd$ with $\epsilon\in \R^{+*}$ that is kept fixed, and only the means $(m^1,\dots,m^N)$ are optimized with gradient descent (GD). In that setting, they proved a descent lemma showing that the KL objective functional decreases along the GD iterations, under some conditions including a maximal step-size, a boundedness conditions on the second moment of the (distribution) iterations and a finite number of components $N$. They also provided an upper bound on the approximation error, i.e. the minimal KL divergence between a $N$-component mixture of Gaussians with uniform weights and shared isotropic covariance matrices between components is upper bounded as $\mathcal{O}(\frac{\log(N)}{N})$, when %
$\tg$ writes as an infinite mixture of these components. Interestingly, this bound is valid for mixtures of isotropic Gaussians with different variances, as we show in \Cref{sec:approx_error}. 
Yet, fixing the covariances to a constant factor $\epsilon$ of the identity, as in \citep{huix2024theoretical}, limits a lot the expressiveness of the variational family. Hence, we focus in this work on the more general variational family defined by $\C^N$.

\section{Experiments}\label{sec:experiments}
In this section, we evaluate our proposed methods summarized in Algorithm~\ref{alg:mirror_descent}, namely IBW and MD, on different experiments on both toy and real data. In practice, in all our experiments, the gradients in Proposition~\ref{prop:gradient_eps} are computed with a Monte Carlo approximation involving $B$ samples from Gaussian distributions in the components. 
Our other hyperparameters are the following: $N$ the number of components in the mixture; $\gamma$ the step-size; and we also vary the initial values of the means and covariances of the candidate mixture. Our code is available at \url{https://github.com/margueritetalamon/VI-MIG}.  

We compare our methods with different competitors. The algorithm presented in \cite{lambert2022variational}, based on a Bures geometry (as IBW) but updating full covariance matrices in the mixture, is abbreviated as BW (see  \Cref{sec:app_lambertsec52_update} for more details on BW). Note that the latter has a computational complexity and memory scaling proportionally to $N(d+d^2)$ instead of $N(d+1)$ for our schemes IBW and MD. The algorithm of \citep{huix2024theoretical}, considering mixtures of isotropic Gaussians with shared variance $\epsilon\Id$, that only updates the means with Eq.~(\ref{eq:m_update}) while keeping the variance $\epsilon$ fixed, is denoted GD. The ``shared-variance'' version of our schemes (where $\forall i=1, \dots, N$,  we impose the constraint $\epsilon^i =\epsilon$, but the latter is optimized) are denoted IBW-s and MD-s. The natural gradient descent algorithm of \cite{lin2019fast}, adapted to isotropic Gaussians with fixed weights and given in Eq~(\ref{eq:lin2}), is referred to as NGD. We also evaluate a Normalizing Flow (NF) implementation based on Real NVPs~\citep{dinh2016density}. %
We also used Hamiltonian Monte Carlo (HMC), an MCMC scheme (hence non parametric) and Automatic Differentiation Variational Inference (ADVI) \cite{kucukelbir2017automatic}. All the experimental details are provided in  \Cref{sec:details_exp}. %

\paragraph{Gaussian-mixture  target in two dimensions.} We first illustrate the behavior of our methods for a two-dimensional target $\pi$ that is a Gaussian mixture with $5$ components. In \Cref{fig:2D_main} (top), we evaluate our methods for $N=1,5,10, 20$. We observe several interesting facts. Choosing a number of components $N = 20$ leads to the lowest final $\KL$ objective. This is in accordance with the fact that the approximation error with a mixture of isotropic Gaussians goes to zero as $N$ tends to infinity, see \Cref{sec:related_work}. The shared-variance versions of our algorithms (IBW-s and MD-s) did not perform as well as the original schemes we propose, namely IBW and MD. This demonstrates the benefit of optimizing each component's variance to better fit a given target's shape.  In~\Cref{sec:app_2d_expe}, we visualize the optimized Gaussian components (represented as circles), highlighting the benefit of allowing each component to have its own variance. The NF method appears slower than ours without improving the approximation. 
We plot the approximated density with $N = 10$ for $\BW$, $\IBW$, GD and NF methods together with the target density in \Cref{fig:2D_main} (bottom). 
\begin{figure}[H]
\includegraphics[width=1\textwidth]{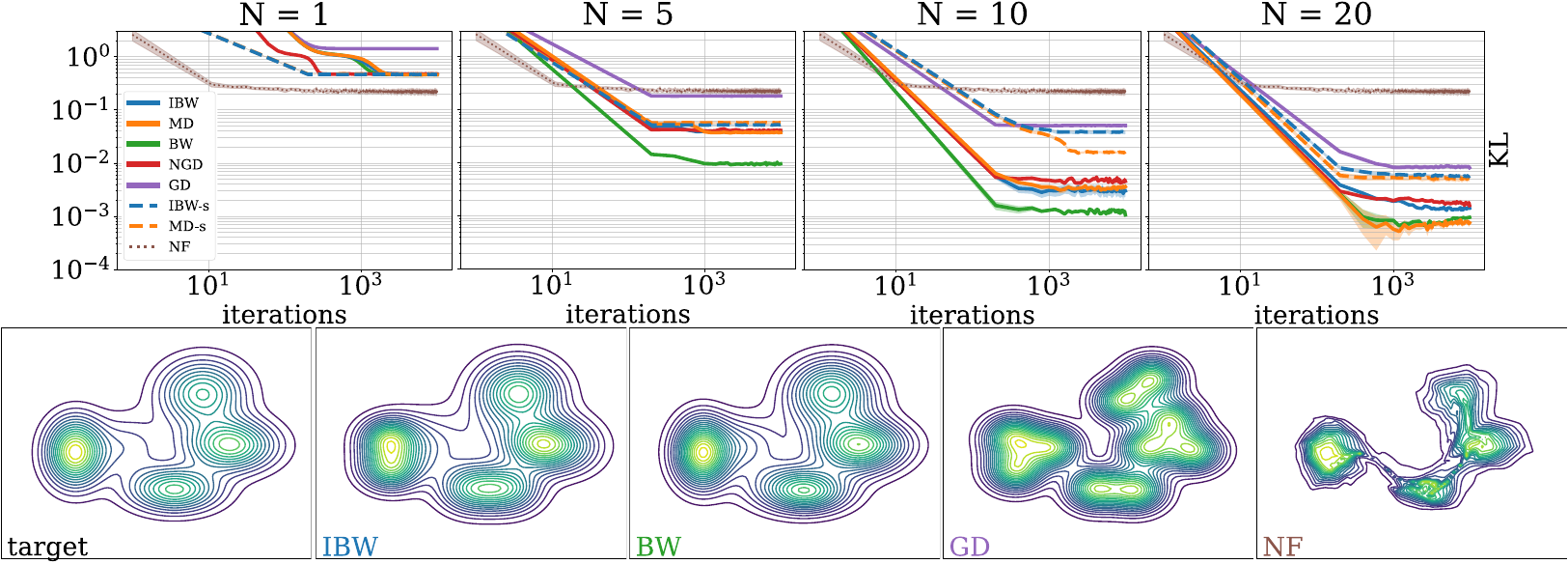}
    \caption{Illustration of convergence of \Cref{alg:mirror_descent} for a two-dimensional target distribution. 
    }
    \label{fig:2D_main}
\end{figure}
We also evaluated these VI methods on alternative challenging two-dimensional target distributions,  these results  are deferred to the Appendix. More precisely, we test the performance of these methods on a Funnel distribution and heavy-tailed targets in \Cref{sec:app_2d_expe}. Then we investigate how well our schemes on mixtures of isotropic Gaussians can fit challenging Gaussian mixtures (e.g. with highly unbalanced mode weights) in \Cref{sec:unif_weights}. %

\paragraph{Gaussian-mixture target  in high dimension.}

We then consider a Gaussian mixture target with $10$ components in dimension $d=20$, and a variational mixture model with $N=20$. %
In \Cref{fig:20d-marginals}, we plotted the marginals along each dimension together with the $\KL$ evolution for the schemes (BW, IBW, MD, NGD). %

\begin{figure}[H]

\includegraphics[width=1\textwidth]{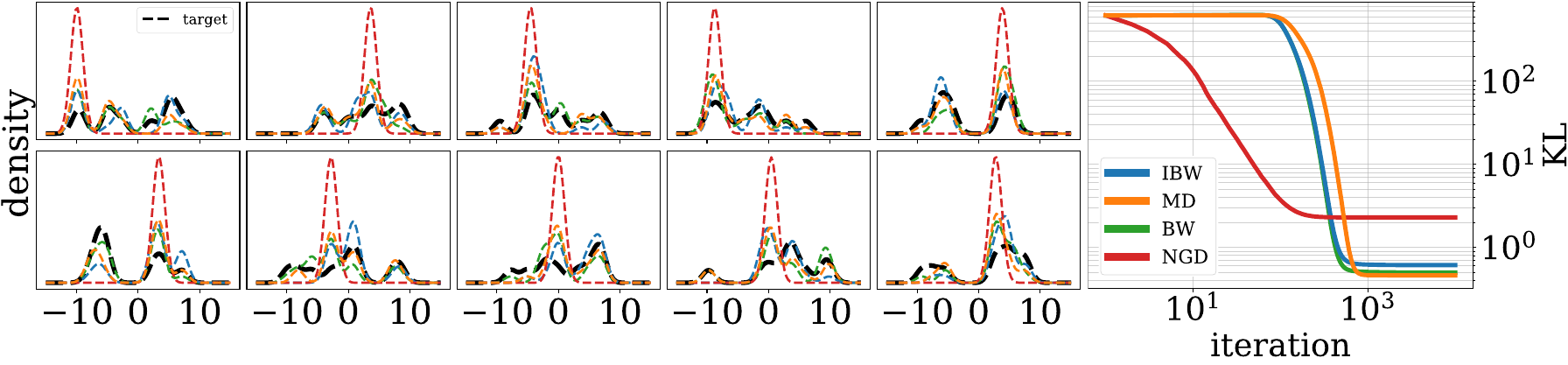}
    \caption{(First 10) Marginals (left) and KL objective (right) for MD, IBW, BW and NGD $(d=20)$. 
    }
    \label{fig:20d-marginals}
\end{figure}
\begin{wrapfigure}{l}{0.4\textwidth}\includegraphics[width=0.9\linewidth]{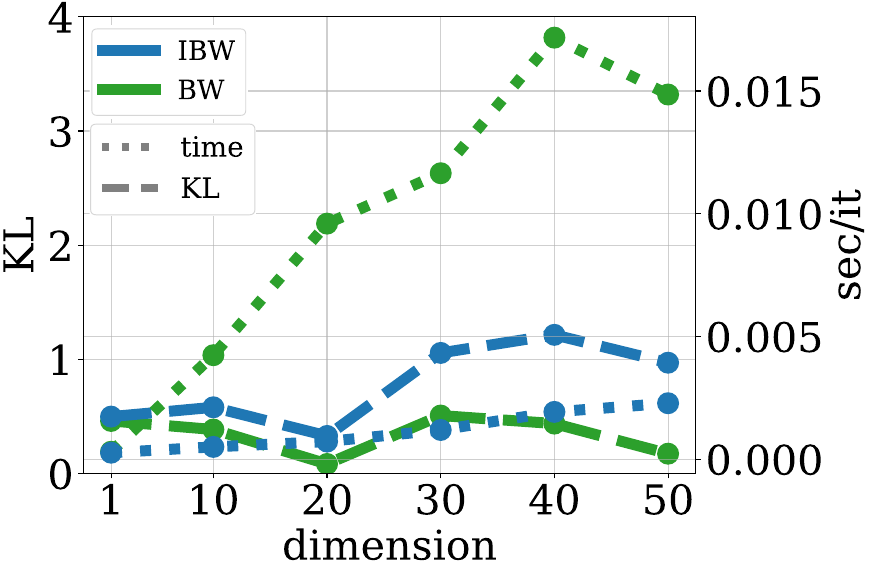}   \caption{Time/KL evolution w.r.t. $d$.}     \label{fig:computation_cost}%
\end{wrapfigure} 
We observe that both our schemes IBW and MD provide a good approximation, along with BW. 
In the mean update of NGD, the gradient is rescaled by $\epsilon$, which leads to a large step for a spread-out Gaussian. While this rescaling allows faster convergence, we observe that it makes the algorithm more sensitive to the initial conditions%
.
Then, we compare in \Cref{fig:computation_cost}  the time per iteration and the KL objective value for BW (that updates full covariances matrices), and our isotropic version (IBW) for a similar target over several dimensions, for $N=15$. We note that IBW performs comparably to BW, while enjoying a faster time execution,
and still with a cost in memory linear in the dimension instead of quadratic. %

\paragraph{Bayesian posteriors.}
We evaluate our methods on two probabilistic inference tasks using  classical datasets. The first one is Bayesian logistic regression (BLR) for two UCI datasets: \texttt{breast\_cancer} ($2$ labels,  $d= 30$) and \texttt{wine} ($3$ labels, $d= 39$). The second one aims to compute a Bayesian neural network (BNN) posterior on a regression task 
on the \texttt{boston} dataset using a single hidden layer neural network of 50 units ($d= 601$), and on the MNIST with a one layer neural network with 256 units ($d=203530$). 
In each case, we assume a standard Gaussian prior on the parameters, and compute the posterior distribution given observations. More details are given in \Cref{sec:details_expe_bayesian_posterior}
. Note that the first task leads to log-concave posteriors (in contrast to the previous mixture of Gaussians targets which are typically non log-concave) while the second typically leads to a multimodal one~\citep{izmailov2021bayesian}. 

\begin{figure}
[H]
\centering\includegraphics[width=1\textwidth]{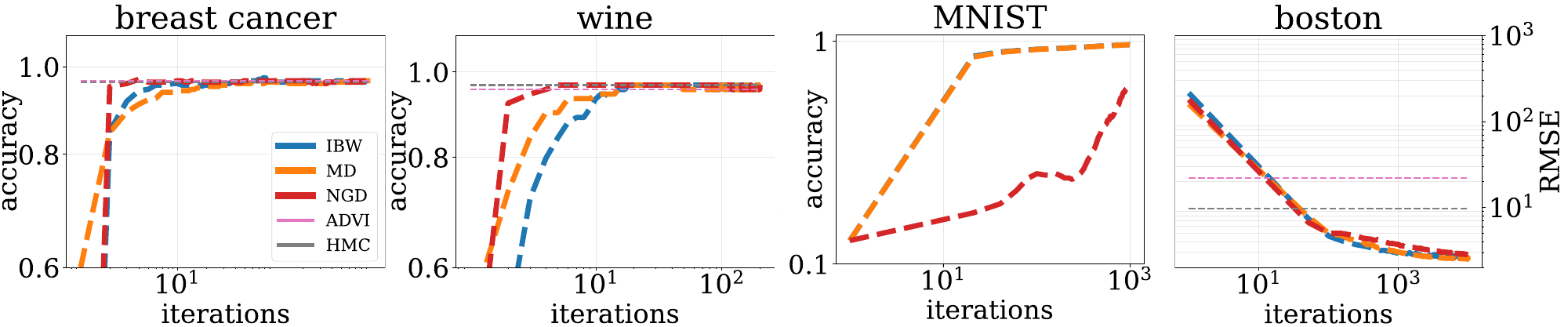}%
\caption{Bayesian logistic regression and BNN regression approximated by mixtures of Gaussians.}    \label{fig:realdata}\end{figure}

In~\Cref{fig:realdata}, we present the results of our algorithms IBW and MD for $N=5$, on these Bayesian inference tasks, plotting the accuracy or Root Mean Squared Error (RMSE) score on the test set over iterations. The evolution of $\log$-likelihood and unnormalized $\KL$ (ELBO) are deferred to \ref{sec:details_expe_bayesian_posterior}. As additional baselines, we compared our results with HMC and ADVI methods (except on MNIST -where we did not manage to find a working set of hyperparam in such high dimensions), and only plotted the final samples results provided by \texttt{stan}.  
Other experiments, such as comparing the performance of our methods with unimodal variational approximations, e.g. using $N=1$ within our framework or popular Laplace approximations (e.g. Diagonal and K-FAC \citep{ritter2018scalable}) on MNIST, are also deferred to \ref{sec:details_expe_bayesian_posterior}; they show the relevance of using several components to capture different modes, for such multimodal posteriors. %
Regarding MNIST, for scalability issues, in analogy with ~\citet{blundell2015weight}, we coded a mean-field version of our algorithm (see also \Cref{sec:details_expe_bayesian_posterior} for details) where each weight marginal is fitted by a Gaussian mixture with our~\Cref{alg:mirror_descent}.
We observed that we achieved the same order of performance for IBW and MD.  
We also performed the optimization with NGD updates but found out that if the means were not initialized within a very small ball (increasing the chances of missing some modes in the target), the variance $\epsilon$ estimated by NGD could become negative, which made the optimization difficult in practice.  In contrast, our scheme guarantees the positivity of $\epsilon$  by construction.

\section{Conclusion}
Mixtures of isotropic Gaussians provide a simplified yet powerful tool in variational inference, balancing expressivity for multimodal target distributions with computational and memory efficiency.
We presented two optimization schemes, that implement joint optimization on the means through gradient descent, and on the variances through adapted geometries for the space of variance matrices, such as the Bures or Von Neumann entropy ones, guaranteeing that they remain positive. 
Our numerical experiments validate their relevance for different types of target posterior distributions. 
 Future work include establishing more theoretical guarantees regarding our schemes and mixtures of isotropic Gaussians. For instance, comparing the  approximation error of full covariance mixtures versus isotropic ones, would be helpful to understand why we observe empirically a great computational cost gain for a very modest increase of the KL loss (e.g., IBW vs BW). 
Then, studying optimization guarantees for these schemes is of interest. %

\paragraph{Acknowledgements}
M P-T and AK  thank Apple for their academic support in the form of a research funding.  ML acknowledges support from the French Defence procurement agency (DGA). This project was also funded by the European Union (ERC, Optinfinite, 101201229). The views and opinions expressed are, however, of the author(s) only and do not necessarily reflect those of the European Union or the European Research Council Executive Agency. Neither the European Union nor the granting authority can be held responsible for them.

\bibliographystyle{plainnat}
\bibliography{biblio}

\appendix

\onecolumn

\section{Appendix}

\subsection{Geometric structure of Isotropic Gaussians space}\label{sec:geo_structure_ig}
We prove here the stability of the isotropic Gaussian model along the Bures-Wasserstein geodesics. We recall that the Bures-Wasserstein space is the space of non-degenerate Gaussian distributions equipped with the Wasserstein-2 metric. %
\begin{proposition}\label{prop:geo-cvx} 
The space of isotropic Gaussians equipped with the Bures-Wasserstein distance is a geodesically convex subset of the Bures-Wasserstein space, which is itself a geodesically convex subspace of the Wasserstein space $(\cP_2(\R^d),\W_2)$.
\end{proposition}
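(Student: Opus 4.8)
The plan is to produce the $\W_2$-geodesic between two points explicitly, via McCann's displacement interpolation, and to check by direct computation that it stays inside the prescribed set. Recall that for $\mu_0,\mu_1\in\cP_2(\R^d)$ with $\mu_0$ absolutely continuous, Brenier's theorem provides a unique optimal transport map $T$ pushing $\mu_0$ onto $\mu_1$, and the unique constant-speed geodesic is $t\mapsto \mu_t := \big((1-t)\Id + tT\big)_{\#}\mu_0$ for $t\in[0,1]$. Since every non-degenerate Gaussian is absolutely continuous, this both identifies the geodesic and makes the notion of geodesic convexity unambiguous (the geodesic is unique, so "the geodesic lies in the set" is well posed).

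First I would establish geodesic convexity of the Bures--Wasserstein space inside $(\cP_2(\R^d),\W_2)$. For non-degenerate Gaussians $\mu_0=\cN(m_0,\Sigma_0)$ and $\mu_1=\cN(m_1,\Sigma_1)$, the optimal map is the affine map $T(x)=m_1+A(x-m_0)$ with the symmetric positive-definite matrix $A=\Sigma_0^{-1/2}\big(\Sigma_0^{1/2}\Sigma_1\Sigma_0^{1/2}\big)^{1/2}\Sigma_0^{-1/2}$, a classical fact for Gaussians. For each $t$ the interpolation map $x\mapsto(1-t)x+tT(x)$ is again affine, and the pushforward of a Gaussian by an affine map is Gaussian; hence $\mu_t$ is a non-degenerate Gaussian for all $t\in[0,1]$, so the whole geodesic lies in the Gaussian family.

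Second I would specialize to the isotropic case. Setting $\Sigma_0=\epsilon_0\Idd$ and $\Sigma_1=\epsilon_1\Idd$, a one-line computation reduces $A$ to the scalar matrix $A=\sqrt{\epsilon_1/\epsilon_0}\,\Idd$. Consequently the linear part of $(1-t)\Id+tT$ is $c_t\Idd$ with $c_t=(1-t)+t\sqrt{\epsilon_1/\epsilon_0}$, and pushing $\cN(m_0,\epsilon_0\Idd)$ forward by an affine map with scalar linear part $c_t$ yields the isotropic Gaussian $\cN\big(m_t,\,c_t^2\epsilon_0\Idd\big)$, where $m_t=(1-t)m_0+tm_1$ and $\sqrt{\epsilon_t}=(1-t)\sqrt{\epsilon_0}+t\sqrt{\epsilon_1}$. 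Thus the geodesic joining two elements of $\IG$ stays in $\IG$, and the restriction of $\W_2$ to $\IG$ coincides with the $\BW$ distance of Eq.~(\ref{eq:w2_ig}), as required.

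The computations here are routine; the only genuinely delicate points are invoking Brenier/McCann to guarantee that the displacement interpolation is the unique geodesic, and recalling the explicit affine form of the Gaussian optimal map. I expect the main obstacle to be purely expository: stating the formula for $A$ cleanly and verifying the isotropic reduction $A=\sqrt{\epsilon_1/\epsilon_0}\,\Idd$, after which closure of $\IG$ under the interpolation is immediate since affine pushforwards with scalar linear part preserve isotropy.
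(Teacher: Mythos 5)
Your proof is correct and follows essentially the same route as the paper's: both identify the $\W_2$-geodesic via the explicit affine optimal transport map $T(x)=m_1+A(x-m_0)$ between Gaussians, observe that affine pushforwards preserve Gaussianity, and then check that in the isotropic case $A=\sqrt{\epsilon_1/\epsilon_0}\,\Idd$ forces the interpolant to remain isotropic with $\sqrt{\epsilon_t}=(1-t)\sqrt{\epsilon_0}+t\sqrt{\epsilon_1}$. The only difference is cosmetic: you spell out the Brenier/McCann uniqueness point and the Gaussian-family convexity step that the paper states as known, which makes your write-up slightly more self-contained but not a different argument.
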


\begin{proof}
Let \( p = \mathcal{N}(m_p, \epsilon_p \Idd) \) and \( q = \mathcal{N}(m_q, \epsilon_q \Idd) \) be two isotropic Gaussian distributions. 
Since $p$ is absolutely continuous, the Wasserstein-2 geodesic, (which is also a Bures-Wasserstein geodesic) between \( p \) and \( q \) is given by the pushforward measure:
\begin{equation*}
\mu_t = \left( (1 - t) \Id + t T \right)_{\#} p, \quad t \in [0,1],
\end{equation*}
where $\Id$ is the identity map and $T$ is the optimal transport map between $p$ and $q$. Denoting $\Sigma_p = \epsilon_p \Id$ and $\Sigma_q= \epsilon_q \Id$, this optimal transport map \( T \)  is the affine map:
\begin{equation*}
T(x) = m_q + A (x - m_p), \text{ where } A = \Sigma_p^{-1/2} \left( \Sigma_p^{1/2} \Sigma_q \Sigma_p^{1/2} \right)^{1/2} \Sigma_p^{-1/2}.
\end{equation*}
$A$ is called the Bures map and satisfies $\Sigma_q=A \Sigma_p A$, which can be easily verified from the definition above.  
Since the map is linear, it preserves densities, ensuring that the transported measure $\mu_t$ remains Gaussian. The mean and covariance parameters  evolve along the Bures-Wasserstein geodesic between $p$ and $q$ according to the equations:   %
\begin{align*}
m_t &= (1 - t) m_p + t m_q \\%= \mathbb{E}_p [x_t(x)], \\
\Sigma_t &= ((1 - t)\Idd+tA) \Sigma_p ((1 - t)\Idd+tA).
\end{align*}
Since both \( p \) and \( q \) are isotropic, the transport map is:
\begin{equation*}
T(x) = m_q + a (x - m_p), \text{ where } a = \left(\frac{\epsilon_q}{\epsilon_p}\right)^{1/2}.
\end{equation*}
Then, the covariance $\Sigma_t$ evolves according to:
\begin{align*}
\Sigma_t = ((1 - t)\Idd+ta \Idd) \epsilon_p ((1 - t)\Idd+ta \Idd)=((1 - t)+ta)^2 \epsilon_p\Idd,
\end{align*} 
which is clearly isotropic.  
Hence, the interpolated distribution \( \mu_t \) remains an isotropic Gaussian for all \( t \in [0,1] \).
Thus, the space of isotropic Gaussian distributions is geodesically convex in the Bures-Wasserstein space. Since the Bures-Wasserstein space is itself a geodesically convex subspace of the Wasserstein space, we complete the proof.
\end{proof}

\subsection{Proof of \Cref{prop:linear_cv_continuous}}\label{sec:proof_linear_cv_continuous}

\subsubsection{Background on Wasserstein gradient flows}\label{sec:background_wgf}

Let \(\mathcal{F}: \mathcal{P}_2(\mathbb{R}^d) \to \mathbb{R} \cup \{+\infty\}\) be a functional. We say that \(\mathcal{F}\) is \(\alpha\)-convex along Wasserstein-2 geodesics if for any two probability measures \(\mu_0, \mu_1 \in \mathcal{P}_2(\mathbb{R}^d)\) and any geodesic \(\{\mu_t\}_{t \in [0,1]}\) in the Wasserstein-2 space connecting \(\mu_0\) and \(\mu_1\), we have
\[
\mathcal{F}(\mu_t) \leq (1-t) \mathcal{F}(\mu_0) + t \mathcal{F}(\mu_1) - \frac{\alpha}{2} t(1-t) \W_2^2(\mu_0, \mu_1), \quad \forall t \in [0,1].
\]

A \emph{Wasserstein gradient flow} of $\mathcal{F}$ is a solution $(\mu_t)_{t \in (0, T)}$, $T > 0$, of the continuity equation
\begin{equation*}
\frac{\partial \mu_t}{\partial t} + \nabla \cdot (\mu_t v_t) = 0
\end{equation*}
that holds in the distributional sense, where $v_t$ is a subgradient of $\mathcal{F}$ at $\mu_t$~\citep[Definition 10.1.1]{ambrosio2008gradient}. Among the possible processes $ (v_t)_t $, one has a minimal $ L^2(\mu_t) $ norm and is called the velocity field of $ (\mu_t)_t $.
In a Riemannian interpretation of the Wasserstein space \citep{Otto01t},
this minimality condition can be characterized by $v_t$ belonging to the tangent space to $\cP_2(\R^d)$ at $\mu_t$ denoted $ T_{\mu_t} \cP_2(\R^d) $, which is a subset of $ L^2(\mu_t)$, the Hilbert space of square integrable functions with respect to $\mu_t$, whose inner product is denoted $\ps{\cdot, \cdot}_{\mu_t}$. The Wasserstein gradient is defined as this unique element, and is denoted  $\nabla_{W_2}\mathcal{F}(\mu_t)$. In particular, if $\mu\in \cP_2(\R^d)$ is absolutely continuous with respect to the Lebesgue measure, with density in $ C^1(\R^d)$ and such that $\mathcal{F}(\mu)<\infty$, $
\nabla_{W_2}\mathcal{F}(\mu)(x)=\nabla\mathcal{F}'(\mu)(x)$ for $\mu$-a.e.  $x \in \R^d$~\citep[Lemma 10.4.1]{ambrosio2008gradient}, where $\mathcal{F}'(\mu)$ denotes
the first variation of $\mathcal{F}$ evaluated at $\mu$, i.e. (if it exists) the unique function $\mathcal{F}'(\mu)
:\R^d \rightarrow \R$ s.t.\
\begin{equation*}\label{eq:first_var}
\lim_{h \rightarrow 0}\frac{1}{h}(\mathcal{F}(\mu+h  \xi) -\mathcal{F}(\mu))=\int
\mathcal{F}'(\mu)(x)
d \xi(x)
 \end{equation*}
for all $\xi=\nu-\mu$, where $\nu \in \cP_2(\R^d).$

\subsubsection{Proof}

The proof relies on tools on the Wasserstein geometry and calculus introduced above, and is a direct application of the one of \citep[Corollary 3]{lambert2022variational}, yet we state it for completeness. When $\nabla^2 V\succeq \alpha \Idd$, $\cF:\mu\mapsto \KL(\mu \vert \pi)$ is $\alpha$-convex on $\cP_2(\R^d)$~\citep[Theorem 17.15]{villani2009optimal}. Let us denote $p^\star$ the minimum of this function and recall we denote $\BW$ the Bures-Wasserstein metric on the manifold $(\IG,\BW)$. We consider the following gradient flow:
\begin{align*}
    \frac{\partial p_t}{\partial t} = \operatorname{div} \left( p_t \nabla_{\W_2} \mathcal{F}(p_t) \right)\text{ with the initial condition } 
    p_0 = p_0.
\end{align*}
We first want to show that the solution of this problem is unique. 
Let $(p_t)_t$ and $(q_t)_t$ be two solutions of the above gradient flow. Then, using differential calculus in the Wasserstein space
and the chain rule, we have
\begin{align*}
    \partial_t \BW^2(p_t, q_t) =  2 \,\langle \log_{p_t}(q_t), \nabla_{\W_2} \cF(p_t) \rangle_{p_t} + 2\, \langle \log_{q_t}(p_t), \nabla_{\W_2} \cF(q_t) \rangle_{q_t}\,,
\end{align*}
where $\nabla_{\W_2}\cF(p)$ denotes the Wasserstein gradient at $p\in \cP_2(\R^d)$ and $\log_{p_t}(q_t)= T -\Id \in L^2(p_t)$, where $T$ is the optimal transport map from $p_t$ to $q_t$. 
Moreover since $\mathcal{F}$ is $\alpha$ convex, we can write $\forall p, q \in \cP_2(\R^d)$ : 
\[
\mathcal{F}(q) \geq \mathcal{F}(p) + \langle \nabla_{W_2} \mathcal{F}(p), \log_p(q) \rangle - \frac{\alpha}{2} \BW^2(p, q).
\]
Thus we can write 
\begin{align*}
   \partial_t \BW^2(p_t, q_t) \leq- 2\alpha \BW^2(p_t, q_t).
\end{align*}
Hence, by Gr\"onwall's lemma, we obtain
\begin{align*}
    \BW^2(p_t,q_t)
    \le \exp(-2\alpha t) \, \BW^2(p_0, q_0)\,.
\end{align*}
Since both $(p_t)_t$ and $(q_t)_t$ are solution of the gradient flow, $p_0 = q_0$ and it implies that $\forall t \in [0, 1], p_t = q_t$. This proves the uniqueness of the solution.

Moreover, if $\alpha > 0$, we can set $q_t = p^\star$ for all $t\ge 0$ to deduce exponential contraction of the gradient flow to the minimizer $p^\star$.
Observe that by definition of the gradient flow, we have on the one hand that
\begin{align}\label{eq:time_deriv_gf}
    \partial_t \cF(p_t)
    &=\ps{\nabla_{\W_2} \cF(p_t),-\nabla_{\W_2} \cF(p_t)}_{p_t} = -\norm{\nabla \cF(p_t)}_{p_t}^2\,.
\end{align}
On the other hand, if $\alpha > 0$, the convexity inequality and Young's inequality respectively, yield
\begin{align*}
    0 = \cF(p^\star)
    &\ge \cF(p) + \langle \nabla_{\W_2} \cF(p), \log_p(p^\star) \rangle_p + \frac{\alpha}{2} \, \BW^2(p, p^\star) %
    \\
    &\ge \cF(p) - \frac{1}{2\alpha} \, \norm{\nabla_{\W_2} \cF(p)}_p^2 - \frac{\alpha}{2} \, {\underbrace{\norm{\log_p(p^\star)}_p^2}_{= \BW^2(p, p^\star)}} + {\frac{\alpha}{2} \, \BW^2(p, p^\star)} %
\end{align*}
and hence $\norm{\nabla_{\W_2} \cF(p)}^2 \ge 2\alpha \, \cF(p)$. Substituting this into Eq~(\ref{eq:time_deriv_gf}) and applying Gr\"onwall's inequality again, we deduce
\begin{align*}
    \cF(p_t)
    &\le \exp(-2\alpha t) \, \cF(p_0)\,.
\end{align*}

\subsection{JKO scheme for isotropic Gaussians}\label{sec:bures_update_ode}
In this section, we derive the continuous equations for the isotropic Bures-Wasserstein gradient flow. Starting from the JKO scheme of Eq~(\ref{eq:JKO}), we constrain the solution to lie in the space of isotropic Gaussians $\IG$. We follow the same method than \citep[Appendix A]{lambert2022variational}.

Let us recall the JKO scheme \citep{Jordan98}. Starting from $p_0 = \mathcal{N}(m_0, \epsilon_0 \Idd)$ at time $k=0$ we look for the solution $p = \mathcal{N}(m, \epsilon \Idd)$ of:
\begin{equation}\label{JKOrecall}
    p_{k+1} = \underset{p \in \IG}{\arg\min} \left\{ \KL(p \vert \pi) + \frac{1}{2 \gamma} \BW^2(p, p_k)  \right\}.
\end{equation}
The gradient of the KL with respect to the variance parameters is given by  Eq~(\ref{eq:gradGaussianCov}):
$$\nabla_\epsilon \KL(p| \pi)=-\frac{d}{2  \epsilon} -\frac{1}{2}\Tr \E_p[\nabla^2 \log \pi]=-\frac{d}{2  \epsilon} +\frac{1}{2}\Tr \E_p[\nabla^2 V].$$ Then, for two isotropic Gaussian distributions 
\( p = \mathcal{N}(m, \epsilon \Idd) \) and \( p_k = \mathcal{N}(m_k, \epsilon_k\Idd) \) we have\footnote{
Recall that for two general Gaussians $p=\mathcal{N}(m, \Sigma)$, $p_k = \mathcal{N}(m_k, \Sigma_k)$, we have $\BW^2(p, p_k) = \|m - m_k\|_2^2 +\Tr\left(\Sigma + \Sigma_k - 2\left(\Sigma^{1/2} \Sigma_k \Sigma^{1/2}\right)^{1/2}\right).$}
\begin{equation*}\label{eq:BWformula}
\BW^2(p, p_k) = \|m - m_k\|_2^2 + d\left( \epsilon + \epsilon_k - 2 \sqrt{\epsilon \epsilon_k}\right),
\end{equation*}
and
\begin{equation*}\label{eq:gradBWeps}
\nabla_\epsilon \BW^2(p, p_k) = d\left( 1 - \sqrt{\frac{\epsilon_k}{\epsilon}} \right).
\end{equation*}
Hence, the first order condition on $\epsilon$ of (\ref{JKOrecall}) yield:
\begin{align*}
d\left( 1 - \sqrt{\frac{\epsilon_k}{\epsilon}}\right) =  \frac{d\gamma}{\epsilon} - \gamma\Tr \mathbb{E}_p \left [\nabla^2 V \right] \nonumber%
&\Leftrightarrow %
\epsilon_k = \epsilon \left(1- \frac{\gamma}{\epsilon} + \frac{\gamma}{d}\Tr \mathbb{E}_p \left [\nabla^2 V \right] \right)^2 
\\ &
\Leftrightarrow %
\epsilon_k = \epsilon \left(1- \gamma \left(\frac{1}{\epsilon} - \frac{1}{d \epsilon}\mathbb{E}_p \left [( \cdot - m)^T \nabla V \right] \right) \right)^2. \numberthis\label{DiscreteUpdate}
\end{align*}
Developing equation Eq~(\ref{DiscreteUpdate}) at first order in $\gamma$ we obtain:
\begin{align*}
\epsilon_k &= \epsilon \left(1-  \frac{2\gamma}{\epsilon} \left(1 - \frac{1}{d} \mathbb{E}_p \left [( \cdot - m)^T \nabla V \right] \right) + \mathrm{O}(\gamma^2) \right)
\\
\Leftrightarrow%
\frac{\epsilon- \epsilon_k}{\gamma} &= 2 - \frac{2}{d} \mathbb{E}_p \left [( \cdot - m)^T \nabla V \right] + \mathrm{O}(\gamma). 
\end{align*}
Taking the limit $\gamma \rightarrow 0$ yields the differential equation:
\begin{equation*}
    \Dot{\epsilon} = 2 - \frac{2}{d} \mathbb{E}_{p} \left [ ( \cdot - m)^T \nabla V\right].
\end{equation*}

For the first order condition on the mean parameter $m$ of (\ref{JKOrecall}), using the gradient of the KL w.r.t. the mean given Eq~(\ref{eq:gradGaussianMean}), we obtain:
\begin{align} \E_p\left[\nabla \log \left(\frac{p}{\pi}\right)\right]+\frac{1}{\gamma} (m-m_k) =0. \label{meanDiscUpdate}
\end{align} 
Since $\E_p\left[\nabla \log p \right]=0$ we obtain, at the limit $\gamma \rightarrow 0$:  
\begin{equation*}
    \Dot{m}=\mathbb{E}_{p} \left [\nabla \log \pi \right]=-\mathbb{E}_{p} \left [\nabla V \right].
\end{equation*}

Inspecting Eq~(\ref{DiscreteUpdate}–\ref{meanDiscUpdate}), we observe that solving the JKO scheme yields an implicit discrete-time update, where the expectations are evaluated under the unknown distribution $p$. We now derive the explicit form of this update, starting from the formulation in Eq~(\ref{eq:ibwig}).

\subsection{Forward Euler scheme for isotropic Gaussians}\label{sec:bures_update_disc}

\textbf{Derivations of the updates. }
We now derive the updates given by the scheme Eq~(\ref{eq:ibwig}).
The first order condition on $\epsilon$ is given by:
\begin{align*}
\frac{1}{2}d\left( 1 - \sqrt{\frac{\epsilon_k}{\epsilon}}\right) = -\gamma \nabla_\epsilon F(m_k,\epsilon_k) \Leftrightarrow \epsilon = 
\epsilon_k
\left(1 +\frac{2\gamma}{d} \nabla_\epsilon F(m_k,\epsilon_k) \right)^{-2} .
\end{align*}

Using the Taylor expansion for small step $\gamma$ given by $(1+x)^{-1}=1-x + o(x)$
we obtain at first order  the explicit update:
\begin{align*}
\epsilon &= (1-\frac{ 2 \gamma}{d} \nabla_{\epsilon} F(m_k, \epsilon_k))^2 \epsilon_k.
\end{align*}
The first-order condition on $m$ gives the explicit update:
\begin{align*}
m &= m_k-\gamma \nabla_{m} F(m_k, \epsilon_k).
\end{align*}

\textbf{Riemannian interpretation.} The latter scheme can be identified to Riemannian gradient descent, on the isotropic Bures-Wasserstein space, i.e. the space $\IG$ of isotropic Gaussians equipped with the Bures-Wasserstein metric, that we will denote $\iBW =  (\IG,\BW)$.  To achieve this, we first identify the local tangent space of the isotropic Bures-Wasserstein space. The direction of the tangent vector is computed by projecting the Wasserstein-2 gradient of the KL objective onto this tangent space (see ~\Cref{sec:background_wgf} for the definition). We then follow this projected gradient with a step size $\gamma$. We can then retract back to the isotropic Bures-Wasserstein manifold using an exponential map.  We now detail this approach.

Let $\cF:\cP_2(\R^d)\to \R$ a functional. 
The isotropic Bures-Wasserstein Gradient of $\cF$ at $p\in \IG$, denoted $\nabla_{\iBW}\mathcal{F}(p)$, is the projection of its Wasserstein gradient onto the tangent space to $\iBW$ at $p$. If $p = \cN(m_p, \epsilon_p \Idd)$, this tangent space writes:
\[
T_p\iBW(\mathbb{R}^d) = \{ x \mapsto a + s(x-m_p) \vert a \in \mathbb{R}^d, s \in \mathbb{R}\},
\]
which can be identified with the pair $(a,s) \in \mathbb{R}^d \times \mathbb{R} $.
Thus, 
\[
\nabla_{\iBW} \mathcal{F}(p) = \proj_{T_p{\iBW}(\mathbb{R}^d)} \nabla_{\W_2} \mathcal{F}(p) = \argmin_{w \in T_p{\iBW}(\mathbb{R}^d)} \| w - \nabla_{\W_2} \mathcal{F}(p) \|_p^2.
\]
The first conditions in $a\in \R^d$ and $s\in \R$ of this problem yield:
\[
a = \E_p \left[ \nabla_{\W_2} \mathcal{F}(p)
\right] \quad \text{and} \quad s = \frac{1}{d \epsilon_p} \E_p \left[ (\cdot-m_p)^T\nabla_{\W_2} \mathcal{F}(p)\right].
\]
Indeed, 
\begin{align*}
     \nabla_a \int \| a + s(x-m_p) - \nabla_{\W_2} \mathcal{F}(p)&(x)\|^2 dp(x) = 0 \\
     &\iff \int 2 ( a + s(x-m_p) - \nabla_{\W_2} \mathcal{F}(p)(x) )  dp(x) = 0 \\
     &\iff a = \E_p \left[ \nabla_{\W_2} \mathcal{F}(p)\right],
\end{align*}
and 
\begin{align*}
     \nabla_s \int \| a + s(x-m_p)& - \nabla_{W_2} \mathcal{F}(p)(x)\|^2 dp(x) = 0 \\
     &\iff \int 2 (x-m_p)^T( a + s(x-m_p) - \nabla_{\W_2} \mathcal{F}(p)(x) )  dp(x) = 0 \\
     &\iff s \epsilon_p d  - \int (x-m_p)^T \nabla_{\W_2} \mathcal{F}(p)(x) dp(x) = 0 \\
    &\iff s  = \frac{1}{d \epsilon_p} \E_p \left[ (x-m_p)^T\nabla_{\W_2} \mathcal{F}(p)\right].
\end{align*}
Back to our problem with $\mathcal{F}(p) = \KL(p\vert \pi)$, using Eq~(\ref{eq:gradGaussianMean}) and (\ref{eq:gradGaussianCov}) and that $\nabla_{\W_2}\cF(p)=\nabla \log\left(\frac{p}{\pi}\right)$ (see ~\Cref{sec:background_wgf})  we have that 
\[
a = \nabla_m \KL(p\vert \pi) =\nabla_m F(m_k,\epsilon_k)\quad \text{and} \quad s = \frac{2}{d} \nabla_\epsilon \KL(p\vert \pi)=\frac{2}{d}\nabla_\epsilon F(m_k,\epsilon_k).
\]
We can follow the direction of the gradient from the tangent space back to the isotropic Bures-Wasserstein manifold using an exponential map which is available in closed form 
\begin{align}
    \exp_p(a, s)= \cN\big(m_p + a, \; (1+s)^2  \epsilon_p \Idd )\big)\,, \label{expMap}
\end{align}
where we have adapted the exponential map formula for the Bures-Wasserstein space~\citep[Appendix B.3]{lambert2022variational}, written as 
$    \exp_p(a, S)= \cN\big(m_p + a, \; (S+I) \, \Sigma_p \, (S+I)\big)\,$, to the case of isotropic Gaussians. 
We can then construct a discrete update based as follow: we compute the $\iBW$ gradient of $\cF(p)=\KL(p|\pi)$ multiplied a step size $\gamma$, and map it back onto the manifold using this exponential map.  Starting from $p_k=\cN(m_k,\epsilon_k\Idd)$, this  discrete time update is:
$$ p_{k+1}=\exp_{p_k}(-\gamma \nabla_{\iBW} \mathcal{F}(p_k)) =\exp_{p_k}(-\gamma \nabla_m F(m_k,\epsilon_k), - \frac{2 \gamma }{d} \nabla_\epsilon F(m_k,\epsilon_k)),$$
which gives the update on the mean and variance parameters:
\begin{align*}
     m_{k+1}&=m_k-\gamma \nabla_m F(m_k, \epsilon_k),\\
\epsilon_{k+1}&= (1-\frac{ 2 \gamma}{d} \nabla_{\epsilon} F(m_k, \epsilon_k))^2 \epsilon_k.
\end{align*}

\subsection{Background on Mirror Descent}\label{sec:background_md}

Mirror descent is an optimization algorithm that was introduced to solve constrained convex problems~\citep{nemirovskij1983problem}, and that uses in the optimization updates a cost (or ``geometry") that is a Bregman divergence~\citep{bregman1967relaxation}
, whose definition is given below. 

\begin{definition}
Let $\phi:\cX\rightarrow \R$ a strictly convex and differentiable functional on a convex set $\cX$, referred to as a Bregman potential. The $\phi$-Bregman divergence is defined for any %
$x,y \in \cX$ 
by:
\begin{equation*}
\label{eq:breg_probas}
      B_{\phi}(y|x)=\phi(y)-\phi(x)-
     \ssp{\nabla \phi(x), y-x}.
\end{equation*}
\end{definition}
Let $\gamma$ a fixed step-size and $G$ an objective function on $\mathcal{X}$. 
Mirror descent with $\phi$-Bregman divergence writes at each step $k\ge 0$ as:
\begin{equation*}\label{eq:prox_scheme}
 x_{k+1} =  \argmin_{x\in \cX} \ \ssp{\nabla G( x_k), x - x_k}+\frac{1}{\gamma}B_{\phi}(x|x_k).
\end{equation*}
Writing the first order conditions of the problem above, mirror descent writes
\begin{equation*}\label{eq:md_general}
   x_{k+1}= \nabla \phi^*( \nabla \phi(x_k) - \gamma \nabla G(x_k)),
\end{equation*}
where $\phi^*(x)=\sup_{y\in \mathcal{X}} \ssp{y,x}-\phi(x)$ is the convex conjugate of $\phi$, and $\nabla \phi^*=(\nabla \phi)^{-1}$. 
Note that, if $\mathcal{X}$ is a subset of $\R^d$ and that $\phi(x)=\frac{1}{2}\|x\|^2$, that $B_{\phi}(x,y)=\frac{1}{2}\|x-y\|^2$ and mirror descent coincides with gradient descent. 
Yet, this scheme is more general and is useful for constrained optimization, as if one chooses $\phi$ wisely, the inverse $\nabla \phi^*$ of the so-called ``mirror map" $\nabla \phi$ maps the iterates into the domain of $\phi$.

\subsection{Entropic mirror descent  updates}\label{sec:emd_ig_updates}
\textbf{Derivations of the updates (Gaussian case).}
We now derive the updates given by the scheme Eq~(\ref{eq:mdig}).
The first order condition on $\epsilon$ is given by:
\begin{align*}
\frac{1}{2} d\log \frac{\epsilon}{\epsilon_k} = -\gamma \nabla_\epsilon F(m_k,\epsilon_k) \Leftrightarrow \epsilon = 
\epsilon_k \exp\left(- \frac{2\gamma}{d}\nabla_\epsilon F(m_k,\epsilon_k)  \right) .
\end{align*}

The first order conditions on the means gives the same explicit update as Eq~(\ref{eq:ibwig}): 
\begin{align*}
    m = m_k - \gamma \nabla_mF(m_k,\epsilon_k)
\end{align*}

\begin{remark} Note that Eq~(\ref{eq:mdig}) involves a factor $1/2$ in front of the $\KL$ penalization term. Our motivation is that in Eq~(\ref{eq:ibwig}), the Bures distance $\BW^2(\epsilon\Id , \epsilon_k \Id) = d \left(\sqrt{\epsilon} - \sqrt{\epsilon_k} \right)^2$ is a distance between the square root of the matrices and thus its derivative w.r.t $\epsilon$ leads to a term implying only the square root of the matrices, or in our isotropic Gaussian setting the square root of the scale isotropic value: $\nabla_\epsilon\BW^2(\epsilon\Id , \epsilon_k \Id) = d \left(1+\sqrt{\frac{\epsilon_k}{\epsilon}}\right)$. We find out it was judicious to have a derivative for the $\KLB$ implying also the square roots of the isotropic coefficient, i.e., $\nabla_\epsilon \KLB(\epsilon \Id\vert \epsilon_k \Id) = \frac{d}{2}\log(\frac{\epsilon}{\epsilon_k}) = d\log(\sqrt{\frac{\epsilon}{\epsilon_k}})$.
\end{remark}

\textbf{Mixture Case.}
In the mixture of Gaussians case, we derive the following updates from Eq~(\ref{eq:md_mixture}). The first order condition on $\epsilon^i$ for $i = 1,\cdots, N$ gives:
\begin{align*}
    \frac{1}{2N} d\log \frac{\epsilon^j}{\epsilon^i_k} = -\gamma \nabla_{\epsilon^i} F([m_k^j,\epsilon_k^j]_{j=1}^N) \Leftrightarrow \epsilon^i = 
\epsilon_k^i \exp\left(- \frac{2N\gamma}{d}\nabla_{\epsilon^i} F([m_k^j,\epsilon_k^j]_{j=1}^N)  \right),
\end{align*}
and for the means:
\begin{align*}
    m^i = m_k^i - N\gamma \nabla_{m^i}F([m_k^j,\epsilon_k^j]_{j=1}^N).
\end{align*}
\subsection{Proof of \Cref{prop:gradient_eps}} \label{sec:proof_prop_gradient_eps}

In this section, we aim to compute the gradients of the $\KL(\cdot|\pi)$ objective function defined Eq~(\ref{eq:objective_loss}). To achieve this, a fundamental tool are  Stein's identities \citep{Stein} which relate the derivatives with respect to the parameters to the derivatives of the integrand.

\subsubsection{Stein's identities for isotropic Gaussians}

\begin{lemma}\label{lem:grad_hess}
    Let $p$ be an isotropic Gaussian $\cN(m,\epsilon \Idd)$ and $p(x)$ its density for $x \in \R^d$. Assume that $\pi\in C^1(\R^d)$ and $\lim_{\|x\| \to \infty} p(x) \log \pi(x) =0$. We have 
\begin{align} 
   \nabla_m \KL(p| \pi)&= \E_p\left[\nabla \log \left(\frac{p}{\pi}\right)\right]\label{eq:gradGaussianMean} \\
   \nabla_\epsilon \KL(p| \pi)&=\frac{1}{2 \epsilon}\E_p[(\cdot-m)^T\nabla \log\left( \frac{p}{\pi}\right)].\label{eq:gradGaussianCov} 
\end{align} 
Note that the latter equation does not require to compute the Hessian (of dimension $d \times d$), which can be computationally more efficient than an alternative formula given in Eq~(\ref{eq:gradGaussianCov2}). 
\end{lemma}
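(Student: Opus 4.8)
The plan is to differentiate $\KL(p|\pi)=\int p(x)\log\frac{p(x)}{\pi(x)}\,dx$ directly under the integral sign with respect to each parameter, and then to transfer the derivatives off the Gaussian density $p$ and onto the integrand $\log(p/\pi)$ by integration by parts. First I would record two elementary facts about $p(x)=\cN(x;m,\epsilon\Idd)$: its parameter derivatives coincide with spatial differential operators, namely $\nabla_m p = -\nabla_x p$ (translation invariance) and $\nabla_\epsilon p = \tfrac12\Delta_x p$ (the Gaussian solves the heat equation in its variance). I would also use the pointwise identities $\nabla_x p = -\frac{x-m}{\epsilon}\,p$ and $p\,\nabla_\theta \log p = \nabla_\theta p$ for $\theta\in\{m,\epsilon\}$.

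The next step is a normalization cancellation valid for both parameters. Writing $\nabla_\theta\bigl[p\log(p/\pi)\bigr] = (\nabla_\theta p)\log(p/\pi) + p\,\nabla_\theta\log(p/\pi)$ and using that $\pi$ does not depend on $(m,\epsilon)$, so that $\nabla_\theta\log(p/\pi)=\nabla_\theta\log p$, the second contribution integrates to $\int \nabla_\theta p\,dx = \nabla_\theta\int p\,dx = 0$. This reduces both gradients to $\nabla_\theta\KL(p|\pi) = \int (\nabla_\theta p)\,\log(p/\pi)\,dx$; in other words the entropy terms drop out and only the derivative of the density tested against $\log(p/\pi)$ survives.

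I would then substitute the two differential identities. For the mean, $\nabla_m\KL = -\int (\nabla_x p)\log(p/\pi)\,dx$, and a single integration by parts moves $\nabla_x$ onto $\log(p/\pi)$, giving $\E_p[\nabla\log(p/\pi)]$; the boundary term is exactly $p\log(p/\pi)$ at infinity, which vanishes since $p\log p\to 0$ by Gaussian decay and $p\log\pi\to 0$ by hypothesis. For the variance, $\nabla_\epsilon\KL = \tfrac12\int (\Delta_x p)\log(p/\pi)\,dx$; integrating by parts once yields $-\tfrac12\int \nabla_x p\cdot\nabla_x\log(p/\pi)\,dx$, and plugging in $\nabla_x p = -\frac{x-m}{\epsilon}p$ produces precisely $\frac{1}{2\epsilon}\E_p[(\cdot-m)^T\nabla\log(p/\pi)]$, the claimed formula. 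Integrating by parts a second time instead returns the Laplacian form $\tfrac12\E_p[\Delta\log(p/\pi)] = -\frac{d}{2\epsilon}+\frac12\Tr\E_p[\nabla^2 V]$ used in the JKO derivation, which explains the remark that the stated identity avoids assembling the $d\times d$ Hessian.

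The main obstacle is purely analytic: justifying the interchange of differentiation and integration and the vanishing of the boundary terms. The $C^1$ assumption on $\pi$ guarantees that $\nabla\log\pi$ exists, and the decay assumption $\lim_{\|x\|\to\infty}p(x)\log\pi(x)=0$ is exactly what kills the boundary term in the mean computation; for the variance one additionally needs the first-order boundary term $\nabla_x p\cdot\log(p/\pi)$ to vanish, which holds because $\nabla_x p$ carries the extra Gaussian-damped factor $\frac{x-m}{\epsilon}$, so the stated decay together with the rapid decay of $p$ and its derivatives suffices after dominating the integrands to apply Leibniz's rule. I expect no algebraic difficulty beyond these standard tail estimates.
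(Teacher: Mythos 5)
Your proposal is correct and follows essentially the same route as the paper's proof: the same cancellation of the $p\,\nabla_\theta\log p$ term via the zero-mean score, the same identities $\nabla_m p=-\nabla_x p$ and $\nabla_\epsilon p=\tfrac12\Tr\nabla_x^2 p$, and the same single integration by parts yielding each formula (with the second integration by parts recovering the Hessian variant, exactly as in the paper's Eq.~(\ref{eq:gradGaussianCov2})). Your treatment of the boundary terms and the interchange of differentiation and integration is in fact slightly more explicit than the paper's.
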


\begin{proof}
Recall that 
$p(x; \theta) = (2\pi \epsilon)^{-d/2} \exp\left(-\frac{\| x - m \|^2}{2\epsilon} \right)$ and $\KL(p|\pi)=\E_p \left[\log\left(\frac{p}{\pi}\right)\right]$. First, note that for $\theta=(m,\epsilon)$:
\begin{equation*}
\nabla_\theta \int \log p(x
;\theta)  p(x;\theta)  dx = \int  \log p(x;\theta) \nabla_\theta p(x;\theta)  dx, %
\end{equation*}
which comes from the fact that the expectation of the score function is null, namely  $\int \nabla_\theta \log p(x;\theta) p(x;\theta) dx=0$. 
Hence,
\begin{align*}
\nabla_m \E_p \left[\log\left(\frac{p}{\pi}\right)\right]= \int \log\left(\frac{p}{\pi}(x)\right) \nabla_m p(x) dx =  \int \nabla_x\log\left(\frac{p}{\pi}(x)\right) p(x) dx,
\end{align*}
where we used $\nabla_m p(x)= -\nabla_x p(x)$ and an integration by parts. 
We now compute the gradient of $p$ with respect to \( \epsilon \): 
\begin{align*}
\nabla_{\epsilon} p(x) 
&= \frac{1}{2} p(x) \left( \frac{1}{\epsilon^2}\| x - m \|^2 - \frac{d}{\epsilon} \right)\\
&= \frac{1}{2}\Tr \left[ p(x) \left( \frac{1}{\epsilon^2} (x-m)(x-m)^T - \frac{1}{\epsilon} \Idd \right)\right]
= \frac{1}{2}\Tr \nabla^2_x p(x).\numberthis\label{nabla_eps_p}
\end{align*}
Then, we have, using an integration by parts:
\begin{align*}
\nabla_\epsilon \E_p\left[\log \left(\frac{p}{\pi}\right)\right]&=  \frac{1}{2}\Tr \int  \log\left( \frac{p}{\pi}\right)(x)  \nabla^2_x p(x)dx\\&= - \frac{1}{2}\Tr \int  \nabla_x p(x) \nabla_x\log \left(\frac{p}{\pi}(x)\right)^T dx
=\frac{1}{2 \epsilon} \E_p[(\cdot-m)^T  \nabla \log \left(\frac{p}{\pi}\right)]. 
\end{align*}
Note that applying twice an integration by parts would yield another formula: \begin{align}\label{eq:gradGaussianCov2}
    \nabla_\epsilon \E_p\left[\log \left(\frac{p}{\pi}\right)\right]= \frac{1}{2}\Tr \E_p  \left[ \nabla^2\log \left(\frac{p}{\pi}\right)\right]= - \frac{d}{2\epsilon}-\frac{1}{2}\Tr \E_p\left[\nabla^2 \log\pi\right].\quad
\end{align} 

\end{proof}

\subsubsection{Stein's identities for mixtures of isotropic Gaussians} \label{MIGupdate}
The application of the previous results for a mixture of isotropic Gaussians is straightforward. Let  $\mu = \frac{1}{N}\sum_{i=1}^N \delta_{m^i}$, and recall we denote the associated isotropic Gaussian mixture as 
$k_{\epsilon}\otimes \mu = \frac{1}{N} \sum_{i=1}^N k_{\epsilon^i} \star \delta_{m^i}=\frac{1}{N} \sum_{i=1}^N \cN(m^i , \epsilon^i \Idd )$ where $\ke(x)= (2\pi\epsilon)^{-d/2}\exp(-\|x\|^2/(2\epsilon))$ is the Gaussian kernel. 

Using the fact that the expectation of the score function is null and that $\nabla_{m^i}k_{\epsilon}\otimes \mu =  \frac{1}{N}\nabla_{m^i}k_{\epsilon^i}\star \delta_{m^i}$, the gradient of the KL with respect to the mean parameter $m^i$ is then given by:
\begin{align*}
\nabla_{m^i} \KL \left( k_{\epsilon}\otimes \mu \bigg| \, \pi \right)&=\nabla_{m^i}  \E_{k_{\epsilon}\otimes \mu} \left[\ln \left( \frac{k_{\epsilon} \otimes \mu}{\pi} \right) \right]=\frac{1}{N}  \E_{k_{\epsilon^i}\star \delta_{m^i}} \left[\nabla \ln \left( \frac{k_{\epsilon} \otimes \mu}{\pi} \right) \right]. %
\end{align*}
With the same arguments and using Eq~(\ref{nabla_eps_p}), the gradient with respect to the variance parameter $\epsilon^i$ is:
\begin{align*}
\nabla_{\epsilon^i} \KL\left( k_{\epsilon}\otimes \mu \bigg| \, \pi \right)=\nabla_{\epsilon^i}  \E_{k_{\epsilon}\otimes \mu} \left[\ln \left( \frac{k_{\epsilon} \otimes \mu}{\pi} \right) \right]
=\frac{1}{2N \epsilon^i}  \E_{k_{\epsilon^i}\star \delta_{m^i}} \left[(\cdot-m^i)^T  \nabla \ln \left( \frac{k_{\epsilon} \otimes \mu}{\pi} \right) \right]. 
\end{align*}
Equivalently, note that for the latter gradient, using twice an integration by parts would yield the formula 
\begin{align*}
    \nabla_{\epsilon^i} \KL\left( k_{\epsilon}\otimes \mu \bigg| \, \pi \right)=\frac{1}{2N}\Tr  \E_{k_{\epsilon^i}\star \delta_{m^i}} \left[\nabla^2 \ln \left( \frac{k_{\epsilon} \otimes \mu}{\pi} \right) \right].
\end{align*}

\section{Bures-Wasserstein gradient flow for mixtures}

In this section we consider the Bures-Wasserstein gradient flow for mixtures  proposed in \cite{lambert2022variational} in the particular case  of isotropic covariance matrices. 

\subsection{Additional discussion} \label{sec:bures_update_gmm_jko}

Starting from a mixture of isotropic Gaussians $\nu_0 \in \C^N$ at $k=0$, the JKO scheme where we constrain the distribution to be such a mixture writes recursively at subsequent step $k+1$: \begin{equation}   \hat \nu_{k+1} = \underset{\nu \in \C^N}{\arg\min} \left\{ \KL(\nu  \vert \pi) + \frac{1}{2 \gamma} \W_2^2(\nu, \nu_k)   \right\}. \label{JKOGMM}\end{equation}
Unfortunately, a closed-form solution for this scheme is not available, as the Wasserstein distance between mixtures is not tractable. 
Regarding the geometry of the space of mixtures of Gaussians, \cite{lambert2022variational} considered a mixing measure with infinitely many components \( \mu \in \mathcal{P}(\Theta) \), which can be identified with a Gaussian mixture of the form \( \int \cN_{\theta} \, dp(\theta) \),  
where \( \cN_{\theta} \) is a Gaussian distribution with parameters \( \theta \in \Theta \).  
Transport maps can be defined for this model, and gradient flows can subsequently be computed~\citep[Theorem 5]{lambert2022variational}. However, this model is highly \emph{overparameterized}. For instance, as illustrated in \cite{chewi2024statistical}, a single standard Gaussian in \( \mathbb{R} \) can be represented by infinitely many mixing measures of the form  
\(
\mathcal{N}(0, \Id) = \int \mathcal{N}(x, \tau \Id) \, dp(x),
\)
where \( p = \mathcal{N}(0, (1 - \tau) I) \) for any \( \tau \in [0, 1] \).  

An alternative solution is to constrain the mixing model by considering a fixed and finite number of components with uniform weights as we do in this work. This resolves the identifiability issue: in particular, two co-localized components with weight \( w \) cannot be confused with a single component of weight \( 2w \), since all weights are equal. See \citep[Proposition 2]{delon2020mixture} for further details on identifiability for the mixture model. Moreover discrete measures with the same number of atoms are stable along Wasserstein-2 geodesics. This property still holds if we consider discrete measures 
on the isotropic Bures-Wasserstein space $\hat p = \frac{1}{N} \sum_{i=1}^N \delta_{(m^i,\epsilon^i)}$. 
 We can therefore consider discrete mixing measures with uniform weights as stable and identifiable representatives of Gaussian mixtures. 
 
Following \Cref{sec:mig:IBWupdate} we can reconsider the JKO scheme~\ref{JKOGMM} where we replace the  $\W_2$ distance with the  $W_{bw}$ on mixing measures: %
\begin{equation*}
\underset{m^i, \epsilon^i}{\min} \Bigl\{ \KL(\nu  \vert \pi) + \frac{1}{2 \gamma} W_{bw}^2(\hat p, \hat p_k)   \Bigr\},
\end{equation*}
which, at the limit $\gamma \rightarrow 0$, is equivalent to: 
\begin{equation}
   \underset{m^i, \epsilon^i}{\min} \Bigl\{ \KL(\nu  \vert \pi) + \frac{1}{2 N \gamma} \sum_{i=1}^N  \BW^2(\cN(m^i,\epsilon^i \Id),\cN(m_k^i,\epsilon_k^i \Id))   \Bigr\}. \label{JKOfinal}
\end{equation}
The problem in Eq.~(\ref{JKOfinal}) corresponds to the discrete scheme introduced for the full covariance case \( \Sigma^i = \epsilon^i I \) in \citep[Appendix B]{lambert2022variational}, without further justification.  
This scheme can therefore be reinterpreted as a gradient flow on the space of discrete mixing measures.

\subsection{Flow of mixtures of isotropic Gaussian} \label{sec:bures_update_gmm_ode}
We now solve the problem (\ref{JKOfinal}) when $\nu$ is a mixture of isotropic Gaussians. We recall our notation for an isotropic  Gaussian mixture:
$\nu=k_{\epsilon}\otimes \mu := \frac{1}{N} \sum_{i=1}^N \cN(m^i , \epsilon^i \Idd )$ where $k_{\epsilon^i}\star \delta_{m^i}:=\cN(m^i , \epsilon^i \Idd )$, as well as our loss function:
\begin{equation*}
     F([m^i, \epsilon^i]_{i=1}^{N}) := \KL\left(\frac{1}{N}\sum_{i=1}^N \cN(m^i, \epsilon^i \Idd) \Bigg| \, \pi\right)=\KL(k_{\epsilon}\otimes \mu | \pi).
\end{equation*}
The derivative of this loss w.r.t. the parameters of the Gaussian mixture are given in \Cref{MIGupdate} and we report them here:
\begin{align}
&\nabla_{m^i} F=\frac{1}{N}  \E_{k_{\epsilon^i}\star \delta_{m^i}} \left[\nabla \ln \left( \frac{k_{\epsilon} \otimes \mu}{\pi} \right) \right], \label{grad_m_mixture}\\
&\nabla_{\epsilon^i} F=\frac{1}{2N \epsilon^i}  \E_{k_{\epsilon^i}\star \delta_{m^i}} \left[(\cdot-m^i)^T  \nabla \ln \left( \frac{k_{\epsilon} \otimes \mu}{\pi} \right) \right].\label{grad_eps_mixture}
\end{align}

Using the derivative of the Bures-Wasserstein distance computed in \Cref{sec:bures_update_ode}, we obtain the first order condition w.r.t. $\epsilon^i$ for the JKO-like scheme (\ref{JKOfinal}):

\begin{equation*}
d\left( 1 - \sqrt{\frac{\epsilon^i_k}{\epsilon^i}}\right) =  -2 N \gamma \nabla_{\epsilon^i} F\quad \Rightarrow\quad \epsilon^i_k =\left(1 + \frac{2N \gamma}{d} \nabla_{\epsilon^i} F  \right)^2  \epsilon^i,\label{DiscreteUpdateExpl}
\end{equation*}
and following \Cref{sec:bures_update_ode}, as $\gamma\to 0$ we obtain the flow:
\begin{equation*}
    \Dot{\epsilon}^i = - \frac{2}{d}  \E_{k_{\epsilon^i}\star \delta_{m^i}} \left[(\cdot-m^i)^T  \nabla \ln \left( \frac{k_{\epsilon} \otimes \mu}{\pi} \right) \right].
\end{equation*}

On the other hand, first order condition on the mean gives the implicit update:
\begin{equation*}
    m^i = m^i_k - N \gamma \nabla_{m^i} F,
\end{equation*}

and at the limit $\gamma \rightarrow 0$ we obtain the flow:%
\begin{equation*}
    \Dot{m}^i = -  \E_{k_{\epsilon^i}\star \delta_{m^i}} \left[\nabla \ln \left( \frac{k_{\epsilon} \otimes \mu}{\pi} \right) \right].
\end{equation*}

\subsection{Discrete update and Gaussian particles} \label{sec:bures_update_gmm_disc}

\paragraph{Derivations of the updates.} For the scheme given in Eq~(\ref{eq:ibw_mixture}), we obtain the following first-order conditions on the parameters for $i = 1, \cdots, N$:
\begin{align*}
    \frac{1}{2N}d\left( 1 - \sqrt{\frac{\epsilon^i_k}{\epsilon^i}}\right) = -\gamma \nabla_{\epsilon^i} F([m_k^j,\epsilon_k^j]_{j=1}^N) \Leftrightarrow \epsilon^i = 
\epsilon_k^i
\left(1 +\frac{2N\gamma}{d} \nabla_{\epsilon^i} F([m_k^j,\epsilon_k^j]_{j=1}^N) \right)^{-2},
\end{align*}
and using a Taylor expansion for small step size $\gamma$, $(1+x)^{-1} = 1 - x + o(x)$ we obtain: 
\begin{align}\label{explicit_mig_eps}
   \epsilon^i =  \epsilon_k^i
    \left(1 -\frac{2N\gamma}{d} \nabla_{\epsilon^i} F([m_k^j,\epsilon_k^j]_{j=1}^N) \right)^2 . 
\end{align}

The first-order condition on $m^i$ gives the explicit update:
\begin{align}\label{explicit_mig_mean}
m^i &= m^i_k-N\gamma \nabla_{m^i} F([m_k^j,\epsilon_k^j]_{j=1}^N). 
\end{align}
\paragraph{Riemannian interpretation.} We  now show that this update has also a Riemannian interpretation, extending our geometric analysis from \Cref{sec:bures_update_disc} to the case of mixtures. %
We can compute the isotropic Bures-Wasserstein gradient for each Gaussian component, by projecting the Wasserstein-2 gradient of the $\KL$ objective to the IBW-tangent space of each component.  
Each Gaussian particle (component) follows its own trajectory ruled by $\nabla_{\iBW^i} \mathcal{F}$. 
Adopting this point of view, we obtain the following system of updates for $i=1,\dots,N$:
\begin{align*} 
p^i_{k+1}&=\exp_{p^i_k}(-\gamma \nabla_{\iBW^i} \mathcal{F}(\mu_k)), 
\end{align*} 
where $\exp_{p^i_k}$ is the exponential map from the $\iBW$ tangent space at $p^i_k$ defined in \Cref{sec:bures_update_disc}. We then get:
\[
\nabla_{\iBW^i} \mathcal{F}(\nu) = \proj_{T_{p^i}{\iBW}(\mathbb{R}^d)} \nabla_{\W_2} \mathcal{F}(\nu) = \argmin_{w \in T_{p^i}{\iBW}(\mathbb{R}^d)} \| w - \nabla_{\W_2} \mathcal{F}(\nu) \|_{p^i}^2,
\]
with $w = (a,s) \in \R^d \times \R $. Together with (\ref{grad_m_mixture},\ref{grad_eps_mixture})  it gives:
\begin{align*}
    &a = \E_{p^i} \left[ \nabla_{\W_2} \mathcal{F}(\nu)\right]  = N \nabla_{m^i}F([m^j, \epsilon^j]_{j=1}^{N}),\\
    &s = \frac{1}{d \epsilon^i} \E_{p^i} \left[ (\cdot-m^i)^T\nabla_{\W_2} \mathcal{F}(\nu) \right] = \frac{2N}{d}\nabla_{\epsilon^i}F([m^j, \epsilon^j]_{j=1}^{N}).
\end{align*}
Using Eq~(\ref{expMap}), we obtain the discrete updates Eq~(\ref{explicit_mig_eps})-(\ref{explicit_mig_mean}).

\subsection{Background on Wasserstein distances for Gaussian mixtures~\citep{delon2020mixture}} \label{sec:bures_update_gmm_delond}

In \cite{delon2020mixture}, the authors introduce the $\MW_2$ distance as a Wasserstein distance between Gaussian mixtures, where the transport plan is itself constrained to be a Gaussian mixture (with any number of components). We denote by $GMM$ the latter space.

Namely, let $p_0, p_1$ being two general Gaussians mixtures $p_0=\sum_{i=1}^{K_0} \pi^i_0 p^i_0$ and $p_1=\sum_{i=1}^{K_1} \pi^i_1 p^i_1$, the $\MW_2$ distance is defined as: 
\begin{align*}
\MW_2^2(p_0,p_1)&= \underset  {\gamma \in \mathcal{S}(p_0,p_1) \cap GMM} \min \int ||x_1-x_2||^2 d\gamma (x_1,x_2) \geq W_2^2(p_0,p_1),
\end{align*}
which is an upper bound to the true Wasserstein distance since the transport plan is constrained. %
 
It has been shown in \citep[Proposition 4]{delon2020mixture} that this distance is also equal to: 
\begin{align*}
\MW_2^2(p_0,p_1)&= \underset  {W \in \mathcal{S}(\pi_0,\pi_1) } \min \sum_{i=1}^{K_0} \sum_{j=1}^{K_1} W_{i,j} \BW^2(\mathcal{N}(m_i,\Sigma_i),\mathcal{N}(m_j,\Sigma_j)),
\end{align*}
where $\mathcal{S}(\pi_0,\pi_1)$ is the set of coupling matrices between the vector of weights $\pi_0$ and $\pi_1$ of the two mixtures defined by $\mathcal{S}(\pi_0,\pi_1)=\{ W \in \mathcal{M}_{K_0 \times K_1}(\R^+) |\, \forall i,\; \sum_{j} W_{ij} = \pi^i_0;\,  \forall j,\; \sum_{i} W_{ij} = \pi^j_1 \},$ where we note $\mathcal{M}_{n \times p}(\R^+)$ the set of  matrices of size $n \times p$ with positive values.
Moreover, \cite{delon2020mixture} showed that the optimal transport plan takes the form:
\begin{align*}
\gamma(x,y)=\sum_{i=1}^{K_0} \sum_{j=1}^{K_1} W^*_{i,j} p_0^i(x) \delta_{y=T^{BW}_{i,j}(x)}(y),
\end{align*}
where $W^*$ is the optimal coupling matrix and $T^{BW}_{i,j}$ is the $\BW$ transport map from Gaussian component $i$ to Gaussian component $j$. The transport plan $\gamma$ is a GMM with at most $K_0 K_1$ Gaussian components %
which are degenerated.%

\paragraph{Mixture model with fixed number of components.}
We now consider the case of  mixture of exactly $N$ Gaussians with equal, fixed weights:
\begin{equation*}
    p=\frac{1}{N}\sum_{i=1}^N \mathcal{N}(m^{i},\Sigma^{i})=\frac{1}{N}\sum_{i=1}^N p^{i},
\end{equation*}
where  $p^{i}$ is the $i^{th}$ Gaussian component of the mixture.

The $\MW_2$ distance takes an even simpler expression when considering a distance between two mixtures $p_0,p_1$ of exactly $N$ Gaussians with equal, fixed weights. If we note $\mathfrak{S}_N$ the set of permutations over $\{1,\dots,N\}$, we have:
\begin{align*}
\MW_2^2(p_0,p_1)&=\underset  {\sigma \in \mathfrak{S}_N } \min \frac{1}{N} \sum_{i=1}^N  \BW^2(\mathcal{N}(m_i,\Sigma_i),\mathcal{N}(m_{\sigma(i)},\Sigma_{\sigma(i)}))=W_{bw}^2(\hat p_0, \hat p_1),
\end{align*}
where $W_{bw}$ is the Wasserstein distance between mixing measures defined in \Cref{sec:mig:IBWupdate}. 
The lower and upper bounds for $\MW_2^2$ given in \cite[Proposition 6]{delon2020mixture} then transfer to $W_{bw}$ and give:
\begin{align*}
\W_2(p_0,p_1) \leq W_{bw}(\hat p, \hat p_k) \leq \W_2(p_0,p_1) + \sqrt{\frac{2}{N} \sum_{k=1}^N \mathrm{Tr} \Sigma_0^k}+ \sqrt{\frac{2}{N} \sum_{k=1}^N \mathrm{Tr} \Sigma_1^k}.
\end{align*}
The last term simplifies for isotropic Gaussians and we finally obtain:
\begin{align}&0 \leq W_{bw}^2(\hat p, \hat p_k)-\W_2^2 \left(\nu, \nu_k \right) \leq 2 \sqrt{2 d \epsilon^*}, \end{align} where $\epsilon^*$ is the maximal variance of the mixtures $\nu,\nu_k$.
When $\epsilon^* \rightarrow 0$, such that the Gaussian mixture degenerates into an empirical measure, the two distance matches. 

\paragraph*{Geodesics on mixtures}
Finally, when considering mixtures  with $N$ equal weights, the transport plan has exactly $N$ components and can be written:
\begin{equation*}
    \gamma(x,y)=\frac{1}{N} \sum_{i=1}^N p_0^i(x) \delta_{y=T^{BW}_{i,\sigma^*(i)}(x)}(y),
\end{equation*}

such that the mixture model with exactly $N$ components is stable along the geodesics transported by this plan. 
Indeed, the intermediate measure between two GMM $p_0$ and $p_1$ is given by the formula for $t \in [0,1]$: 
\begin{align*}
&\mu_t= P_t \# \gamma \text{ where }  P_t(x)=(1-t)x+t y. %
\end{align*}

\begin{figure}[h]
\centering
\includegraphics[scale=0.5]{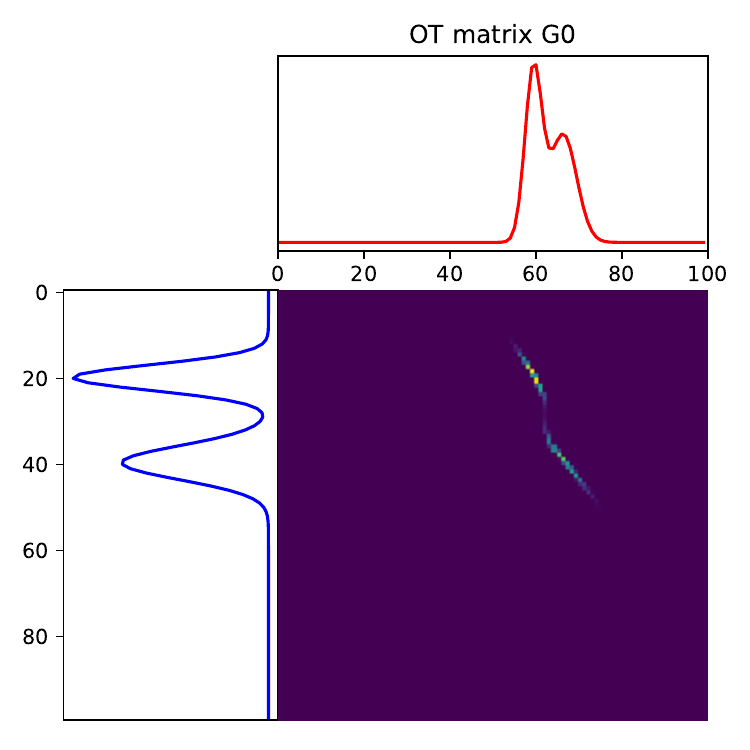}
\includegraphics[scale=0.64]{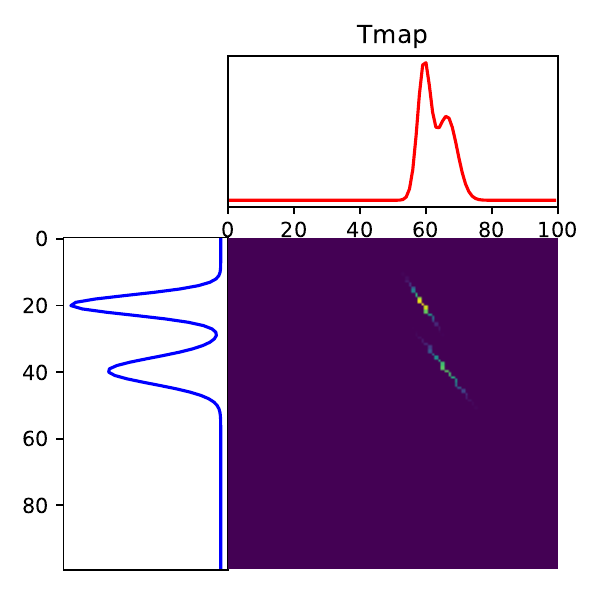}
\caption{Optimal transport plan between two mixtures of two Gaussians with equal weights $\frac{1}{2}$. On the left is the Wasserstein distance $\W_2$ case, where the optimal transport plan is not constrained. On the right is the $\MW_2$ case, where the optimal transport plan is constrained to be a mixture of Gaussians. In the $\W_2$ case, there exists a bijective map.
We don’t have such a bijective map for the  $\MW_2$ case. Indeed, in the right figure, some points have two images. These figures are generated using the Python Optimal Transport library (\url{https://pythonot.github.io/}). }
\label{fig1}
\end{figure} 

Applying this to our specific case, we obtain:
\begin{align*}
p_t= \frac{1}{N} \sum_{i=1}^N  ((1-t) \Id + tT^{BW}_{i,\sigma^*(i)}) \# p_0^{i}, %
\end{align*}
where $p_t$ has exactly $N$ components, proving that our GMM structure with $N$ components is stable along the geodesics.  

We may wonder if a transport map exists in our simpler framework of mixtures with a fixed number of components and equal weights. Unfortunately, this is not the case, as illustrated in Figure \ref{fig1}, where we observe that the map between the two mixtures is not bijective and cannot be represented by a function $T(x)$.

\section{Natural gradient descent updates}\label{sec:mirror_descent_IG}
In this section, we give more details on Natural gradient descent on $\IG$, which corresponds to the algorithm proposed by \cite{lin2019fast}. NGD adapts standard gradient descent to the geometry of the parameter space, by preconditioning the Euclidean gradient with the inverse Fisher information matrix.
\paragraph*{Exponential Family}
An isotropic Gaussian $p = \cN(m, \epsilon \Idd)$ belongs to the exponential family. Its density, in the canonical form, writes: 
\begin{equation*}
    p(x ; \eta) = (2\pi)^{-d/2} \exp\left( \langle \eta, S(x) \rangle - A(\eta)\right)
\end{equation*}
with the natural parameter $\eta = \begin{pmatrix}
    \eta_1 & \eta_2
\end{pmatrix}^\top = 
    \begin{pmatrix}
    \frac{m}{\epsilon} & -\frac{1}{2\epsilon}
\end{pmatrix}^\top$, the sufficient statistics $ 
S
(x) = \begin{pmatrix}
    x & \|x\|^2
\end{pmatrix}^\top$ and the log partition function $
A(\eta) = -\frac{\|\eta_1\|^2}{2\eta_2} - \frac{d}{2}\log(-2\eta_2).$
It follows that $\nabla A(\eta) = \E_{p(\cdot \vert \eta)} \left[ S(x)\right] := \m$ and $\nabla^2 A(\eta) = I(\eta)$ with  $\m$ and $I(\eta)$ being respectively the mean parameter and the Fisher information matrix.  Also $\nabla A^{-1} = \nabla A^*$ where $A^* (y)  = \sup_\eta \langle \eta, y \rangle - A(\eta)$  is the Legendre transform of \(A\).

\paragraph*{NGD as Mirror Descent~\citep{raskutti2015information}}
Let \(\eta \mapsto f(\eta)\) be the optimization objective in the natural-parameter space, and define the corresponding mean-space objective $\m\mapsto\mathbf{f}(\m)=f\bigl(\nabla A^*(\m)\bigr).$

The natural‐gradient step on $f$ of size $\gamma$ is
\begin{equation}
\label{NGD_scheme}
\eta_{k+1}
=\eta_k -\gamma I(\eta_k)^{-1}\nabla f(\eta_k)
=\eta_k -\gamma\bigl(\nabla^2A(\eta_k)\bigr)^{-1}\nabla f(\eta_k).  
\end{equation}

On the other hand, mirror descent on $\mathbf{f}$ with the Bregman potential $\phi(\m)=A^*(\m)$ updates writes
\begin{equation}
\label{MD_scheme_vsngd}
\nabla A^*(\m_{k+1})
=\nabla A^*(\m_k)
-\gamma\nabla_{\m}\mathbf{f}(\m_k).
\end{equation}

By the chain rule,

\begin{equation*}
    \nabla_\m\mathbf f(\m_k)=\nabla^2A^*(\m_k)\,\nabla f(\eta_k)=\bigl(\nabla^2A(\eta_k)\bigr)^{-1}\nabla f(\eta_k),
\end{equation*}

substituting this into the mirror descent update recovers exactly the NGD step above. 
The corresponding optimization scheme is: 

\begin{equation*}
    \argmin_\m \langle \nabla_\m \mathbf{f}(\m_k) , \m - \m_k \rangle + \frac{1}{\gamma} B_{A^\star}(\m \vert \m_k),
\end{equation*}
where the geometry is induced by the Bregman divergence $B_{A^*}$  generated by the Legendre transform $A^*$ of the partition function. This divergence is equal to the KL divergence between two isotropic Gaussians:
\begin{equation}\label{eq:closed-form-kl_gauss}
    \KL(\cN(m,\epsilon \Id),\cN(m_k \vert \epsilon_k \Id)) = \frac{1}{2} \left( d \cdot \frac{\epsilon}{\epsilon_k} + \frac{\|m - m_k\|^2}{\epsilon_k} - d + d \log\frac{\epsilon_k}{\epsilon} \right).
\end{equation}

\paragraph*{Explicit Updates in $(m,\epsilon)$}
Define $F(m,\epsilon)=\mathbf f(m,\|m\|^2+d\epsilon),$ and let $\nabla\mathbf f(\m_k)=\begin{pmatrix}g_1 & g_2\end{pmatrix}^\top$, with
$
\nabla_mF = g_1 + 2mg_2, \nabla_\epsilon F = dg_2.
$
Equivalently (\ref{NGD_scheme}) and (\ref{MD_scheme_vsngd}) give
\begin{align*}
    &\begin{pmatrix}
    \frac{m_{k+1}}{\epsilon_{k+1}} \\[8pt]
    -\frac{1}{2\epsilon_{k+1}}
    \end{pmatrix} = \begin{pmatrix}
    \frac{m_{k}}{\epsilon_{k}} \\[8pt]
    -\frac{1}{2\epsilon_{k}}
    \end{pmatrix}  - \gamma \nabla \mathbf{f}(\m_k)
    \Leftrightarrow \begin{pmatrix}
    m_{k+1} \\[8pt]
    \frac{1}{\epsilon_{k+1}}
    \end{pmatrix} = \begin{pmatrix}
    \frac{m_{k}}{\epsilon_{k}}\epsilon_{k+1} - \gamma g_1\epsilon_{k+1} \\[8pt]
    \frac{1}{\epsilon_{k}} + 2\gamma g_2
    \end{pmatrix} 
\end{align*}
The variance update writes
\begin{equation}
    \frac{1}{\epsilon_{k+1}}  = \frac{1}{\epsilon_{k}} + \frac{2\gamma}{d} \nabla_\epsilon  F(m_k, \epsilon_k).\label{eq:ngd1}
\end{equation}

and
\begin{align*}
    m_{k+1} &= \left(\frac{m_k}{\epsilon_k} - \gamma g_1 \right) \epsilon_{k+1}\\
    &= \left(\frac{m_k}{\epsilon_k} - \gamma g_1 \right)\frac{\epsilon_k}{1 + 2\epsilon_k\gamma g_2}
    \\
    &= \left(\frac{m_k}{\epsilon_k} - \gamma \left[ \nabla_m F(m_k, \epsilon_k) - 2m_k g_2\right] \right)\frac{\epsilon_k}{1 + 2\epsilon_k\gamma g_2}  \\
    &= \left( m_k \left[\frac{1}{\epsilon_k} + 2\gamma g_2\right] - \gamma \nabla_m F(m_k, \epsilon_k) \right)\frac{\epsilon_k}{1 + 2\epsilon_k\gamma g_2}\\
    &= m_k - \gamma \nabla_m F(m_k, \epsilon_k) \frac{\epsilon_k}{1 + 2\epsilon_k\gamma g_2}\\
    &= m_k - \gamma\epsilon_{k+1} \nabla_m F(m_k, \epsilon_k).\numberthis\label{eq:ngd2}
\end{align*}

\paragraph*{Mixture case}
We now extend NGD to a mixture of \(N\) isotropic Gaussians with equal weights \(1/N\):
\begin{equation*}
    p(x;\eta)=\frac{1}{N}  \sum_{i=1}^N (2\pi)^{-d/2}\exp (\langle \eta^i, S(x) \rangle - A(\eta^i))
\end{equation*} with $\eta^i=\begin{pmatrix}
\eta^i_1 & \eta^i_2
\end{pmatrix}^\top
=
\begin{pmatrix}
 \frac{m^i}{\epsilon^i} &
 -\frac1{2\,\epsilon^i}
\end{pmatrix}^\top$ for $i=1,\dots,N,$ which is a convex combination of exponential‐family components, we apply NGD to each component. 

Writing natural gradients in \((m^j,\epsilon^j)\)-space gives
\[
\begin{pmatrix}
\displaystyle\frac{m^j_{k+1}}{\epsilon^j_{k+1}} \\[14pt]
\displaystyle -\frac1{2\,\epsilon^j_{k+1}}
\end{pmatrix}
=
\begin{pmatrix}
\displaystyle\frac{m^j_k}{\epsilon^j_k} \\[14pt]
\displaystyle -\frac1{2\,\epsilon^j_k}
\end{pmatrix}
-N\gamma
\begin{pmatrix}
g^j_1 \\[6pt] g^j_2
\end{pmatrix},
\]
where for each \(j\)
\[
\nabla _{\m^j}\mathbf{f}( [\m^i]_{i=1}^N) = \begin{pmatrix}
g^j_1 \\[6pt] g^j_2
\end{pmatrix}
\] and $\nabla_{m^j}F = g_1^j + 2\,m^j\,g_2^j, \nabla_{\epsilon^j} F = d\,g_2^j.$
Equivalently, the updates are
\begin{align*}
\frac{1}{\epsilon^j_{k+1}}
&= \frac{1}{\epsilon^j_k} + \frac{2N\gamma}{d} \nabla_{\epsilon^j} F,\\
m^j_{k+1}
&= m^j_k - N\,\gamma \, \epsilon^j_{k+1}\,\nabla_{m^j}F.
\end{align*}

\begin{remark}
Note that while $\KL(\cdot|\cdot)$ is known to be a Bregman divergence on the space of probability distributions over $\R^d$~\citep{aubin2022mirror}, it is not a Bregman divergence on $\R^d\times \R^{+*}$. Indeed, note that Eq~(\ref{eq:closed-form-kl_gauss}) does not decouple the mean and variance terms, resulting in coupled updates in Eq~(\ref{eq:ngd1})-(\ref{eq:ngd2}).
\end{remark}

\section{Approximation error for mixtures of isotropic Gaussians}\label{sec:approx_error}

In this section, we investigate the approximation error that can be achieved within the variational family we consider in this work. Previously, 
\cite[ Theorem 7]{huix2024theoretical} established that the approximation error of VI within the family of mixtures of Gaussian distributions with equal weights and constant isotropic covariance in the (reverse) Kullback-Leibler  tends to $0$ as $N$ tends to infinity, under the assumption that the target distribution writes as an infinite mixture of these isotropic Gaussian components with same covariance. Below, we derive a similar result for our richer family, i.e.,  mixtures of (isotropic) Gaussian distributions with equal weights (and possibly different covariances). Note that we are deriving the results for mixtures of isotropic Gaussians, but the result and computations would the same for Gaussians with full covariance matrix. 
\begin{assumption} \label{ass:mixture_representation}There exists $p^*$ on $\R^d\times\R^{+*}$ such that the target $\pi$ writes as:
\begin{align}
    \pi := \int_\Theta k^m_\epsilon dp^\star(m, \epsilon),
\end{align} 
where $k_{\epsilon}^m(x):= \ke(x - m)$ for any $x\in \R^d$.
\end{assumption}
Recall that we use the notation  $\rho_N = k_{\epsilon}\otimes \mu = \int k_{\epsilon}^m d\hat{p}(m,\epsilon)$ with $\mu = \frac{1}{N}\sum_{i=1}^N \delta_{m^i}$ for an isotropic Gaussian mixture with $N$ components (where $\hat p= \frac{1}{N} \sum_{j=1}^N \delta_{(m^j,\epsilon^j)}$, and in the notation $k_{\epsilon}\otimes \mu$, $\epsilon$ is identified to the vector $(\epsilon_1,\dots, \epsilon_N)$). We now state and prove our generalization of ~\cite[Theorem 7]{huix2024theoretical}.
\begin{theorem} Let $
\C^N =\Bigl\{ \frac{1}{N} \sum_{j=1}^N   \mathcal{N}(m^j,\epsilon^j \Idd),\; [m^j,\epsilon^j]_{j=1}^N \in (\R^d\times \R^{+*})^N\ \Bigr\}.$
    Suppose that Assumption ~\ref{ass:mixture_representation} holds, then 
    \begin{align*}
         \min_{\rho\in \C^N} \KL (\rho \vert \pi) \leq C_{\pi}^2 \frac{\log(N)+1}{N}, \quad\text{ where }\quad
    C^2_{\pi} = \int \frac{\int k_{\epsilon}^m(x)^2 dp^\star(m,\epsilon)}{\int k_{\epsilon}^m(x) dp^\star(m,\epsilon)} dx.
\end{align*}
\end{theorem}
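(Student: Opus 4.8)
The plan is to use the probabilistic method: rather than exhibiting an optimal $N$-component mixture explicitly, I would show that a \emph{random} equal-weight mixture, built by sampling its components from $p^\star$, has small expected KL to $\pi$, and then conclude that the (deterministic) minimum over $\C^N$ is at most this expectation.

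Concretely, draw $(M_1,E_1),\dots,(M_N,E_N)\simiid p^\star$ and set $\rho_N=\tfrac1N\sum_{i=1}^N k_{E_i}^{M_i}\in\C^N$. Writing $Z_i(x)=k_{E_i}^{M_i}(x)$, the $Z_i(x)$ are i.i.d.\ and, by the representation \eqref{mixture_representation}, $\E[Z_i(x)]=\int k_\epsilon^m(x)\,dp^\star(m,\epsilon)=\pi(x)$ for every $x$; hence $\E[\rho_N(x)]=\pi(x)$ pointwise. Since $\rho_N\in\C^N$ for every realization, the deterministic number $\min_{\rho\in\C^N}\KL(\rho|\pi)$ is bounded almost surely by $\KL(\rho_N|\pi)$, hence by $\E[\KL(\rho_N|\pi)]$; so it suffices to bound the latter.

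For the expectation bound I would \emph{not} work with $\rho_N\log\rho_N$ directly, since $t\mapsto t\log t$ is convex and Jensen points the wrong way for an upper bound on the expected entropy term (this is exactly the step where a crude threshold/maximal argument leaks the $\log N$ factor present in the reference). Instead I would linearize by passing to the chi-square divergence: from $\log t\le t-1$ one gets, for each realization, $\KL(\rho_N|\pi)\le\int\frac{\rho_N^2}{\pi}\,dx-1$. Taking expectations and exchanging with the $x$-integral by Tonelli (the integrand being nonnegative), and using $\E[\rho_N(x)^2]=\tfrac1N\Var(Z_1(x))+\pi(x)^2$, I obtain
\begin{align*}
\E[\KL(\rho_N|\pi)]\le\frac1N\int\frac{\Var(Z_1(x))}{\pi(x)}\,dx\le\frac1N\int\frac{\E[Z_1(x)^2]}{\pi(x)}\,dx=\frac{C_\pi^2}{N},
\end{align*}
where the last equality is the definition of $C_\pi^2$. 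Since $1\le\log(N)+1$, this gives the claimed $\min_{\rho\in\C^N}\KL(\rho|\pi)\le C_\pi^2\frac{\log(N)+1}{N}$ (in fact with the slightly sharper factor $1/N$).

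The main obstacle is the expected-entropy estimate itself: a direct attack on $\E[\rho_N\log(\rho_N/\pi)]$ is awkward because of the wrong-way convexity, which is why the cited proof accepts a $\log N$ slack. The observation that makes everything routine is that bounding $\KL$ by the chi-square divergence reduces the whole statement to a single second-moment computation for an average of i.i.d.\ nonnegative variables, whose integrated second moment is finite precisely by the hypothesis $C_\pi^2<\infty$. The only remaining points needing care are the Tonelli interchange (clean, since the chi-square integrand is nonnegative) and the measurability of $\rho_N$ as a function of its random parameters, so that ``expected value $\le B$ implies existence of a realization $\le B$'' is legitimate.
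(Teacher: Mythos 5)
Your proof is correct, and it takes a genuinely different route from the paper's. The paper argues incrementally, in the style of \citep{huix2024theoretical}: it takes the optimal $N$-component mixture $\rho_N$, blends it with one extra component, $\rho_{N+1}^{m,\epsilon}=(1-\alpha)\rho_N+\alpha k_\epsilon^m$ with $\alpha=\frac{1}{N+1}$, averages the KL over $(m,\epsilon)\sim p^\star$, and controls the resulting terms through pointwise inequalities for $B(x)=\frac{x\log x-x+1}{(x-1)^2}$ (its monotonicity, the bound $B(x)(x-1)\le\sqrt{x}-1$, and a Hellinger-type estimate), obtaining the recursion $D_{N+1}\le(1-\alpha)D_N+\alpha^2C_\pi^2$; the $\log(N)+1$ factor then comes from the harmonic sum when this recursion is telescoped. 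You instead sample all $N$ components i.i.d.\ from $p^\star$ at once, so that $\E[\rho_N(x)]=\pi(x)$ pointwise, replace all of the delicate entropy estimates by the single inequality $\KL\le\chi^2$ (from $\log t\le t-1$), and reduce the problem to the variance of an i.i.d.\ average; the probabilistic method then transfers the expectation bound to the minimum (in fact to the infimum, so you do not even need the minimizer to exist, which the paper's argument implicitly assumes when it writes $\rho_N=\argmin$). Your route is shorter, dispenses with all three auxiliary lemmas, and proves a strictly sharper statement: $C_\pi^2/N$, indeed $(C_\pi^2-1)/N$ if one keeps the variance rather than the second moment, under the same hypothesis $C_\pi^2<\infty$ --- showing that the logarithmic factor in the theorem is an artifact of the incremental proof rather than intrinsic to the approximation problem. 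The two technical points you flag are the right ones and both are routine: the Tonelli exchange is legitimate because the chi-square integrand is nonnegative, and $\KL(\rho_N|\pi)$ is a well-defined measurable function of the sampled parameters because its integrand is bounded below by $-\pi(x)/e$, so Fubini--Tonelli applies separately to its positive and negative parts.
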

\begin{proof}
We denote 
\begin{align*}
    D_N = \min_{\rho  \in \C^N} \KL (\rho \vert \pi), \quad \rho_N = \argmin_{\rho  \in \C^N} \KL (\rho \vert \pi).
\end{align*}
For any $m\in \R^d$, we consider $\rho^{m,\epsilon}_{N+1} \in \mathcal{C}_{N+1}$ defined as
\[
 \rho^{m,\epsilon}_{N+1} = (1-\alpha) \rho_N + \alpha k^m_\epsilon,
\] 
with $\alpha = \frac{1}{N+1}$. By definition of $D_N$, we have that, 
    $D_{N+1}\leq \KL(\rho^{m,\epsilon}_{N+1} \vert \pi)$. 
Denoting $f(x)=x\log x$, we have $
    \KL(\rho^{m,\epsilon}_{N+1} \vert \pi) = \int f(r_{N+1})d\pi$, 
where we define:
\begin{align*}
    r_{N+1} := \frac{\rho^{m,\epsilon}_{N+1}}{\pi}= (1-\alpha)\frac{\rho_N}{\pi} + \alpha\frac{k^m_\epsilon}{\pi} := r_0 + \alpha\frac{k^m_\epsilon}{\pi}.
\end{align*}
Defining $B(x) = \frac{x\log x - x +1}{(x-1)^2}$ for $x \in \mathbb{R}^*_+\backslash\{1\}$. By Lemma~\ref{lemma1}, this function is decreasing; and since $r_{N+1}(x) \geq r_0(x) \ \forall x$, we have $B(r_{N+1}(x))\leq B(r_0(x))$. It follows that
\begin{align*}
    f(r_{N+1})&=r_{N+1}\log(r_{N+1}) \leq r_{N+1} -1 +B(r_0)(r_{N+1}-1)^2\\
    &= r_0 + \alpha\frac{k^m_\epsilon}{\pi} -1 + B(r_0)(r_0 + \alpha\frac{k^m_\epsilon}{\pi} -1)^2\\
    &= \alpha \frac{k^m_\epsilon}{\pi} + r_0\log(r_0) + \alpha^2 \left(\frac{k^m_\epsilon}{\pi}\right)^2B(r_0) + 2\alpha \frac{k^m_\epsilon}{\pi} B(r_0)(r_0-1). \numberthis \label{Bineq}
\end{align*}

Moreover, we have:
\begin{align*}
    D_{N+1} &= \int D_{N+1} dp^\star(m,\epsilon)\\
    &\leq \int \KL (\rho^{m,\epsilon}_{N+1} \vert \pi) dp^\star(m,\epsilon) %
    \\
    &= \int \int f(r_{N+1}) d\pi dp^\star(m,\epsilon) %
    \\
    &\leq \int \int \alpha \frac{k^m_\epsilon}{\pi}d\pi dp^\star(m,\epsilon) + \int \int r_0\log(r_0)d\pi dp^\star(m,\epsilon)\\&
    \quad +\int \int \alpha^2 \left(\frac{k^m_\epsilon}{\pi}\right)^2B(r_0) d\pi dp^\star(m,\epsilon) \\&
    \quad +  \int \int 2\alpha \frac{k^m_\epsilon}{\pi} B(r_0)(r_0-1)d\pi dp^\star(m,\epsilon)  \quad \text{from ~(\ref{Bineq})}\\
    &= \alpha + \int r_0(x)\log(r_0(x))d\pi(x) + \alpha^2\int \int  \frac{k^m_\epsilon(x)^2}{\pi(x)} B(r_0(x)) dx \ dp^\star(m,\epsilon) \\&
    \quad + 2\alpha \int \int k^m_\epsilon B(r_0(x))(r_0(x)-1)dx\ dp^\star(m,\epsilon)\\
    &= \alpha + \int r_0(x)\log(r_0(x))d\pi(x) \tag{a} \label{first} \\&  \quad + \alpha^2\int \int  \frac{k^m_\epsilon(x)^2}{\pi(x)} B(r_0(x)) dx \ dp^\star(m,\epsilon) \tag{b} \label{second}\\&
    \quad + 2\alpha \int  B(r_0(x))(r_0(x)-1) \pi(x)dx. \tag{c} \label{third}
\end{align*}
We first observe that we can write ~(\ref{first}) in function of $D_N$. Indeed, $r_0(x) = (1-\alpha)\frac{\rho_N}{\pi}$, so
\begin{align*}
   \text{(\ref{first})} &= \int r_0(x)\log(r_0(x))d\pi(x) \\
   &= \int  (1-\alpha)\frac{\rho_N}{\pi} \log\left((1-\alpha)\frac{\rho_N}{\pi}\right) d\pi \\
   &= (1-\alpha) \int \rho_N (x) \log\left(\frac{\rho_N(x)}{\pi(x)}\right) dx + (1-\alpha)\log(1-\alpha) \\
   &= (1-\alpha)D_N + (1-\alpha)\log(1-\alpha).
\end{align*}

For the second term ~(\ref{second}), we have that $\lim_{x\rightarrow0^+}B(x) = 1$ and since $B$ decrease, $B(x)\leq 1$, thus $B(r_0(x)) \leq 1$, this implies :
\begin{align*}
\text{(\ref{second})} = \alpha^2 \int \int  \frac{k^m_\epsilon(x)^2}{\pi(x)} B(r_0(x)) dx \ dp^\star(m,\epsilon)
&\leq \alpha^2 \int \int  \frac{k^m_\epsilon(x)^2}{\pi(x)} dx \ dp^\star(m,\epsilon)\\
&=  \alpha^2 C^2_{\pi}.
\end{align*}
And for the third term~(\ref{third}), we have that $B(x)(x-1)\leq \sqrt{x} -1$, see Lemma~\ref{lemma2}. Thus,
\begin{align*}
    \text{(\ref{third})}  = 2 \alpha \int  B(r_0(x))(r_0(x)-1) \pi(x) dx
    &\leq  2 \alpha \int  \left(\sqrt{r_0(x)}-1\right) \pi(x) dx\\
    &=  2 \alpha \int  \sqrt{(1-\alpha)\frac{\rho_N(x)}{\pi(x)}} \pi(x) dx -  2 \alpha\\
    &=  2 \alpha \sqrt{1-\alpha} \int \sqrt{\rho_N(x) \ \pi(x)} dx - 2 \alpha\\
    &=  2 \alpha \sqrt{1-\alpha} \left(1 - H^2(\rho_N , \pi) \right) -  2 \alpha\\
    &\leq   2 \alpha\sqrt{1-\alpha} - 2 \alpha,
\end{align*}
where we have used the definition of the squared Hellinger distance $H^2(f,g) =1- \int \sqrt{f(x)g(x)}dx$ and the property stating that for any densities of probability $f,g$, $0\leq H^2(f,g)\leq1$.
 
Finally, we have
\begin{align*}
    D_{N+1} &\leq \alpha + (1-\alpha)D_N + (1-\alpha)\log(1-\alpha)+ \alpha^2C^2_{\pi} + 2\alpha \left(\sqrt{1-\alpha} -1\right)\\
    &\leq (1-\alpha)D_N+\alpha^2C^2_{\pi}, \numberthis\label{DN1leqDN}
\end{align*}
using Lemma~\ref{lemma3}, stating that $\alpha + (1-\alpha)\log(1-\alpha)+2\alpha \left(\sqrt{1-\alpha} -1\right)\leq 0$.

The previous inequality~(\ref{DN1leqDN}) is true for any $n\geq 0$ and recalling that $\alpha = \frac{1}{n+1}$ we have, 
\begin{align*}
D_{n+1} &\leq (1-\alpha)D_n+\alpha^2C^2_{\pi} \\
 (n+1)D_{n+1} - nD_n &\leq \frac{1}{n+1}C^2_{\pi}\\
 \sum_{n=0}^{N-1} (n+1)D_{n+1} - nD_n&\leq C^2_{\pi}\sum_{n=0}^{N-1} \frac{1}{n+1}\\
 ND_N &\leq  C^2_{\pi} (\log(N)+1)\\
 D_N &\leq  C^2_{\pi} \frac{\log(N)+1}{N},
\end{align*}
where the Harmonic number $\sum_{n=0}^{N-1} \frac{1}{n+1}$ has been bounded by $\log(N)+1$.
\end{proof}
In the proof above we used the following lemmas from \citep{huix2024theoretical} (Lemma 8 to 10 therein). We provide their proofs for completeness. 
\begin{lemma}\label{lemma1}
    The function 
    $
    B(x) = \frac{x\log x - x +1}{(x-1)^2} \quad \forall x \in \R^{+*}\backslash\{1\},
    $ is decreasing.
\end{lemma}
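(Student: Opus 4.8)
The plan is to show $B'(x) < 0$ for every $x \in \R^{+*}\setminus\{1\}$, and to bridge the removable singularity at $x=1$ by a continuity argument. First I would write $B = g/h$ with $g(x) = x\log x - x + 1$ and $h(x) = (x-1)^2$, observe that $g'(x) = \log x$ and $h'(x) = 2(x-1)$, and apply the quotient rule. Since $h(x)^2 > 0$, the sign of $B'$ coincides with the sign of the numerator $N(x) = g'(x)h(x) - g(x)h'(x) = \log x\,(x-1)^2 - 2(x-1)(x\log x - x + 1)$.

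The key algebraic step is to factor out $(x-1)$, which gives $N(x) = (x-1)\,\psi(x)$ with $\psi(x) := 2(x-1) - (x+1)\log x$. It then suffices to show that $\psi(x)$ always has the opposite sign to $(x-1)$, so that the product $N(x)$ is negative. I would establish this by analyzing $\psi$: one has $\psi(1) = 0$, $\psi'(x) = 1 - \log x - 1/x$ with $\psi'(1) = 0$, and $\psi''(x) = (1-x)/x^2$. The sign of $\psi''$ shows that $\psi'$ increases on $(0,1)$ and decreases on $(1,\infty)$, so $\psi'$ attains its maximum value $0$ at $x=1$; hence $\psi'(x) < 0$ for all $x \neq 1$, making $\psi$ strictly decreasing. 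Together with $\psi(1) = 0$, this yields $\psi(x) > 0$ on $(0,1)$ and $\psi(x) < 0$ on $(1,\infty)$.

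Putting the pieces together: on $(0,1)$ we have $(x-1) < 0$ and $\psi(x) > 0$, whereas on $(1,\infty)$ we have $(x-1) > 0$ and $\psi(x) < 0$; in both regimes $N(x) = (x-1)\psi(x) < 0$, so $B'(x) < 0$ throughout. This shows $B$ is strictly decreasing separately on $(0,1)$ and on $(1,\infty)$. To obtain monotonicity across the gap at $x=1$, I would note that a second-order Taylor expansion of $g$ at $1$ gives $\lim_{x\to 1} B(x) = 1/2$, so for any $x_1 < 1 < x_2$ in the domain one has $B(x_1) > 1/2 > B(x_2)$, which completes the argument.

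I expect the main obstacle to be the sign analysis of $\psi$: the factorization of $N$ is routine, but pinning down that $\psi$ changes sign exactly at $1$ genuinely requires the second-derivative argument, since $\psi'$ is itself not monotone and cannot be signed directly.
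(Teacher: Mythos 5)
Your proposal is correct and follows the same overall strategy as the paper's proof: differentiate $B$, reduce to the sign of the numerator $(x-1)\log x - 2(x\log x - x + 1)$ (your $\psi(x) = 2(x-1)-(x+1)\log x$ after expansion) on each side of $x=1$, and stitch the two intervals together using the common limit $B(x)\to 1/2$ at $x=1$. The difference is one of completeness rather than of route: the paper simply \emph{asserts} that the numerator is positive on $(0,1)$ and negative on $(1,\infty)$, whereas this sign is genuinely not obvious (both terms of $\psi$ have opposite signs on each interval), and your second-derivative analysis of $\psi$ — showing $\psi'' = (1-x)/x^2$ forces $\psi'\le 0$ with equality only at $1$, hence $\psi$ strictly decreasing through $\psi(1)=0$ — is precisely the missing justification. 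So your proof fills a real gap in the paper's argument. One small simplification you could use: $\psi'(x) = 1 - \log x - 1/x \le 0$ follows immediately from the standard inequality $\log x \ge 1 - 1/x$, so the second-derivative step, while correct, is not strictly needed.
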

\begin{proof}  For all $ x \in \R^{+*}\backslash\{1\}$ the gradient of $B$ writes:
    \begin{align*}
        \nabla B(x) = \frac{(x-1)\log x - 2(x\log x - x + 1)}{(x-1)^3}
    \end{align*}
\begin{itemize}
    \item For $x \in (0,1)$,  the denominator is strictly negative and the numerator strictly positive, thus $\nabla B(x) \leq 0$.
    \item For $x \in (1,\infty)$, the denominator is stricly positive and the numerator is strictly negative, thus $\nabla B(x) \leq 0$.
\end{itemize}
So $B$ is decreasing on both intervals, and $\lim_{x \rightarrow1^-} = \frac{1}{2}$ and $\lim_{x \rightarrow1^+} = \frac{1}{2}$  by Hospital's rule.
\end{proof}
\begin{lemma}\label{lemma2}
The function $B$ satisfies: 
 $   B(x)(x-1) \leq \sqrt{x} -1\quad \forall x \in \R^{+*}\backslash\{1\}.$
\end{lemma}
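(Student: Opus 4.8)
The plan is to clear the denominator and reduce everything to the sign of a single one-variable function via the substitution $t=\sqrt{x}$. First observe that $B(x)(x-1)=\frac{x\log x-x+1}{x-1}$, so the claim $B(x)(x-1)\le \sqrt{x}-1$ is equivalent, after multiplying through by $x-1$ (which flips the inequality when $x<1$), to comparing $x\log x-x+1$ with $(\sqrt{x}-1)(x-1)$. Concretely, for $x>1$ I must show $x\log x-x+1\le(\sqrt{x}-1)(x-1)$, while for $0<x<1$ the reversed inequality $x\log x-x+1\ge(\sqrt{x}-1)(x-1)$ is required. I would state these two cases separately at the outset to avoid sign mistakes.

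Next I would introduce $h(x):=x\log x-x+1-(\sqrt{x}-1)(x-1)$ and substitute $t=\sqrt{x}>0$. Using $x\log x=2t^2\log t$ and expanding $(\sqrt{x}-1)(x-1)=(t-1)^2(t+1)=t^3-t^2-t+1$, the expression collapses to $h=t\bigl(2t\log t-t^2+1\bigr)$. Since $t>0$, the sign of $h$ is exactly the sign of $\psi(t):=2t\log t-t^2+1$, and the two requirements above become: $\psi(t)\ge 0$ for $t<1$ and $\psi(t)\le 0$ for $t>1$.

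To establish this, I would show $\psi$ is nonincreasing on $(0,\infty)$ with $\psi(1)=0$. Differentiating gives $\psi'(t)=2\log t+2-2t=2(\log t+1-t)$, and the elementary bound $\log t\le t-1$ (with equality only at $t=1$) yields $\psi'(t)\le 0$ everywhere, so $\psi$ is strictly decreasing. Combined with $\psi(1)=0$, this gives $\psi(t)>0$ for $t<1$ and $\psi(t)<0$ for $t>1$, which is precisely what the two cases demand, completing the proof.

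The computation is entirely routine once the substitution is made; the only mildly delicate point is the inequality flip for $0<x<1$ (where $x-1$ and $\sqrt{x}-1$ are both negative), handled by separating cases. The real crux is the substitution $t=\sqrt{x}$: it is what turns the awkward $(\sqrt{x}-1)(x-1)$ term into a polynomial, cancels the quadratic part against $x\log x=2t^2\log t$, and exposes the clean monotone function $\psi$ whose sign drives the whole argument.
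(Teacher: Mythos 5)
Your proof is correct. It is closely related to the paper's argument but is genuinely more self-contained: the paper's proof simply \emph{asserts} the logarithmic--geometric-mean type inequality $\log x \le \frac{x-1}{\sqrt{x}}$ for $x>1$ (and its reversal for $0<x<1$) and then divides through by $x-1$ to conclude, whereas your function $\psi(t)=2t\log t - t^2+1$ is exactly that inequality in disguise after the substitution $x=t^2$: indeed $\psi(t)\le 0 \iff 2\log t \le \frac{t^2-1}{t} \iff \log x \le \frac{x-1}{\sqrt{x}}$. So the crux inequality is the same in both proofs, but where the paper treats it as known, you actually establish it, via the monotonicity argument $\psi'(t)=2(\log t + 1 - t)\le 0$ with $\psi(1)=0$, which rests only on the primitive bound $\log t \le t-1$. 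Your route costs an extra substitution, a case split to handle the sign of $x-1$ (which you manage correctly), and one differentiation, but it buys a complete argument; the paper's version is shorter yet leaves its key inequality unjustified.
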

\begin{proof}
Let  $C(x) :=  B(x)(x-1) = \frac{x\log x - x +1}{x-1}$ 
\begin{itemize}
    \item For $x \in (0,1)$,  $\log x \geq \frac{x-1}{\sqrt{x}}$ implies $\frac{\log x}{x-1} \leq \frac{1}{\sqrt{x}} \Rightarrow \frac{x\log x}{x-1} \leq \frac{x}{\sqrt{x}} = \sqrt{x}$,
    \item For $x \in (1,\infty)$, $\log x \leq \frac{x-1}{\sqrt{x}}$ implies $\frac{\log x}{x-1} \leq \frac{1}{\sqrt{x}}  \Rightarrow \frac{x\log x}{x-1} \leq \frac{x}{\sqrt{x}} = \sqrt{x}$.
\end{itemize}
and 
 $   C(x) - \sqrt{x} - 1 = \frac{x\log x}{x -1} - \sqrt{x} \leq 0.$
\end{proof}

\begin{lemma}\label{lemma3}
Let consider $\alpha = \frac{1}{n+1}\quad \forall n \in \mathbb{N}$, then 
$\alpha + (1-\alpha)\log(1-\alpha)+2\alpha \left(\sqrt{1-\alpha} -1\right)\leq 0.$
\end{lemma}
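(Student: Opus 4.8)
The plan is to prove the inequality for every real $\alpha \in [0,1]$ rather than only for the discrete values $\alpha = 1/(n+1)$: nothing in the argument uses the particular form of $\alpha$, and a continuous variable is cleaner. Set
\[
g(\alpha) := \alpha + (1-\alpha)\log(1-\alpha) + 2\alpha\big(\sqrt{1-\alpha}-1\big),
\]
extended continuously at $\alpha=1$ (where $(1-\alpha)\log(1-\alpha)\to 0$, so $g(1)=-1$). The goal is $g(\alpha)\le 0$ on $[0,1]$; the lemma then follows by evaluating at $\alpha=1/(n+1)\in(0,1]$.

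My preferred route is elementary and avoids differentiation. I would bound the two terms carrying the ``bad'' sign by one-line estimates. From $\log(1-\alpha) = -\sum_{k\ge 1}\alpha^k/k$ one gets $\log(1-\alpha)\le -\alpha - \tfrac{\alpha^2}{2}$ for $\alpha\in[0,1)$, since every dropped term is negative; and squaring gives $\sqrt{1-\alpha}\le 1-\tfrac{\alpha}{2}$. Because $1-\alpha\ge 0$, multiplying the first estimate by $(1-\alpha)$ preserves its direction, yielding $(1-\alpha)\log(1-\alpha)\le -\alpha + \tfrac{\alpha^2}{2} + \tfrac{\alpha^3}{2}$, while the second gives $2\alpha(\sqrt{1-\alpha}-1)\le -\alpha^2$. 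Substituting both into $g$, the first-order terms $\pm\alpha$ cancel and one is left with
\[
g(\alpha) \;\le\; \tfrac{\alpha^2}{2} + \tfrac{\alpha^3}{2} - \alpha^2 \;=\; \tfrac{\alpha^2}{2}\,(\alpha-1),
\]
which is $\le 0$ since $\alpha\le 1$.

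As a backup I would use a calculus argument: check $g(0)=0$ and $g'(0)=0$, then show $g$ is concave. Computing $g''(\alpha) = \tfrac{1}{1-\alpha} - \tfrac{2}{\sqrt{1-\alpha}} - \tfrac{\alpha}{2(1-\alpha)^{3/2}}$ and substituting $t=\sqrt{1-\alpha}\in(0,1]$, the sign of $g''$ matches that of $-(3t^2 - 2t + 1)$; this quadratic has discriminant $4-12<0$ and positive leading coefficient, hence is strictly positive, so $g''<0$. Concavity together with $g(0)=g'(0)=0$ then forces $g(\alpha)\le 0$.

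The main obstacle is not conceptual — both routes are short. The only points demanding care are getting the directions of the two elementary estimates right and confirming that multiplying by $(1-\alpha)\ge 0$ preserves the inequality, together with handling the removable singularity of $(1-\alpha)\log(1-\alpha)$ at $\alpha=1$. The real risk is merely an arithmetic slip in the cancellation (elementary route) or in the second differentiation (calculus route).
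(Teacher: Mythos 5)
Your proposal is correct, and I checked the computations in both routes: the bounds $(1-\alpha)\log(1-\alpha)\le-\alpha+\tfrac{\alpha^2}{2}+\tfrac{\alpha^3}{2}$ and $2\alpha\left(\sqrt{1-\alpha}-1\right)\le-\alpha^2$ are valid, the cancellation to $\tfrac{\alpha^2}{2}(\alpha-1)\le0$ is right, and in the backup route your expression for $g''$ and its sign analysis via $t=\sqrt{1-\alpha}$ (with $3t^2-2t+1>0$ since its discriminant is $-8$) are also correct.

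The paper's proof shares the elementary spirit of your primary route but is organized differently: it uses only the first-order bound $\log(1-\alpha)\le-\alpha$, which leaves the expression $\alpha\left(\alpha-2+2\sqrt{1-\alpha}\right)$, and then argues that the factor $\alpha-2+2\sqrt{1-\alpha}$ is nonpositive by a terse monotonicity claim (in fact this factor equals $-\left(1-\sqrt{1-\alpha}\right)^2$, which the paper never states; the paper's intermediate step $\alpha x\le x$ for this negative $x$ and $\alpha\le1$ is even written with the inequality pointing the wrong way, although the final conclusion $\alpha x\le 0$ holds regardless since $\alpha\ge0$ and $x\le0$). Your sharper second-order bounds on both the logarithm and the square root make everything collapse into a single polynomial inequality with no auxiliary function to analyze, which is cleaner and avoids exactly the kind of sign slip the paper's chain contains; you also obtain the statement for all real $\alpha\in[0,1]$ rather than only $\alpha=1/(n+1)$, a mild but genuine strengthening. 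Your backup concavity argument is a genuinely different, calculus-based proof not present in the paper, also correct but longer. Finally, your handling of the endpoint $\alpha=1$ (the case $n=0$) via the convention $0\log 0=0$ agrees with direct evaluation of the statement, so that case is covered as well.
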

\begin{proof} We have:
\begin{align*}
    \alpha + (1-\alpha)\log(1-\alpha)+2\alpha \left(\sqrt{1-\alpha} -1\right)&= -\alpha + (1-\alpha)\log(1-\alpha)+2\alpha \sqrt{1-\alpha}\\
    &\leq \alpha \left( \alpha - 2 + 2 \sqrt{1-\alpha} \right)\\
    &\leq\alpha - 2 + 2 \sqrt{1-\alpha} \\
    &\leq 0,
\end{align*}
using $\log(1-\alpha) \leq -\alpha$ and the fact that $\alpha = 1/(n+1) \leq 1$ for the first and second inequality. For the last inequality we have used the fact that the before last expression is decreasing and is equal to $0$ when $\alpha$ goes to $0$.
\end{proof}

\section{Additional experiments and details}\label{sec:app_experiments}
\subsection{Updates for the full-covariance matrices scheme of \citet[Section 5.2]{lambert2022variational}}\label{sec:app_lambertsec52_update}
In this section we detail the updates for the full-covariance scheme of \citet[Section 5.2]{lambert2022variational}. The parameter space is \(\Theta = \mathbb{R}^d \times \mathbb{S}_{++}^d\) (the space of means and covariance matrices). %
Consider initializing this evolution at a finitely supported distribution \(p_0\):%
\begin{equation*}
    p_0 = \frac{1}{N} \sum_{i=1}^{N} \delta_{\theta_0^{(i)}} = \frac{1}{N} \sum_{i=1}^{N} \delta_{(m_0^{(i)}, \Sigma_0^{(i)})} 
\end{equation*}

It has been  checked in \citet{lambert2022variational} that the system of ODEs thus initialized maintains a finite mixture distribution:
\begin{equation*}
    p_t = \frac{1}{N} \sum_{i=1}^{N} \delta_{\theta_t^{(i)}} = \frac{1}{N} \sum_{i=1}^{N} \delta_{(m_t^{(i)}, \Sigma_t^{(i)})},
\end{equation*}

where the parameters \(\theta_t^{(i)} = (m_t^{(i)}, \Sigma_t^{(i)})\) evolve according to the following interacting particle system, for \(i \in [N]\)
\begin{align*}
    \dot{m}_t^{(i)} &= - \mathbb{E} \nabla \ln %
    \frac{\nu_t}{\pi} \left( Y_t^{(i)} \right), \\
    \dot{\Sigma}_t^{(i)} &= - \mathbb{E} \nabla^2 \ln %
    \frac{\nu_t}{\pi} \left( Y_t^{(i)} \right) \Sigma_t^{(i)} - \Sigma_t^{(i)} \mathbb{E} \nabla^2 \ln %
    \frac{\nu_t}{\pi} \left( Y_t^{(i)} \right), 
\end{align*}
where \( Y_t^{(i)} \sim p_{\theta_t^{(i)}} \) and $\nu_t = \int \cN_{\theta}dp_t(\theta)$. In their experiments, the ODEs were solved using a fourth-order Runge–Kutta scheme. In our experiments we applied the BW SGD described in \citet[Algorithm 1]{lambert2022variational} . %

\subsection{Experimental details}\label{sec:details_exp}
As mentioned in \Cref{sec:experiments}, we detail here our experimental setup and hyperparameters whose values are provided in \Cref{tab:hyperparams}.

{\setlength{\tabcolsep}{2pt}
\renewcommand{\arraystretch}{1.3} %
\begin{table}
\centering
\caption{Hyperparameters}
\label{tab:hyperparams}
\begin{tabular}{@{} 
   l         %
   l l     %
   l l    %
   l         %
   @{}}
\toprule
& \multicolumn{2}{c}{MOG Init.}
& \multicolumn{2}{c}{Optim}
& Targets \\
\cmidrule(lr){2-3} \cmidrule(lr){4-6}
Exp. 
    & $s$ & $r$ 
    & $\gamma$ & $n_{\mathrm{iter}}$
    &  \\
\midrule
\Cref{fig:2D_main}
    &  &  
    &  &  
    &  \\
    
 \quad\texttt{mog} 
    & 15        & 2 
    & $10^{-1}$ & $10^3$   
    & $\{\starg, \rtar, \Ntar\}=\{8,5,5\}$ \\
\Cref{fig:computation_cost}
    &  &  
    &  &    
    &  \\
    
\quad\texttt{mog} 
    & $10^2d^{-1}$ & 10  
    & $10^{-2}  d^{-1}$ & $10^3$  
    & $\{\starg, \rtar, \Ntar\}=\{10^2d^{-1}, 5,5\}$ \\

\Cref{fig:20d-marginals}
    &  &  
    &  &    
    &  \\
    
\quad \texttt{mog} 
    & 30 & 100 
    & $10^{-2}$ & $10^4$ 
    & $\{\starg, \rtar, \Ntar\}=\{10, 1, 10\}$ \\
\Cref{fig:realdata}
    &  &  
    &   &    
    &  \\
\quad\texttt{breast\_cancer} 
    & 20 & 10 
    & $10^{-2}$ & $10^4$ 
    & $\sigma^2_{prior} = 100$  \\
\quad\texttt{wine}  
    & 20 & 10 
    & $10^{-2}$ & $10^2$  
    & $\sigma^2_{prior} = 100$ \\
\quad \texttt{boston}   
    & 10 & 10 
    & $10^{-6}$ & $10^4$ 
    & $\{\sigma^2_{prior}, h \} = \{10, 50\}$ \\
\Cref{fig:2d_expe}
    &  &  
    &   &  
    &  \\  
\quad \texttt{funnel}  (a)
    & 5 & 0.5 
    & $10^{-2}$ & $10^4$ 
    & $\sigma^2 = 1.2$ \\
\quad\texttt{sinh-arcsinh}  (b-1) 
    & 10 & 2 
    & $10^{-3}$ & $10^4$ 
    & $\text{skw} = (-0.2, -0.2) $ \\
\quad \texttt{sinh-arcsinh}  (b-2)
    & 10 & 2  
    & $10^{-3}$ & $10^4$  
    & $ \text{skw} = (-0.2, -0.5) $ \\
\quad \texttt{mog}  (c-1)
    & 10 & 5 
    & $10^{-2}$ &$10^4$ 
    & $\{\text{pt}, \rtar, \Ntar\}=\{3, 2, 4\}$ \\
\quad \texttt{mog}  (c-2) 
    & 10 & 5 
    & $10^{-2}$ &$10^4$ 
    &  $\{\text{pt}, \rtar, \Ntar\}=\{4, 1, 4\}$ \\
\quad \texttt{mog}  (c-3) 
    & 10 & 5  
    & $10^{-2}$ &$10^4$ 
    &  $\{\text{pt}, \rtar, \Ntar\}=\{3, 2, 4\}$ \\
\Cref{fig:10d_expe_app}
    &  &  
    &   &  
    &  \\  
\quad \texttt{mog}
    & 30 & 100
    & $10^{-3}$ & $10^4$ 
    &$\{\starg, \rtar, \Ntar\}=\{20, 5, 5\}$  \\
\Cref{fig:50d_expe_app}
    &  &  
    &   &  
    &  \\  
\quad \texttt{mog}
    & 30 & 100
    & $10^{-3}$ & $10^4$ 
    &$\{\starg, \rtar, \Ntar\}=\{10, 10, 10\}$  \\
\bottomrule
\end{tabular}
\end{table}
}

\textbf{Initialization of the variational mixture:} For $N \in \mathbb{N}^*$ a given number of components, we initialize the variational mixture by sampling the means in a ball of size $[-s,s]^d$, where $s \in \R^{+*}$, and  setting each covariance matrix to $r\,I_d,\,$ where $ r \in \R^{+*}$. For the GD algorithm (mean optimization only, following \cite{huix2024theoretical}), the variances are initialized in the same way but kept fixed during optimization.

\textbf{Optimization hyperparameters:} We set the step-size $\gamma$, the number of iteration $\nit$, the number of Monte Carlo samples to $\Bgrad = 10$ for gradient estimation, and to $\Bkl = 1000$ for the $\KL$ objective estimation.

\textbf{Normalizing Flows:} For the NF baseline, we used a simplified RealNVP architecture \citep{dinh2016density}, based on the code available at \url{https://github.com/marylou-gabrie/tutorial-sampling-enhanced-w-generative-models},  with  $b = 2$ coupling layers and hidden dimension $h = 124$, which yields a Neural Network (NN) with $4976$ parameters in dimension $d=2$. Our isotropic mixture model has $N(d+1)=3N$ parameters, even for large $N$, the NF model remains more complex and costly to optimize. Therefore, for each target distribution, we tuned the learning rate and number of iterations for the NF method separately, rather than using the same settings as for the VI mixture methods, since their optimization dynamics differ significantly.

\textbf{MOG targets:} To generate target MoG distributions, as in the initalization of variational MOG, we fix $\starg$ and $\Ntar$. Each component covariance matrix is constructed by sampling a random symmetric positive-definite matrix (full diagonal or isotropic) and scaling it by $\rtar$. We draw raw weights uniformly in $\{1,\dots,2\Ntar\}$ and normalize them to one. In \Cref{fig:2d_expe}(c) all weights are equal except in case (c-3), where one component has weight $0.1$ and the remaining ones share the remaining mass equally and the component means are placed at $(\pm \text{pt},\pm \text{pt})$.

\textbf{Datasets:} We have used popular datasets from the UCI repository, as well as MNIST. The training ratio has been set to $0.5$ for UCI datasets and $0.8$ for MNIST.

\textbf{Computational resources:}  All experiments (except MNIST) were conducted on a MacBook Air (M3, 2024) with an Apple M3 processor and 16 GB of RAM. The MNIST experiments were run on an NVIDIA 50-90 GPU. Experiment runtimes ranged from a few seconds to up to two hours.

\subsection{Additional 2-D examples}\label{sec:app_2d_expe}
We present more experiments on 2D synthetic target distributions on~\Cref{fig:2d_expe}. These target distributions are defined below.

\textbf{Funnel distribution:} The funnel distribution \citep{neal2003slice} in dimension \(d=2\) has density
\[
p(x_1, x_2)
= \mathcal{N}\bigl(x_1;0,\sigma^2\bigr)\;\times\;\mathcal{N}\bigl(x_2;0,e^{x_1}\bigr),
\]
for \(x=(x_1,x_2)\in\mathbb{R}^2\).  We follow the setting of \citet{cai2024eigenvi} by fixing \(\sigma^2=1.2\).
Although unimodal, this “funnel” shape is difficult to capture with isotropic Gaussians.  We experimented with \(N=5,\,20,\,40\) components, but even for large \(N\), our isotropic mixtures struggled, and the BW and NF methods still outperformed them.

\textbf{Sinh-arcsinh normal distribution:} This distribution \citep{article} applies a sinh–arcsinh transformation to a multivariate Gaussian to control the skewness \text{skw} and tail weight $\tau$.  Let
\[
Z_0 \sim \mathcal{N}(m,\Sigma), 
\qquad
Z = \sinh\!\bigl(\tau\,\sinh^{-1}(Z_0) - \text{skw}\bigr).
\]
In our experiments, we use \(\tau = (0.8,0.8)\),  \(m = (0,0)\), 
\(\Sigma = \begin{pmatrix}1 & 0.4\\ 0.4 & 1\end{pmatrix}\), 
and vary the skew parameter \(\text{skw}\) as specified in Table~\ref{tab:hyperparams}.
\begin{figure}
\includegraphics[width=1.\textwidth]{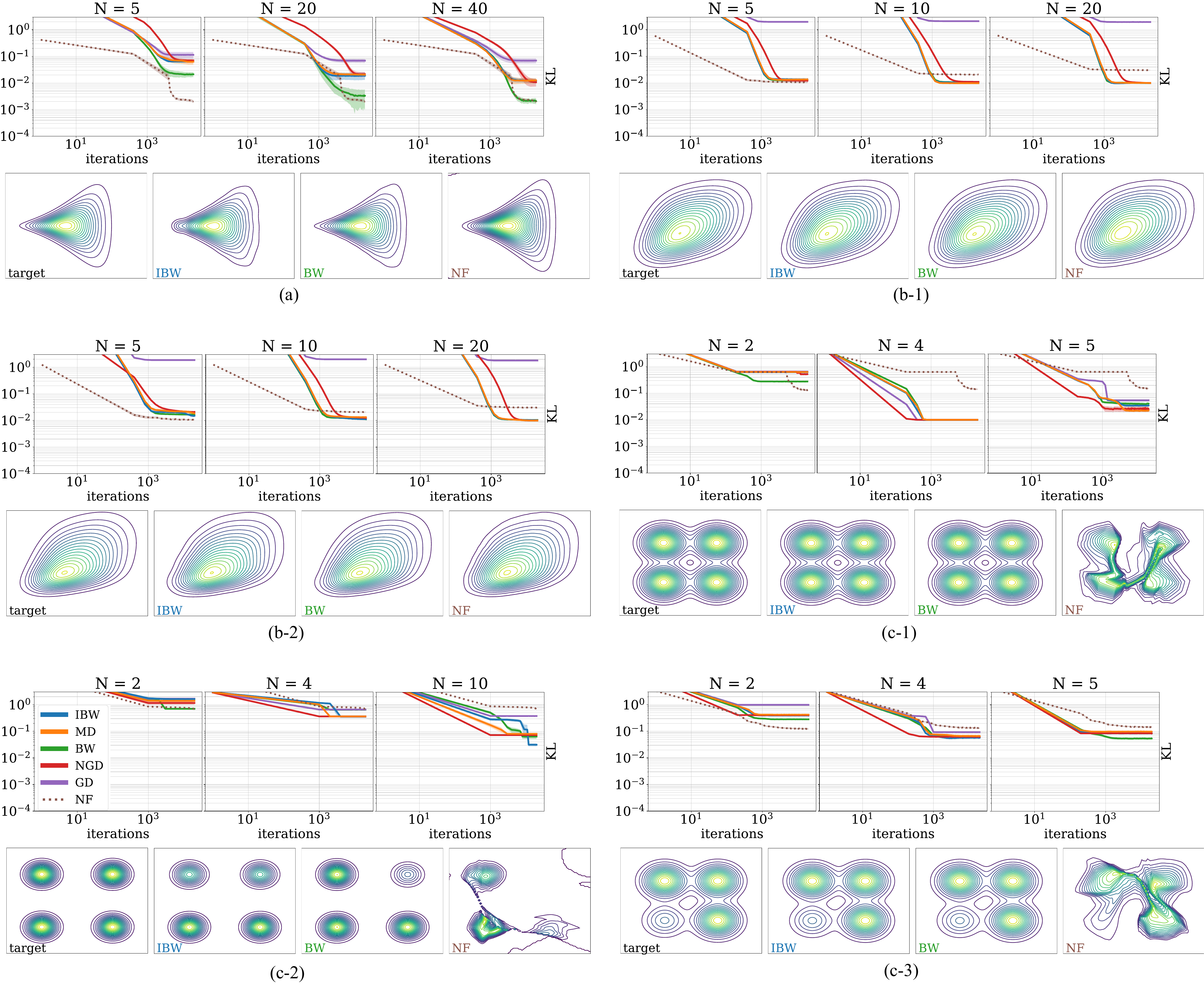}
    \caption{Evolution of the KL divergence over iterations together with the optimized variational mixture density on different type of target distribution: (a) Funnel, (b) sinh-arcsinh normal distribution and (c) Gaussian mixture. Optimization performed with different methods (IBW, MD, BW, NGD, GD and NF) and varying $N$ values.}

    \label{fig:2d_expe}
\end{figure}

In \Cref{fig:gauss_circle}, we visualize the optimized Gaussian components of the target density, highlighting the advantages of allowing each component to have its own $\epsilon$ value.

\begin{figure}[htbp]
\centering
\includegraphics[width=1.\textwidth]{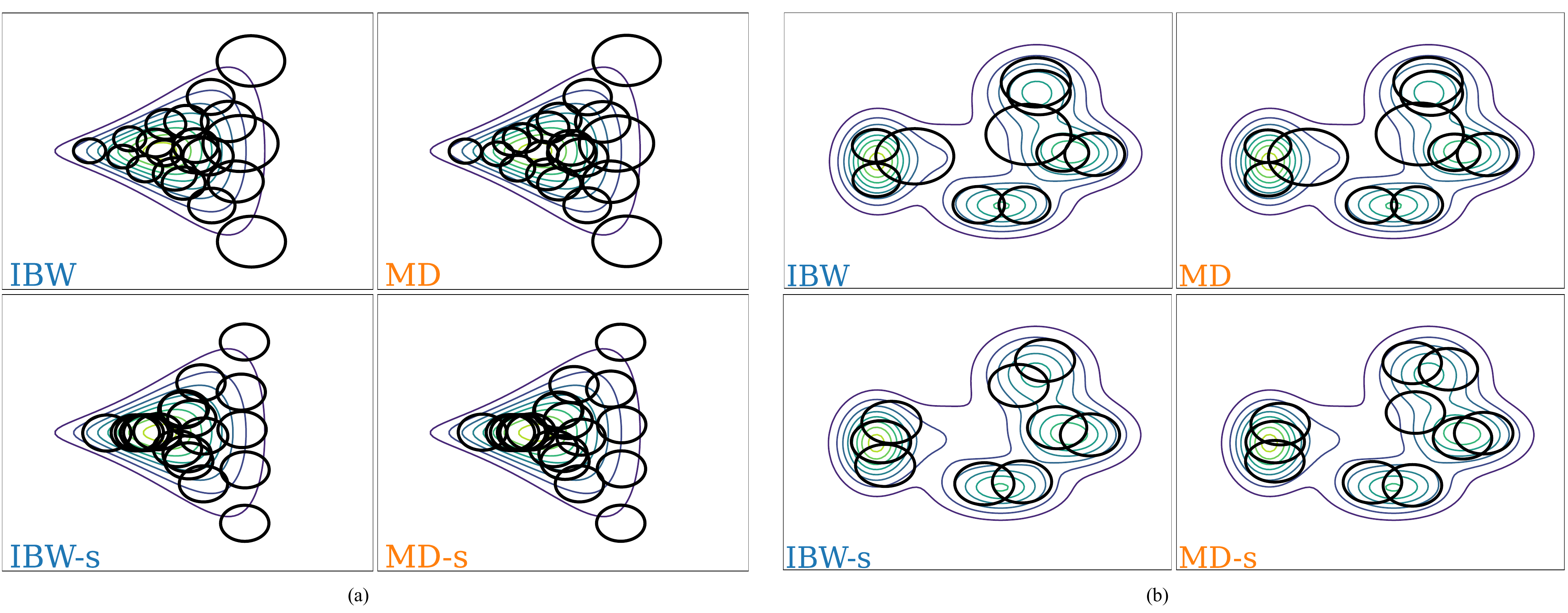}
    \caption{ Optimized isotropic Gaussians (represented as circles) over the target distributions for the proposed methods and their shared-variance variants. 
    (a) corresponds to the Funnel distribution in~\Cref{fig:2d_expe}(a); 
    (b) corresponds to the target used in~\Cref{fig:2D_main}}
    \label{fig:gauss_circle}
\end{figure}

\subsection{Bias induced by uniform weights mixture}\label{sec:unif_weights}
We investigate the bias induced by our choice of the variational family, i.e. mixtures (of isotropic Gaussians) with uniform weights. Our goal in this section is to analyze how such a constraint affects the approximation of a bimodal target distribution with highly unbalanced mode weights.

We consider as the target a two-dimensional Gaussian mixture with two components and weights  $(p,1-p)$  where $p=0.1$. We seek the optimal approximation within our variational family of $N$ isotropic Gaussian components with uniform weights. To mitigate optimization issues and being trapped in a local minima, we initialize the means of our variational approximation by sampling the initial components means in two small balls centered around each target mode (where the number of initial means sampled in a ball is proportional to $Np$ and $N(1-p)$ respectively).

As expected, when $N=2$, the weights constraint induces a strong bias: the approximation either ignores the low-weight mode or assigns equal mass to both modes. However, as 
$N$ increases, this bias progressively vanishes. For sufficiently large $N$, the model reallocates more components to the dominant mode and fewer to the lighter one, effectively recovering the correct proportions despite the uniform-weight constraint.

To quantify how well the variational mixture captures both the mode locations and their effective weights, we drew $B = 10,000$ samples from both the variational distribution and the target distribution and applied K-means clustering with $n\mathrm{_{clusters}} = 2$, then analyzed the cluster assignments and proportions. The reported errors are: $E_1$ the mean Euclidean distance between the target and variational modes, and $E_2$ the absolute difference between their corresponding cluster weights. The first metric captures whether the variational modes are well located, while the second one measures whether their relative proportions are well captured. Each experiment is repeated 10 times; we report the average result and error bars. \Cref{fig:bias_weights} shows that both errors rapidly decrease with $N$, demonstrating that uniform-weight Gaussian mixtures can approximate highly unbalanced multimodal targets remarkably well when sufficiently overparameterized. We also display the estimated variational density and Gaussians components (circles) along with the target density.  
\begin{figure}
\includegraphics[width=1.\textwidth]{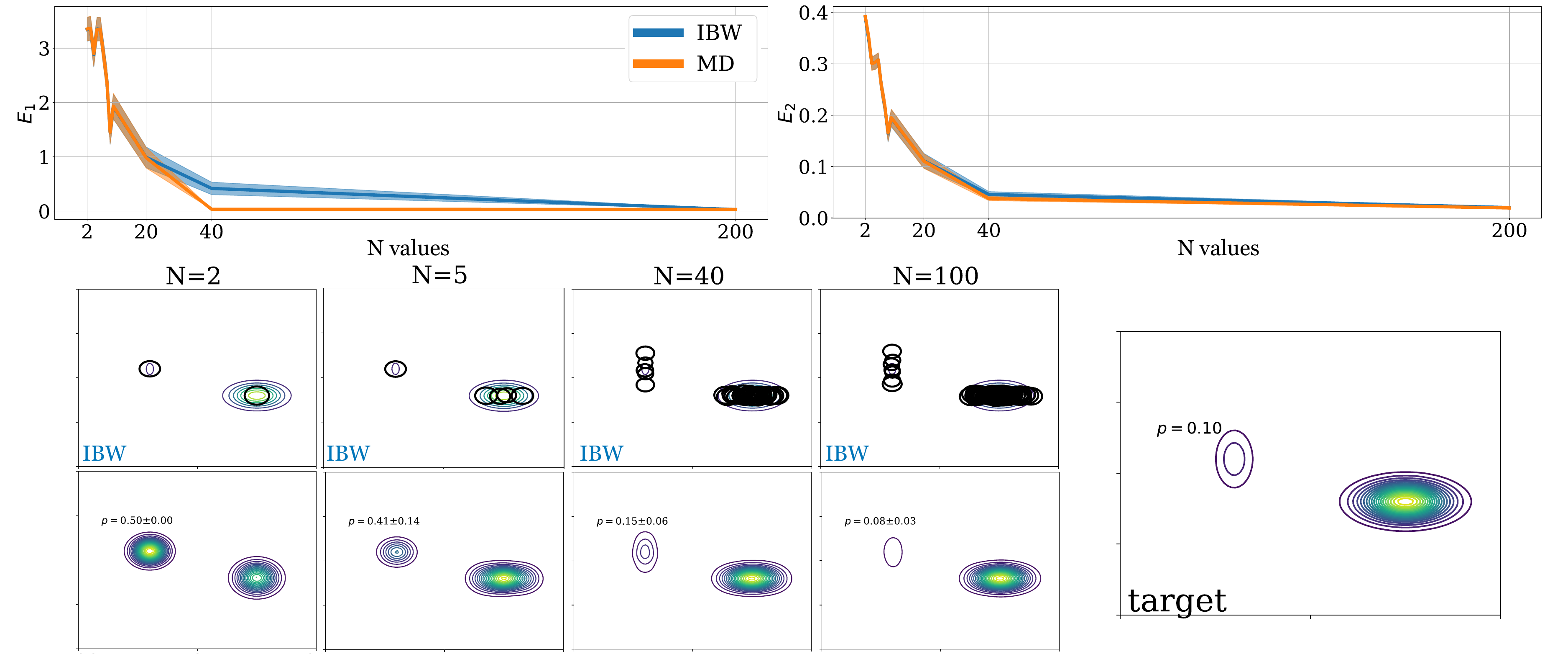}
    \caption{Errors, visualization and estimation of the bias induced by the uniform weights constraint as a function of $N$ for a two dimensional unbalanced mixture of Gaussians target.}
    \label{fig:bias_weights}
\end{figure}

\subsection{Additional high-dimensional mixtures}\label{sec:app_gm_expe}
We first provide the full marginals of the experiment described in \Cref{fig:20d-marginals} in \Cref{fig:20d_expe_app}. 
We also performed experiments in $d = 10$ \Cref{fig:10d_expe_app} and $d = 50$ \Cref{fig:50d_expe_app}. 

\begin{figure}[H]
\includegraphics[width=1.\textwidth]{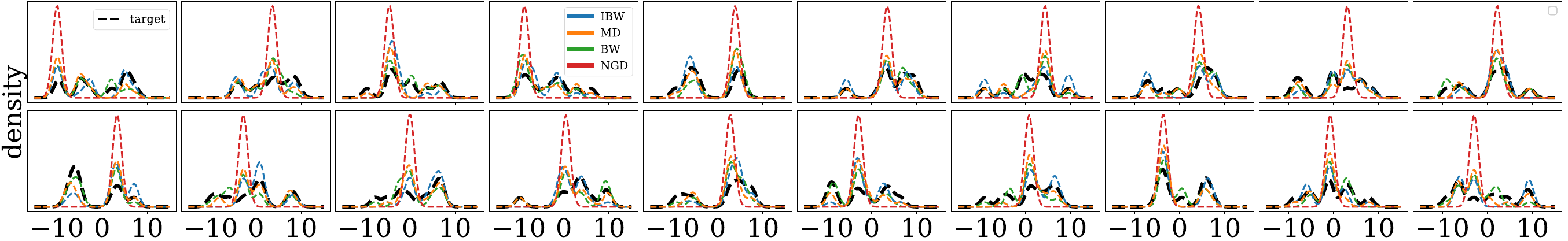}
    \caption{Marginals for MD, IBW, BW and NGD $(d=20)$.}

    \label{fig:20d_expe_app}
\end{figure}

\begin{figure}[H]
\includegraphics[width=1.\textwidth]{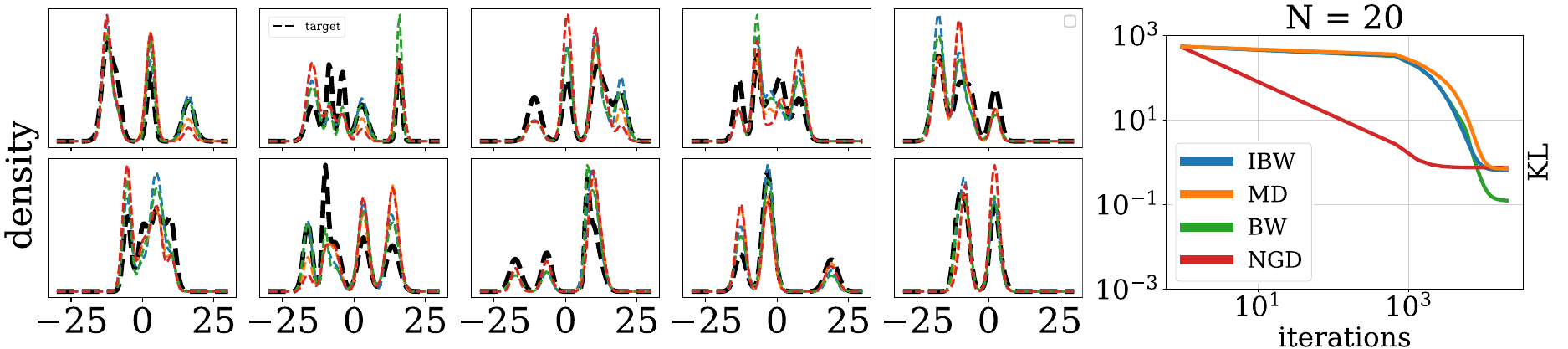}
    \caption{Marginals (left) and KL objective (right) for MD, IBW, BW and NGD (d = 10).}

    \label{fig:10d_expe_app}
\end{figure}

\begin{figure}[H]
\includegraphics[width=1.\textwidth]{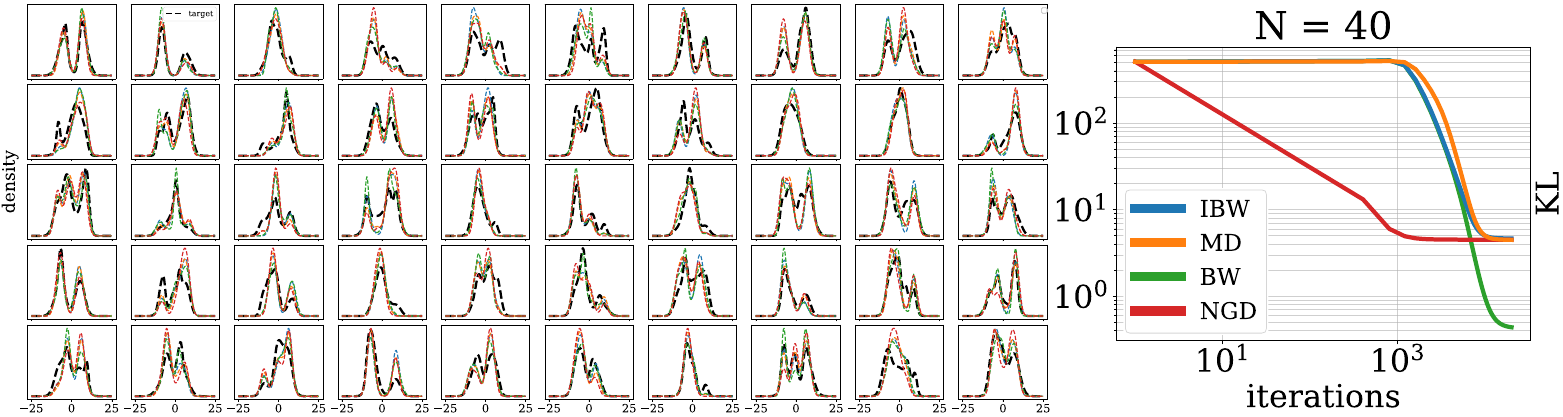}
    \caption{Marginals (left) and KL objective (right) for MD, IBW, BW and NGD (d = 50).}

    \label{fig:50d_expe_app}
\end{figure}

\subsection{More details on the Bayesian inference examples}\label{sec:details_expe_bayesian_posterior}

In this section we provide some background on the Bayesian inference examples of \Cref{sec:experiments} as well as additional experiments.

\subsubsection{Definition of  the target distributions}\label{sec:expe_data}
Let $\mathcal{D} = \{(x_i,y_i)\}_{i=1}^n$ be a labeled dataset, where $x_i\in\mathbb{R}^d$ and $y_i$ is the associated label.  

\textbf{Binary logistic regression: } We model the probability of a binary label $y_i\in\{0,1\}$ given $x_i$ and parameter $z\in\mathbb{R}^d$ by
\[
\pi(y_i \mid x_i, z)
= \sigma(x_i^\top z)^{y_i}\,(1-\sigma(x_i^\top z))^{1-y_i},
\]
where $\sigma(t)=1/(1+e^{-t})$ is the logistic function.  The likelihood is
\[
\mathcal{L}(\mathcal{D}\mid z)
= \prod_{i=1}^n \pi(y_i\mid x_i,z),
\]
and the log-likelihood is
\[
\ell(\mathcal{D}\mid z)
= \sum_{i=1}^n \log\pi(y_i\mid x_i,z)
= \sum_{i=1}^n \bigl[y_i\,(x_i^\top z) - \log\bigl(1 + e^{x_i^\top z}\bigr)\bigr].
\]
With a Gaussian prior $\pi(z) = \mathcal{N}(0,\sigma^2_{prior} I_d)$, the posterior is

\begin{equation}\label{posterior}
    \pi(z\mid \mathcal{D})
\;\propto\;\mathcal{L}(\mathcal{D}\mid z)\,\pi(z)
\end{equation}

and its gradient is
\[
\nabla_z \log\pi(z\mid \mathcal{D})
= \sum_{i=1}^n\bigl(y_i - \sigma(x_i^\top z)\bigr)\,x_i
\;-\;\frac{z}{\epsilon_z}.
\]

\textbf{Multi class logistic regression:} For $L$ classes, let $z=(z_1,\dots,z_L)$ with each $z_l\in\mathbb{R}^d$.  Then
\[
\pi(y_i = l \mid x_i, z)
= \frac{\exp(x_i^\top z_l)}{\sum_{l=1}^L \exp(x_i^\top z_{l})}\,, 
\qquad l=1,\dots,L.
\]

\textbf{Linear regression:} The classical linear model is
\[
y_i = z^\top x_i + \xi_i,\quad \xi_i\sim\mathcal{N}(0,\sigma^2),
\]
so that 
\[
y_i \sim \mathcal{N}\bigl(x_i^\top z,\,\sigma^2\bigr).
\]
and the ordinary least squares estimator is
\[
\hat z \;=\; \arg\min_{z\in\mathbb{R}^d} \sum_{i=1}^n (y_i - z^\top x_i)^2.
\]
for which we are able to find a close form when $X = (x_i)_{i=1}^n$ is invertible.

In our Bayesian setting we aim at finding a distribution on $z$, put a prior on it, and approximate the resulting posterior $\pi(z\mid\mathcal{D})$ via variational inference.

\textbf{Bayesian Neural Network:} In the Bayesian neural network (BNN) setting, the linear predictor $z^\top x_i$ is replaced with a neural network output $f(x_i\mid z)$ and model
\[
y_i \sim \mathcal{N}\bigl(f(x_i\mid z),\,\sigma^2\bigr).
\]
In our experiments we use a single hidden layer with $h$ hidden units, ReLU activation function and output dimension $c$. Thus, the dimension of parameters and thus of the problem is
\[
d= h\,(d_{\rm data}+1) \;+\; c\,(h+1).
\]
  
For a $L$-class classification task, $c=L$ and the BNN output class probabilities
\(
\pi(y_i=l\mid x_i,z) = f(x_i\mid z)_l.
\)

Once the variational approximation to the posterior is optimized, we can make predictions by Bayesian model averaging:
\[
p(y\mid x)
= \int \pi(y\mid x,z)\,\pi_{\rm post}(z)\,dz,
\quad\text{or}\quad
\hat y = \int f(x\mid z)\,\pi_{\rm post}(z)\,dz.
\]

When $d$ is large (e.g.\ MNIST, where $d\approx10^5$), sampling or expectation under a full $d$-dimensional mixture becomes too expensive.  To address this, we adopt a mean‐field‐style approximation: we model the posterior as a product of identical univariate Gaussian‐mixture marginals,
\[
z_j \;\sim\; \frac{1}{N}\sum_{i=1}^N \mathcal{N}\bigl(m^i[j],\epsilon^i\bigr),
\qquad j=1,\dots,d,
\]
so that all $d$ dimensions share the same $N$-component mixture.  This reduces both memory and computational cost while retaining multimodality in each coordinate.
We updated $m^i, \epsilon^i$ using the presented algorithms.
In this setting, we follow a classical deep‐learning framework. We use a single‐layer neural network with \(h = 256\) hidden units and ReLU activation. The means of the variational mixture of Gaussians are initialized by sampling from a normal distribution, and a Gaussian prior is placed on the model parameters.

\textbf{Laplace Approximation:} In the Bayesian setting, a well-known approach to approximate the posterior distribution is to fit it with a Gaussian. Given the posterior $\pi(z\vert \mathcal{D)}$ defined in \Cref{posterior}, Laplace approximation methods consider a Gaussian $\cN(z^\star , \Sigma^\star)$ where 
\[
z^\star = \argmin_{z \in \R^d} U(z), \quad U(z) := -\log \pi(z\vert \mathcal{D}) = -\ell( \mathcal{D}\vert z) - \log \pi(z),
\]
is the Maximum a Posteriori (MAP), and $\Sigma^\star$ is the inverse of the hessian of $U$ evaluated at $z^\star$. Indeed, performing a second-order Taylor expansion of U around $z^\star$ yields:
\[
U(z) \approx U(z^\star) + (z-z^\star)^T\underbrace{\nabla U(z^\star)}_{=0}  + \frac{1}{2} (z-z^\star)^T H (z-z^\star), \quad H = \nabla^2U(z^\star)
\] which corresponds to the negative log of a gaussian with mean $z^\star$ and covariance $\Sigma^\star := H^{-1}$. The MAP estimate can be obtained with standard Euclidean optimization methods (e.g. gradient descent). However, in high-dimensional settings, the Hessian $H$ is typically too large to compute or invert directly, so it must be approximated. In our experiments, we focus on two practical approximations: the Diagonal approximation and the Kronecker-Factored (K-FAC) approximation \citep{ritter2018scalable}.

\begin{itemize}
    \item Diagonal approximation: 
    \[
    H = \mathrm{diag}(h_1, \dots, h_d ), \quad h_i = \frac{\partial^2U(z)}{\partial z_i^2} \Big\vert_{z = z^\star}\]
    \item K-FAC approximation:\[
H \approx \operatorname{diag}(H_1,\dots,H_L), \quad H_\ell \approx G_\ell \otimes A_\ell,
\]
\[
A_\ell := \E_{\mathcal{D}}\!\left[a_{\ell-1} a_{\ell-1}^\top\right], \quad
G_\ell := \E_{\mathcal{D}}\!\left[g_\ell g_\ell^\top\right],
\]
where $a_{\ell-1}$ are layer inputs and $g_\ell$ the backpropagated pre-activation gradients; expectations are empirical over $\mathcal{D}$ and all quantities are evaluated at $z^\star=\arg\min_{z} U(z)$. 
\end{itemize}

\subsubsection{Real data experiments}
\begin{figure}[h]
\includegraphics[width=1.\textwidth]{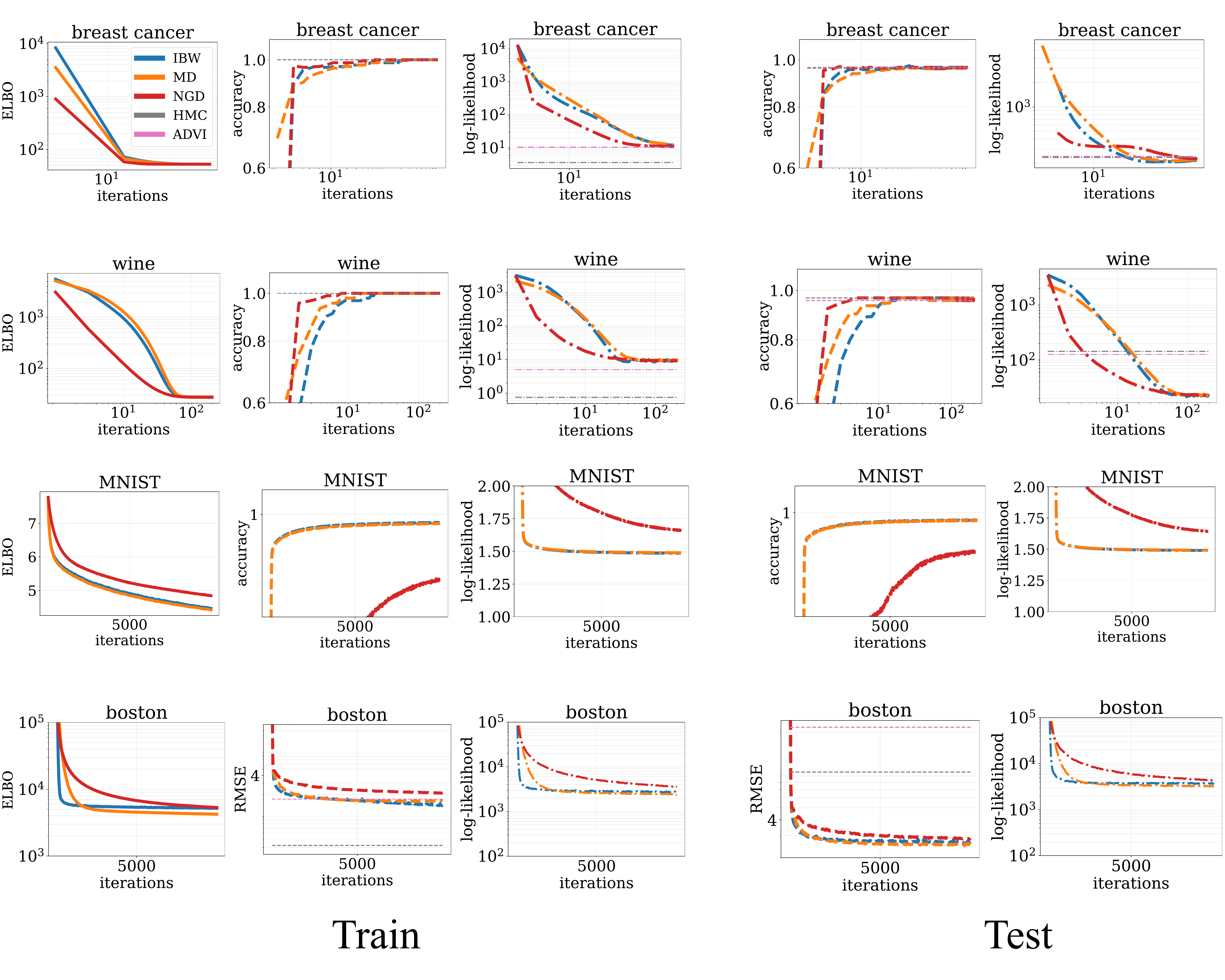}
    \caption{Evolution of the ELBO, accuracy (or RMSE), and $\log$-likelihood over iterations on the \textbf{train} set (left) and \textbf{test} set (Tight) for the \texttt{breast\_cancer} (upper row), \texttt{wine} (middle row), MNIST and \texttt{boston} (bottom row) datasets for $N=5$.}
    \label{fig:realdata_all}
\end{figure}

\paragraph{Comparison with Laplace Approximation.}
We compare our method using $N=1$ and $N=5$ Gaussian components with Laplace approximations on MNIST. 
On the training set, all methods achieve similar accuracy and negative log-likelihood (NLL),  while on the test set our approach yields much better performance, indicating it  generalizes better with multiple modes. 
To illustrate the effect of using multiple Gaussian components, we perform PCA on the means obtained with MD and IBW for $N=1$ and $N=5$,  along with the MAP estimate used in the Laplace approximation. 
As shown in \Cref{fig:pca}, the means for $N=5$ do not collapse, 
highlighting the richer posterior structure captured by multiple components.

\begin{figure}[h]
\includegraphics[width=1.\textwidth]{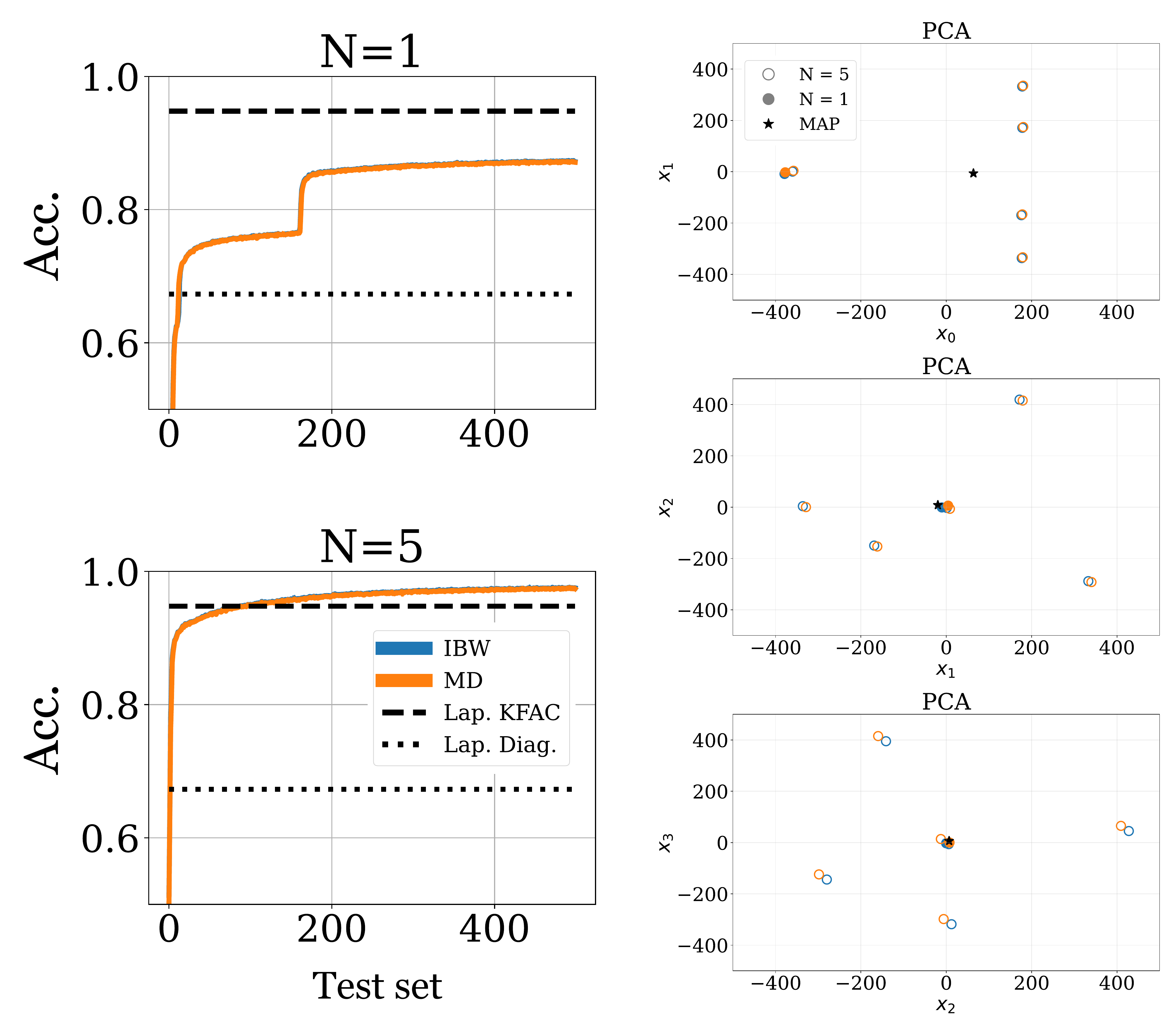}
    \caption{Comparison of Laplace and our methods IBW and MD on MNIST. 
Left: train and test accuracies for $N=1$ and $N=5$. 
Right: Principal Component Analysis (PCA) projections on the three principal axis of the Gaussian component means obtained with MD, IBW, and Laplace approximations.}
    \label{fig:pca}
\end{figure}

\end{document}